\newcommand{\nsamples}{T}
\newcommand{\sampiter}{t}
\newcommand{\regwgt}{\alpha}
\newcommand{\ldim}{k}
\newcommand{\ydim}{m}
\newcommand{\xdim}{d}
\newcommand{\hscale}{s}
\newcommand{\stepsize}{\eta}
\newcommand{\nuh}{\nu_{\textrm{H}}}
\newcommand{\nud}{\nu_{\textrm{D}}}
\newcommand{\avec}{\mathbf{a}}
\newcommand{\bvec}{\mathbf{b}}
\newcommand{\dvec}{\mathbf{d}}
\newcommand{\hvec}{\mathbf{h}}
\newcommand{\uvec}{\mathbf{u}}
\newcommand{\vvec}{\mathbf{v}}
\newcommand{\xvec}{\mathbf{x}}
\newcommand{\zvec}{\mathbf{z}}
\newcommand{\Amat}{\mathbf{A}}
\newcommand{\Bmat}{\mathbf{B}}
\newcommand{\Cmat}{\mathbf{C}}
\newcommand{\Diagmat}{{\boldsymbol{\Lambda}}}
\newcommand{\Dalt}{\bar{\mathbf{D}}}
\newcommand{\Dmat}{\mathbf{D}}
\newcommand{\Dmatone}{\mathbf{D}^{(1)}}
\newcommand{\Dmattwo}{\mathbf{D}^{(2)}}
\newcommand{\Gmat}{\mathbf{G}}
\newcommand{\Halt}{\bar{\mathbf{H}}}
\newcommand{\Hmat}{\mathbf{H}}
\newcommand{\joint}{\mathbf{Z}}
\newcommand{\Mmat}{\mathbf{M}}
\newcommand{\Qmat}{\mathbf{Q}}
\newcommand{\Smat}{\mathbf{S}}
\newcommand{\Umat}{\mathbf{U}}
\newcommand{\Vmat}{\mathbf{V}}
\newcommand{\Wmat}{\mathbf{W}}
\newcommand{\Xmat}{\mathbf{X}}
\newcommand{\Ymat}{\mathbf{Y}}
\newcommand{\Bmatopt}{\bar{\Bmat}}
\newcommand{\Gammamat}{\boldsymbol{\Gamma}}
\newcommand{\Lambdamat}{\boldsymbol{\Lambda}}
\newcommand{\Sigmamat}{\boldsymbol{\Sigma}}
 \newcommand{\Ds}{{\Dmat^{s}}}
 \newcommand{\Dp}{{\Dmat^{p}}}
  \newcommand{\Hs}{{\Hmat^{s}}}
 \newcommand{\Hp}{{\Hmat^{p}}}
\newcommand{\trace}[1]{\tr\left(#1\right)}
\newcommand{\jointloss}{\mathrm{L}} 
 \newcommand{\hstack}{\text{ hstack}}
 \newcommand{\vstack}{\text{ vstack}}
\newcommand{\Lx}{L_x}
\newcommand{\regd}{{\mathrm{R}_{\mathrm{\scriptsize D}}}}
\newcommand{\regh}{{\mathrm{R}_{\mathrm{\scriptsize H}}}}
\newcommand{\regz}{{\mathrm{R}_{\mathrm{\scriptsize Z}}}}
\newcommand{\regzf}[1]{\regz(#1)}
\newcommand{\fcol}{f_c}
\newcommand{\frow}{f_r}
\newcommand{\Qmatopt}{\bar{\Qmat}}
\newcommand{\inv}{{\raisebox{.2ex}{$\scriptscriptstyle-1$}}}
\newcommand{\tinv}{{\raisebox{.2ex}{$\scriptscriptstyle-\top$}}}
\newcommand{\genrankset}{\Dmat \in \RR^{\xdim \times \ldim},\Hmat \in \RR^{\ldim \times \nsamples}}
\newcommand{\rrankset}{\Dmat \in \RR^{\xdim \times \xdim},\Hmat \in \RR^{\xdim \times \nsamples}}
 \newcommand{\pairvar}{\Ymat}
 \newcommand{\closs}{g}
\newcommand{\rdim}{r}
\newcommand{\seqiter}{i}
 \newcommand{\pboxspace}{\vspace{0.1cm}}
\renewcommand{\cite}[1]{\citep{#1}}
\newcommand{\defeq}{=}
\newcommand{\RFM}{DLM}
\newcommand{\RFMs}{DLMs}
\begin{document}

\title{Identifying global optimality for dictionary learning}

\author{\name Lei Le \email leile@indiana.edu \\ \addr Department of Computer Science\\ Indiana University\\Bloomington, IN 47405 \AND \name Martha White \email martha@indiana.edu \\ \addr Department of Computer Science\\ Indiana University\\ Bloomington, IN 47405}

\editor{}

\maketitle

\begin{abstract}
Learning new representations of input observations in machine learning is often tackled using a factorization of the data. For many such problems, including sparse coding and matrix completion, learning these factorizations can be difficult, in terms of efficiency and to guarantee that the solution is a global minimum. Recently, a general class of objectives have been introduced---which we term induced dictionary learning models (\RFMs)---that have an induced convex form that enables global optimization. Though attractive theoretically, this induced form is impractical, particularly for large or growing datasets. In this work, we investigate the use of practical alternating minimization algorithms for induced \RFMs, that ensure convergence to global optima. We characterize the stationary points of these models,
and, using these insights, highlight practical choices for the objectives. We then provide theoretical and empirical evidence that alternating minimization, from a random initialization, converges to global minima for a large subclass of induced \RFMs. In particular, we take advantage of the existence of the (potentially unknown) convex induced form, to identify when stationary points are global minima for the dictionary learning objective. We then provide an empirical investigation into practical optimization choices for using alternating minimization for induced \RFMs, for both batch and stochastic gradient descent. 
\end{abstract}

\begin{keywords}
  Matrix factorization, dictionary learning, sparse coding, alternating minimization, biconvex
\end{keywords}

\section{Introduction}

Dictionary learning models (\RFMs) are broadly used
for unsupervised learning and representation learning,
in the form of nonlinear dimensionality reduction, sparse coding,
matrix completion and many matrix factorization-based approaches. 
The general form consists of factorizing an input matrix $\Xmat$
into a dictionary $\Dmat$ and a representation (or coefficients) $\Hmat$,
potentially transformed with a nonlinear transfer $f$: $\Xmat \approx f(\Dmat \Hmat)$ \cite{singh2008unified,white2014thesis}. 
Figure \ref{fig:subspace} depicts a common setting of obtaining a low-dimensional representation.
As another example, shown in Figure \ref{fig:sparse}, in sparse coding, the input observation $\xvec$ (a column of $\Xmat$) is re-represented by sparse coefficients $\hvec$ (a column in $\Hmat$), which linearly combines a small subset of dictionary elements (columns in $\Dmat$). The motivation for sparse coding is based on mimicking the representation mechanism observed within the mammalian cortex \cite{olshausen1997sparse}. In addition to this biological motivation, sparse coding has been demonstrated to be useful for image processing, with impressive denoising \cite{elad2006image} and classification results \cite{raina2007self,mairal2009supervised,jiang2011learning}.

\begin{figure}[tp]
  \centering
    \includegraphics[width=0.95\textwidth]{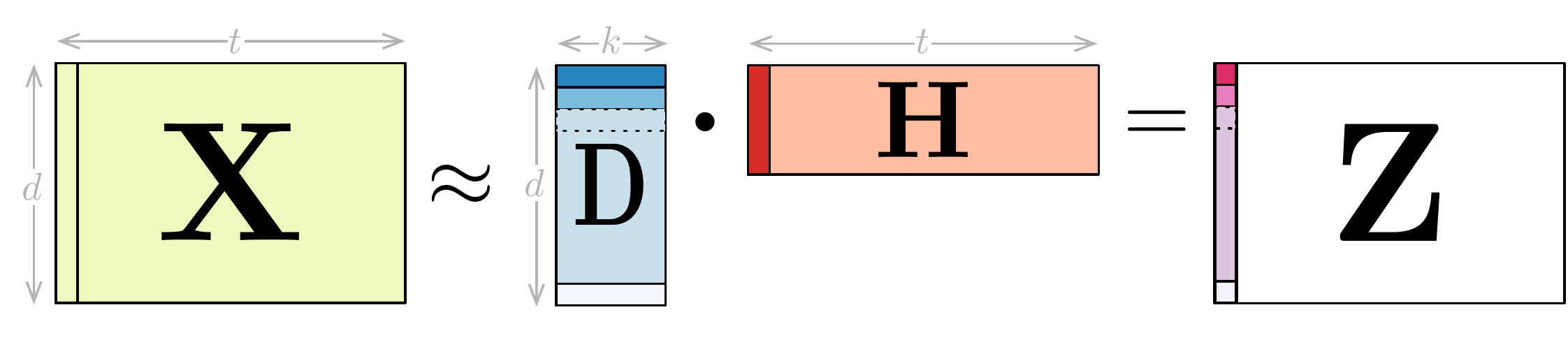}
    \caption{An example of the general form for Dictionary Learning Models (DLMs), where the inner dimension $\ldim < \xdim$, results in dimensionality reduction. The red column in $\Hmat$ corresponds to a low-dimensional re-representation of the first sample (column) in $\Xmat$. The dot product of the re-representation $\Hmat$ with the dictionary $\Dmat$ provides the approximation $\joint = \Dmat \Hmat$, where the dot product of the red column with $\Dmat$ provides the first sample (column) in $\joint$ as the approximation to the first sample (column) in $\Xmat$.}\label{fig:subspace}
\end{figure}

\RFMs\ are attractive for unsupervised learning and representation learning, because in addition to their
generality and simplicity, they are amenable to incremental estimation. Consequently, estimation can be performed on large datasets, and models can be updated with new data. These models are amenable to incremental estimation because only one of the variables---typically $\Hmat$---grows with the number of samples. The other variable---typically $\Dmat$---can be feasibly maintained incrementally,
and sufficiently summarizes the solution. The ability to learn these models incrementally across samples additional provides the standard benefits from using stochastic gradient descent, including efficiency and being more reactive to non-stationary data.
 Despite these advantages, incremental estimation of \RFMs\ has been under-explored. 

 
The main reason for the restriction is the difficultly in optimal estimation of \RFMs,
even in the batch setting.
The main difficultly arises from the fact that the optimization is over bilinear parameters
$\joint = \Dmat \Hmat$, which interact to create a non-convex optimization.
There have been impressive gains in recent years, in terms of obtaining optimal solutions
to this non-convex problem. An important development has been the identification of induced \RFMs:
dictionary learning models that admit a convex regularizer directly on $\joint$, given the regularizers on $\Dmat$ and $\Hmat$. 
Consequently, by performing the optimization directly on $\joint$ instead of the factors, using the induced regularizer on $\joint$, 
the optimal solution can be obtained for the factor model. This insight has led to several convex reformulations, including for
relaxed rank exponential family PCA \cite{bach2008convex,zhang2011convex}, 
multi-view learning \cite{white2012convex},
(semi-)supervised dictionary learning \cite{goldberg2010transduction,zhang2011convex} and
autoregressive moving average models \cite{white2015optimal}.
This insight, however, is practically limited because the convex regularizer only has a known form for a small subset of induced \RFMs\ and further the optimization is no longer amenable to incremental estimation
because $\joint$ grows with the size of the data. Nonetheless, intuitively because this class enables convex reformulations, even though they are not computable, it identifies induced \RFMs\ as a promising class to study to obtain global solutions for representation learning. 

In that direction, there has been significant effort to understand the optimization surface of these non-convex objectives, including identifying that all local minima are global minima and proving that there are no deleterious saddlepoints (i.e., degenerate saddlepoints).
There has been quite a bit work illustrating that alternating minimizations on the factors
produces global models, including
low-rank matrix completion with least-squares losses \cite{mardani2013decentralized,jain2013low,gunasekar2013noisy},
a thresholded sparse coding algorithm \cite{agarwal2017aclustering,agarwal2014learning},
semi-definite low-rank optimization, where the two factors are the same $\joint = \Dmat \Dmat^\top$
\cite{aspremont2004direct,burer2005local,bach2008convex,journee2010low,zhang2012accelerated,mirzazadeh2015scalable}
and for a generalized factorization setting that includes (rectified) linear neural networks \cite{haeffele2015global,kawaguchi2016deep}. 
These approaches provide important seminal results, but still only characterize a small subset of induced \RFMs,
particularly requiring rank-deficient (i.e., low rank) solutions or specialized initialization strategies.

For incremental estimation, the guarantees become further limited. There are specialized incremental algorithms for principal components analysis \cite{warmuth2008randomized,feng2013online} and partial least-squares \cite{arora2012stochastic}. Results characterizing the optimization surface \cite{haeffele2015global} provide insights on the optimality of stochastic gradient descent; however, as mentioned, such characterizations have been largely limited to rank-deficient solutions. For other settings,
little is known about how to obtain optimal incremental \RFM\  estimation and, particularly for incremental sparse coding, several approaches have settled for local solutions \cite{mairal2009online,mairal2009supervised,mairal2010online}. Yet, as we argue, there is more room to take advantage of the induced form for these models, to identify and theoretically characterize tractable objectives.


In this work, we provide both theoretical and empirical evidence that gradient descent on the non-convex, factored form for induced \RFMs\ produces global solutions for a broad class of induced \RFMs.
We make the following conjecture, that leads to simple
and effective global optimization algorithms for induced \RFMs.
\begin{displayquote}
Alternating minimization for induced dictionary learning models, with a full rank random initialization,
converges to a globally optimal solution. 
\end{displayquote}
The key contributions of this paper are to provide evidence for this conjecture
and to develop effective optimization techniques based on alternating minimization. 
We first expand the set of induced \RFMs, and propose several practical modifications
to previous objectives, particularly to enable incremental estimation. 
We prove a novel result that a large subclass of induced \RFMs\ satisfy the property that every local minimum
is a global minimum, and further demonstrate that a subclass does not have degenerate saddle-points,
providing compelling evidence that alternating minimization will successfully find global minima. 
The key novelty in our theoretical results is a new approach to characterizing the overcomplete setting,
for full rank solutions $\Dmat, \Hmat$, as opposed to the more studied rank-deficient setting. 
We provide empirical support for the above conjecture, in particular illustrating that small deviations to outside the class of induced \RFMs\ no longer
have the property that alternating minimization produces global solutions. 
With this justification for alternating minimization, 
we then develop algorithms that converge faster than vanilla implementations of alternating minimization,
for both batch and stochastic gradient descent.\footnote{This terminology is used for this work,
but note that stochastic gradient descent
is also called stochastic approximation,
and batch gradient descent is 
called sample average approximation.}

In addition to providing evidence for this conjecture, 
the theoretical results in this work provide a new methodology for analyzing
stationary points for general dictionary learning problems.
With analysis conducted under this more general class of models,
we automatically obtain novel results for specific settings of interest, including matrix completion, sparse coding
and dictionary learning with elastic net regularization. The current assumptions require differentiability,
which restricts the results to smooth versions of these problems. Nonetheless, it provides first steps
towards a unified analysis that, with extensions to non-smooth functions, could address many of these settings simultaneously. 

The paper is organized as follows. We first motivate the types of dictionary learning problems
that can be specified as induced \RFMs\ (Section 2) and then discuss how to generalize and improve
these objectives (Section 3). Then we prove the main theoretical result (Section 4) and further
provide empirical evidence for the conjecture (Section 5), highlighting some practical optimization choices for the batch setting.  
We then propose several incremental algorithms to optimize induced \RFMs,
to make it feasible to learn these models for large datasets (Section 6).
Finally, we summarize a large body of related work (Section 7) and conclude
with a discussion on current limitations of this work and important next steps. 


\section{Dictionary Learning Models}\label{sec_fms}

In this section, we introduce induced \RFMs\ and provide several examples of objectives that are within the class.
We focus on examples that demonstrate how this optimization formalism can be used to obtain different representation properties
as well as more modern uses to help unify recently introduced models under the formalism. 
Suitable objectives for these models are summarized in Table \ref{table_objectives},
which include subspace dictionary learning, matrix completion, sparse coding and elastic-net dictionary learning. 
See \citep[Section 3.4]{white2014thesis}
for a more complete overview of
the representational capabilities of the larger class of \RFMs. 
The goal of this section is to provide intuition for the uses and generality of induced \RFMs.

We begin by defining the class of induced \RFMs.
Let $\Xmat \in \RR^{n \times \nsamples}$ consist of
$\nsamples$ samples of $n$ features.
The goal is to learn a bilinear factorization $\joint = \Dmat \Hmat$
where $\Dmat \in \RR^{\xdim \times \ldim}$ and $\Hmat \in \RR^{\ldim \times \nsamples}$
for given\footnote{Typically $\xdim = n$, such as for principal components analysis;
however, in some setting, 
 $\xdim > n$ is useful, such as for autoregressive moving average models \cite{white2015optimal}.} $\xdim, \ldim$,
given any convex loss $\jointloss: \RR^{\xdim \times \nsamples} \rightarrow \RR$.
For example, the loss could be the sum of squared errors between the reconstruction and the input,
\begin{equation*}
\jointloss(\joint, \Xmat) = \sum_{\sampiter = 1}^\nsamples \Lx(\joint_{:\sampiter}, \Xmat_{:\sampiter}) \hspace{1.0cm} \text{ for } \ \ \ \Lx(\joint_{:\sampiter}, \Xmat_{:\sampiter}) = \| \joint_{:\sampiter} - \Xmat_{:\sampiter} \|_2^2
.
\end{equation*}
More generally, this convex loss will typically
 be defined based on a transfer (or activation function) $f: \RR \rightarrow \RR$
and the corresponding matching loss $\Lx(\joint_{:\sampiter}, \Xmat_{:\sampiter})$ that is convex in the first argument
(see \cite{helmbold1996worst}). This loss evaluates the difference between $f(\joint_{:\sampiter})$ and $\Xmat_{:\sampiter}$,
with overloaded definition that $f(\joint_{:\sampiter})$ or $f(\joint)$ is the transfer $f$ applied to each entry of the input vector or matrix.
For example, $f$ could be an identity function, resulting in no transformation, and $\Lx(\joint_{:\sampiter}, \Xmat_{:\sampiter}) = \| \joint_{:\sampiter} - \Xmat_{:\sampiter} \|_2^2$. As another example,
$f$ could be the sigmoid function and $\Lx(\joint_{:\sampiter}, \Xmat_{:\sampiter})$ the cross-entropy. 

Additionally, the \RFM\ objective is characterized by the choice of regularizers. Properties on $\Dmat$ and $\Hmat$ are encoded using convex regularizers $\regd: \RR^{\xdim \times \ldim} \rightarrow \RR$ and $\regh: \RR^{\ldim \times \nsamples} \rightarrow \RR$. As we describe in the cases studies in the remainder of this section, common choices for these regularizers are $\ell_2$ norms---for low-rank solutions---and $\ell_1$ norms---for sparse solutions. 

The general optimization for FMs corresponds to
\begin{align}
\min_{\Dmat \in \RR^{\xdim \times \ldim}, \ \Hmat \in \RR^{\ldim \times \nsamples}}  \jointloss(\Dmat\Hmat) + \regd(\Dmat) + \regh(\Hmat)
\label{eq_general_rfm}
\end{align}
Induced \RFMs\ correspond to a subclass of this more general objective, where there are additional conditions on these regularizers. 
The condition is that there is a convex induced regularizer $\regz$ on the product $\joint = \Dmat \Hmat$, that satisfies
\begin{equation}
\regzf{\joint} =  
\frac{1}{2} \min_{\substack{\Dmat \in \RR^{\xdim \times \ldim}, \ \Hmat \in \RR^{\ldim \times \nsamples}\\\joint = \Dmat \Hmat}}  \regd(\Dmat) + \regh(\Hmat)
.
\label{eq_inducednorm}
\end{equation}
This property is potentially useful because the resulting optimization over $\joint$ is convex, providing a convex
reformulation of the non-convex \RFM\ objective. This reformulation has been extensively used for learning lower-dimensional
representations, with the trace norm (aka nuclear norm). Further, \citet{bach2008convex} proved that, if $\ldim$ is sufficiently large (potentially infinite), then there exists an induced regularizer when
\begin{align*}
\regd(\Dmat) &= \regwgt \sum_{i=1}^\ldim \|\Dmat_{:,i}\|^2_c \\
  \regh(\Hmat) &= \regwgt\sum_{i=1}^\ldim \|\Hmat_{i,:}\|^2_{r}
\end{align*}
for column and row norm regularizers $\| \cdot \|_c$ and $\| \cdot \|_r$, with
regularization parameter $\regwgt \ge 0$. This result was further generalized by \citet{haeffele2015global} and we
provide a further generalization in Proposition \ref{prop_generalization}.

The direct use of this convex reformulation, however, is limited for two reasons.
First, an explicit form for the induced regularizer $\regz$ is only known for a small number of settings. 
Second, only for a further subset of these, is there a clear way to extract the corresponding optimal
$\Dmat, \Hmat$ from the learned $\joint = \Dmat \Hmat$. Intuitively, however, this property identifies a class
of well-behaved problems. We will show that it is preferable to directly learn $\Dmat, \Hmat$, with a simple gradient descent
approach on the non-convex factored objective, and simply use the existence of this convex reformulation to identify promising, tractable objectives.

%

Given this general definition of induced \RFMs, we now turn to providing more specific examples. 
For each of the examples of induced \RFMs, we will highlight if they have a known induced regularizer. 
The purpose of this summary is to unify existing representation learning approaches
under induced \RFMs, as well as provide examples of how to 
encode properties on the learned representation $\Hmat$ using the relatively
simple induced \RFM\  formalism.

\subsection{Subspace dictionary learning, including nonlinear dimensionality reduction}

A common goal in representation learning is to to obtain a lower-dimensional
representation of input $\Xmat$.
Learning such a low-dimensional representation using an induced \RFM\   objective 
has typically been formulated~\citep{bach2008convex,zhang2011convex} by
setting $\| \cdot \|_c = \| \cdot \|_2 = \| \cdot \|_r$,
and $\jointloss(\joint) = \sum_{\sampiter=1}^\nsamples \Lx(\joint_{:\sampiter}, \Xmat_{:\sampiter})$.
The corresponding
induced convex reformulation is
\begin{align}
\min_{\genrankset} & \jointloss(\Dmat\Hmat) + \tfrac{\regwgt}{2} \frobsq{\Dmat} + \tfrac{\regwgt}{2} \frobsq{\Hmat} \label{alt_simple} \\
&=
\min_{\joint \in \RR^{\xdim \times \nsamples}} \jointloss(\joint) + \regwgt \tracenorm{\joint}  \ \ \ \ \ \  \  \triangleright \text{ if } \ldim \ge \xdim
\label{opt_simple}
\end{align}
where the Frobenius norm is defined as $\| \Dmat \|_F^2 = \sum_{i=1}^\ldim \| \Dmat_{:,i} \|_2^2$
and the trace norm (or nuclear norm) is defined as $\tracenorm{\joint} = \sum_{i=1}^{\min(\xdim,\nsamples)} \sigma_i(\joint)$.
This objective corresponds to a relaxed rank formulation of principal components analysis (PCA). 
To obtain the standard PCA objective \cite{xu2009optimal}, $\regwgt$ is set to zero and $\ldim < \xdim$ is set to the desired rank,
with $\Lx(\joint_{:\sampiter}, \Xmat_{:\sampiter}) = \frobsq{\joint_{:\sampiter} - \Xmat_{:\sampiter}}$.
The resulting $\joint$ corresponds to a low rank approximation to $\Xmat$, that consists of the top $\ldim$ singular values and vectors of $\Xmat$: given singular value decomposition $\Xmat = \Umat \Sigmamat \Vmat^\top$, the resulting $\joint = \Umat \Sigmamat_\ldim \Vmat^\top$ where $\Sigmamat_\ldim$ is a diagonal matrix of the top $\ldim$ singular values of $\Sigmamat$ and the rest set to zero.  
For the relaxed rank setting, with $\regwgt > 0$, the optimal solution is $\joint = \Umat (\Sigmamat - \regwgt)_+ \Vmat^\top$, where $(\cdot)_+$ truncates negative values at zero. For this case, instead of fixing $\ldim$ to a smaller rank, 
the rank is implicitly restricted by the induced regularizer on $\joint$, and for the factored form, by the regularizers on $\Dmat$ and $\Hmat$. 

To further understand why these chosen column and row norms result in this dimensionality reduction effect,
consider the induced norm. The trace norm is known to be a tight convex relaxation of rank. The optimization over
$\joint$, for large enough $\regwgt$, selects a lower rank $\joint = \Dmat \Hmat = \Umat (\Sigmamat - \regwgt)_+ \Vmat^\top$ as described. 
Further, it is well known that the above formulation can be equivalently written as a constrained form with a $(2,1)$-block norm on $\Hmat$:
\begin{align*}
\min_{\genrankset} & \jointloss(\Dmat\Hmat) + \tfrac{\regwgt}{2} \frobsq{\Dmat} + \tfrac{\regwgt}{2} \frobsq{\Hmat}  
&=
\min_{\substack{\Hmat \in \RR^{\ldim \times \nsamples},\Dmat \in \RR^{\xdim \times \ldim}\\ \|\Dmat_{:i} \|_2 \le 1}}  \jointloss(\Dmat\Hmat) + \regwgt \sum_{i=1}^\ldim \| \Hmat_{i:} \|_2
\end{align*}
The $(2,1)$-block norm constitutes a group-sparse regularizer on $\Hmat$,
where the sum of the $\ell_2$ norms encourages entire rows of $\Hmat$ 
to be zero \cite{argyriou2008convex,white2014thesis}. Once an entire row of $\Hmat$ is set to zero, the rank is reduced by
one, and the implicit $\ldim$ is actually one less\footnote{
Note that the summed form in  \eqref{alt_simple} does not necessarily enforce zeroed rows of $\Dmat$ and $\Hmat$,
though it does guarantee equivalently low-rank solutions. To see why,
consider the singular value 
decomposition of $\Dmat = \Umat \Sigmamat \Vmat^\top$. 
We get an equivalent solution with $\Dmat \Vmat$ and $\Vmat^\top \Hmat$, where $\Vmat \Vmat^\top = \eye$ and so 
$\Dmat \Vmat\Vmat^\top \Hmat = \Dmat \Hmat$. The regularization values remain unchanged because the Frobenius norm is invariant under orthonormal matrices.
The new solution $\Dmat \Vmat = \Umat \Sigmamat$ does in fact only have $k$ non-zero columns, because $\Sigmamat$ only has $k$ non-zero entries
on the diagonal. The optimization, however, has no preference to select the form of $\Dmat$ with or without $\Vmat$ and so may not prefer the solution with zeroed columns in $\Dmat$ and zeroed rows in $\Hmat$.}.

These subspace dictionary learning objectives encompass a wide-range of dimensionality reduction
approaches, including principal components analysis, canonical correlation analysis, partial least-squares
and nonlinear dimensionality reduction approaches such as Isomap and non-linear embeddings \cite{white2014thesis}.
To obtain non-linear dimensionality reduction, the inputs $\xvec$ are first transformed with kernels,
and then the same subspace dictionary learning optimization used above. 
Similarly to PCA, we can additionally obtain relaxed rank versions of each of these nonlinear dimensionality approaches. 

\subsection{Matrix completion}

The matrix completion problem, formulated as a low-rank completion problem \cite{candes2009exact}, is an instance of subspace dictionary learning. 
The matrix completion problem is
often used for collaborative filtering, where the goal is to infer rankings or information about a user using
a small amount of labeled information from other users. For example, for Netflix ratings, the goal is to complete
a matrix of ratings, with $\xdim$ users as rows and $\nsamples$ movies as columns. 
Each $\Xmat_{ij}$ corresponds to a rating for user $i$ and movie $j$, where only a subset of such ratings will be available. 
The idea behind finding a low-rank $\Dmat$ and $\Hmat$ to
factorize the known components of $\Xmat$ is that there is some latent structure that explains the ratings.

A common objective for matrix completion is
\begin{align*}
\sum_{\text{observed } (i,j)} (\Xmat_{ij} - \Dmat_{i:} \Hmat_{:j})^2 + \regwgt \frobsq{\Dmat} +  \regwgt \frobsq{\Hmat}
\end{align*}
which is an instance of the subspace objective in \eqref{alt_simple}. There are numerous variations on this basic objective to improve performance.
For example, \citet{salakhutdinov2010collaborative} use weighted norms for non-uniform sampling, giving 
\begin{align*}
\sum_{\text{observed } (i,j)} (\Xmat_{ij} - \Dmat_{i:} \Hmat_{:j})^2 + \regwgt \frobsq{\Lambdamat_D\Dmat} +  \regwgt \frobsq{\Hmat\Lambdamat_H}
\end{align*}
where $\Lambdamat_D \in \RR^{\xdim \times \xdim}$ is a positive diagonal matrix that reweights rows of $\Dmat$
and 
where $\Lambdamat_H \in \RR^{\nsamples \times \nsamples}$ is a positive diagonal matrix that reweights columns of $\Hmat$.
With a change of variables, this can equivalently be written as a minimization over
\begin{align*}
\sum_{\text{observed } (i,j)} (\Xmat_{ij} - \Lambdamat_D(i,i)^{-1}\Lambdamat_H(j,j)^{-1}\Dmat_{i:} \Hmat_{:j})^2 + \regwgt \frobsq{\Dmat} +  \regwgt \frobsq{\Hmat}
\end{align*}
where 
\begin{align*}
\jointloss(\Dmat \Hmat) &= \sum_{\text{observed } (i,j)} (\Xmat_{ij} - \Lambdamat_D(i,i)^{-1}\Lambdamat_H(j,j)^{-1} (\Dmat \Hmat)_{ij})^2\\
&= \sum_{\text{observed } (i,j)} (\Xmat_{ij} - \Lambdamat_D(i,i)^{-1}\Lambdamat_H(j,j)^{-1}\Dmat_{i:} \Hmat_{:j})^2
.
\end{align*}
As with subspace learning, $\ldim$ is set less than $\xdim$, and $\regwgt$ is tuned to obtain a lower-dimensional $\Dmat, \Hmat$. Each user has $\ldim$ explanatory variables, where the column $\Dmat_{:l}$ corresponds to the value of that variable for all the users and $\Hmat_{l:}$ corresponds to the value for that variable for all the movies. Though these latent variables may have no interpreation, intuitively one could imagine that they describe general properties of users and movies. For example, column $l$ in $\Dmat$ could correspond to ``likes romance" and the corresponding row in $\Hmat$ could be ``movie has romance". The dot product $\Xmat_{ij} = \Dmat_{i:} \Hmat_{:j}$ would include $\Dmat_{il} \Hmat_{lj}$. If the user does not like romance, or the movie does not have romance, these coefficients might be zero and will not contribute to the dot product; else, if the movie has romance and the user likes romance, the coefficients are likely non-negligible and can be predictive of a high rating.  Even though $\Xmat_{ij}$ is only know for a small subset of $\Xmat$, we learn the properties $\Dmat_{:l}$ and $\Hmat_{l:}$ for all users and movies based on the information that is available. Once we learn these properties, we can complete the unknown ratings using $\Xmat_{ij} = \Dmat_{i:} \Hmat_{:j}$.

\subsection{Sparse coding}

\begin{figure}[tp]
  \centering
    \includegraphics[width=0.8\textwidth]{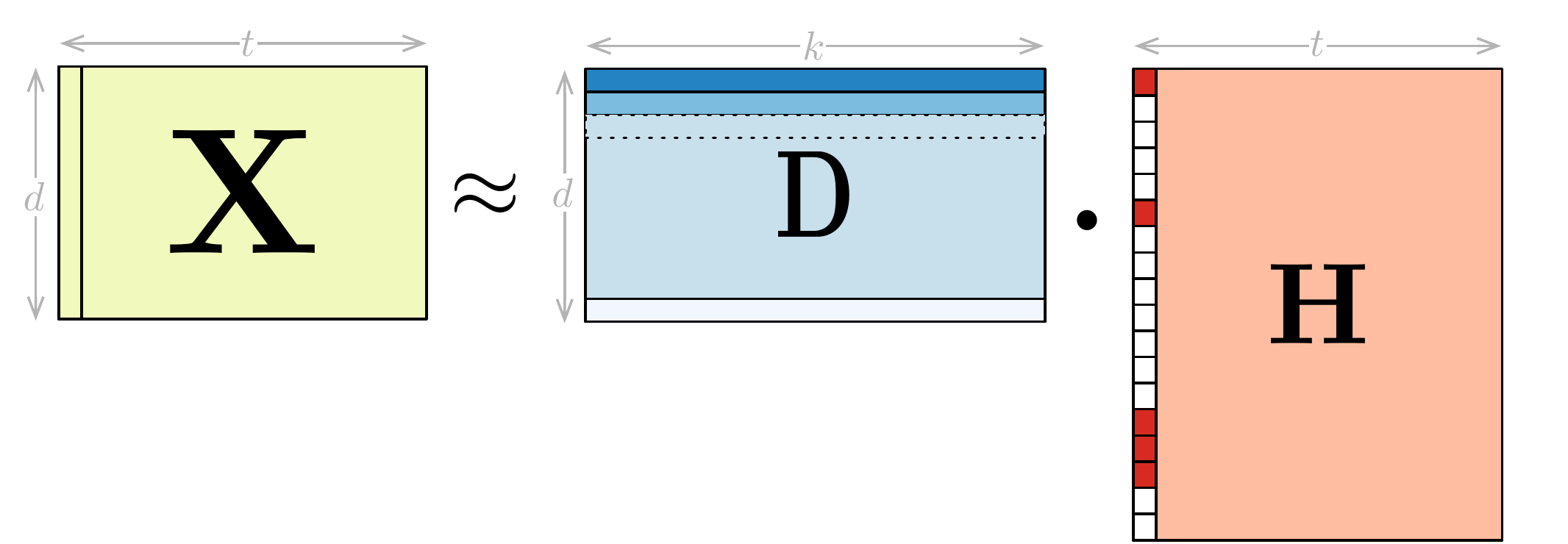}
    \caption{Sparse coding, with an overcomplete dictionary, $\ldim > \xdim$, and sparse re-representation. The red column in $\Hmat$ corresponds to a high-dimensional, sparse re-representation of the first sample (column) in $\Xmat$. The dictionary atoms (rows) in $\Dmat$ intuitively correspond to prototypical examples.}\label{fig:sparse}
\end{figure}

The aim in sparse coding is to re-represent the inputs with a sparse set of coefficients, that linearly weight a subset of representative
dictionary atoms (columns of $\Dmat$). Sparse coding was originally introduced based on observed representations in the mammalian brain \citep{olshausen1997sparse}, and has since proven practically useful particularly in image processing, where dictionary atoms can encode edges and other modular components within images. A similar such premise underlies many representations. Consider an extreme form of sparse coding, where only one element in $\Hmat_{:j}$ is active; this represents an indicator to the bin for the given input, and can be thought of as clustering or aggregation. Sparse coding can be seen as selecting a small number of key properties to re-represent the input; similar inputs should have similar---or at least overlapping---active properties. 

A sparse representation is typically obtained by using an $\ell_1$ norm on $\Hmat$, with induced \RFM\ objective\footnote{This objective is slightly different from an alternative formulation of
sparse coding where a specified level of sparsity is given, for which alternating minimization has also been explored \cite{agarwal2017aclustering}.}
\begin{equation}
\min_{\genrankset}  \jointloss(\Dmat \Hmat) + \tfrac{\regwgt}{2} \sum_{i=1}^\ldim \|\Dmat_{:,i}\|^2_{q} + \tfrac{\regwgt}{2} \sum_{i=1}^\ldim \|\Hmat_{i,:}\|^2_{1}
\label{eq_sparseobj}
.
\end{equation}
where
$\|\joint^\top\|_{q,1} = \sum_{i} \|\joint_{:,i}\|_{q}$
can have any $q \ge 1$.
This objective has a known convex induced regularizer \citep[Proposition 2]{zhang2011convex}),
as long as $\ldim \ge \nsamples$
\begin{equation*}
\eqref{eq_sparseobj}
=
\min_{\joint \in \RR^{\xdim \times \nsamples}} \jointloss(\joint) + \regwgt  \|\joint^\top\|_{q,1} \ \ \ \ \ \  \  \triangleright \text{ if } \ldim \ge\nsamples
.
\end{equation*}
The resulting global solution, however, requires $k = \nsamples$ and
corresponds to memorizing normalized observations:
$\Dmat_{:,i} = \joint_{:,i}/ \| \joint_{:,i}\|_q$ and diagonal $\Hmat_{i,i} = \| \joint_{:,i}\|_q$.
To remedy this issue with induced \RFMs\ and sparse learning, 
\citet{bach2008convex} proposed to combine the subspace and sparse regularizers,
described in the next subsection. In this work, we highlight that this objective may actually have a convex induced regularizer
for smaller $\ldim$, even though the form of this induced regularizer may not be known. 

%

\subsection{Elastic-net dictionary learning}
We can interpolate between both subspace and sparse regularizers (also called elastic net regularization \cite{zou2005regularization}),
using parameters $\nud, \nuh \in [0,1]$
\begin{align*}
&\min_{\Dmat \in \RR^{\xdim \times \ldim},\Hmat \in \RR^{\ldim \times \nsamples}} \jointloss(\Dmat \Hmat) + \tfrac{\regwgt}{2} \sum_{i=1}^\ldim \Big[ \nud \| \Dmat_{:i} \|_2^2  + (1-\nud) \|\Dmat_{:,i}\|^2_{1} + \nuh \|\Hmat_{i:} \|_2^2 +  (1-\nuh) \|\Hmat_{i,:}\|^2_{1} \Big]
\end{align*}
with elastic net norm\footnote{In the elastic net introduced by \cite{zou2005regularization}, the $\ell_1$ norm is not squared. We square the $\ell_1$ norm to ensure we have a valid norm, as
$\| \vvec \| = \sqrt{ \nu \| \vvec\|_2^2 + (1-\nu) \| \vvec\|_1}$
is not a norm. The purpose of the combination, however, is the same and so we still call it an elastic net regularizer.} $\| \vvec \| = \sqrt{ \nu \| \vvec\|_2^2 + (1-\nu) \| \vvec\|_1^2 }$. 
This regularizer should prefer a more compact sparse representation $\Hmat$. Previous results indicate improved stability and compactness \cite{zou2005regularization}; we verify these properties for representation learning in Section \ref{sec_empirical},
showing that $\ldim$ can be smaller with the elastic net objective compared to the sparse coding objective.

The elastic-net objective is an instance of an induced \RFM,
because the elastic net norm is a valid column and row norm (see Proposition \ref{prop_convex} in Appendix \ref{app_regularizers})
and so we know that a corresponding induced norm on $\joint$ exists for sufficiently large $\ldim$.
However, unlike the previously described induced \RFMs, there is no known efficiently computable form for the induced regularizer.
For this reason, this regularizer remains one of the bigger open questions for induced \RFMs. It is additionally a motivator for using 
the existence of the induced regularizer, without requiring a known form.

Though we write the above having the elastic net norm on both factors,
this is not necessary. An elastic net norm could be used on one factor, and a completely different norm on another factor. 
We simply write the above to introduce notation, and because several common settings are obtained with different 
 $\nud$ and $\nuh$. For example, for $\nud = 1$, we have an $\ell_2$ regularizer on the columns of $\Dmat$,
 and an elastic net norm on the rows of $\Hmat$, which is the setting addressed by \citet{bach2008convex}.

\subsection{Supervised dictionary learning}

\begin{figure}[tp]
  \centering
    \includegraphics[width=0.7\textwidth]{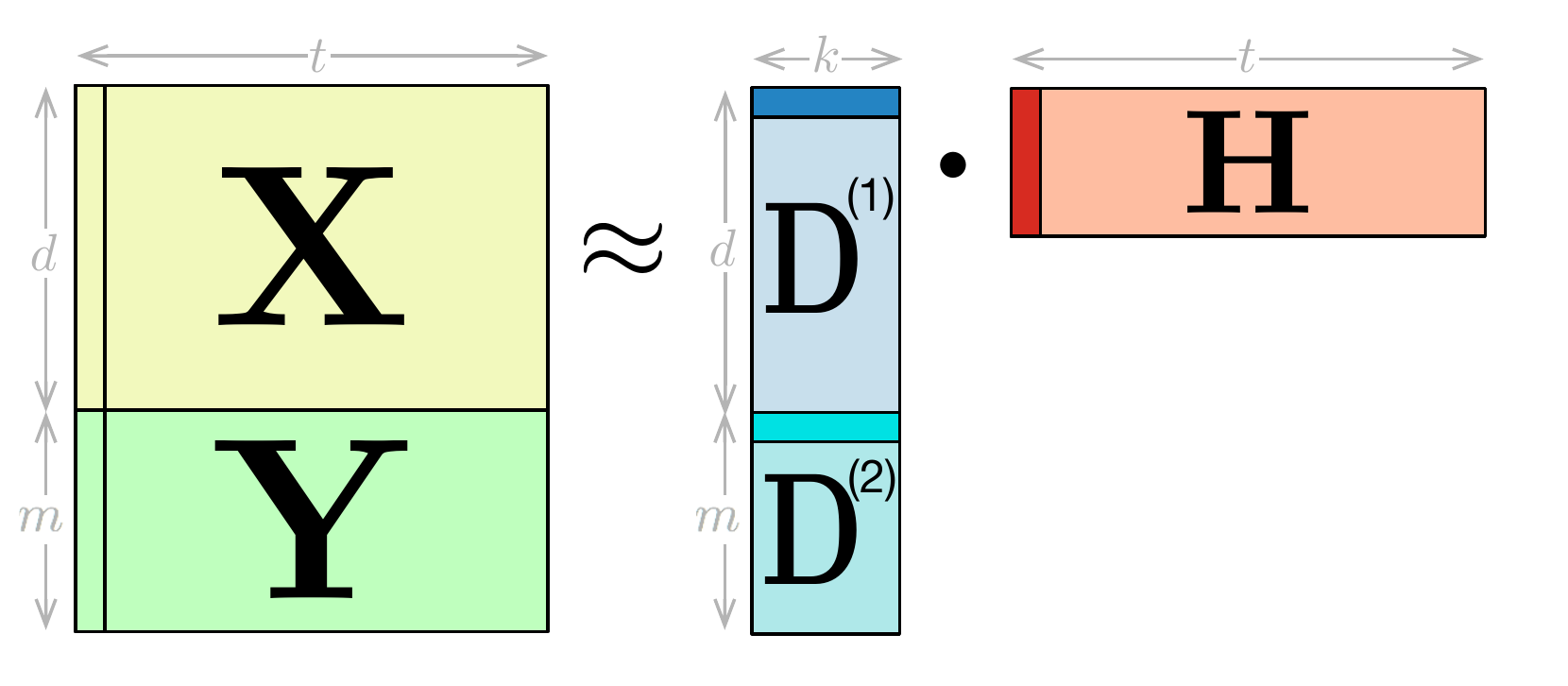}
    \caption{Supervised dictionary learning, with a separate dictionary for the input $\Xmat$ and the target $\Ymat$. This example depicts $\ldim < \xdim$, giving dimensionality reduction, though other choices such as a higher-dimensional, sparse representation could have also been used.}\label{fig:supervised}
\end{figure}

The objectives so far have focused on unsupervised learning;
however, for many cases, the goal is to improve supervised learning,
which can be elegantly incorporated under \RFMs.
A typical strategy is to learn a new representation in an unsupervised way,
and then use that representation to learn a supervised predictor given labeled samples.
These two stages can be combined into one objective,
where the new factorized representation is learned for $\Xmat$
while also using it for supervised learning, given some labels $\Ymat \in \RR^{\ydim \times \nsamples}$.
For simplicity we will assume that we have labels corresponding to
each sample (column) in $\Xmat$; this can be relaxed to a semi-supervised setting by ignoring missing entries in the supervised loss component \cite{zhang2011convex}.

The supervised \RFM\   is \citep{goldberg2010transduction,zhang2011convex,white2012convex}
\begin{align}
&\argmin_{\rrankset} \jointloss\left(\mathvec{\Dmatone}{\Dmattwo}\Hmat \right)
+ \tfrac{\regwgt}{2} \sum_{i=1}^\ldim \max( \| \Dmatone_{:i} \|^2, \| \Dmattwo_{:i} \|^2)+ \tfrac{\regwgt}{2} \sum_{i=1}^\ldim \| \Hmat_{i,:}  \|_r^2 
.
\label{eq_maxf}
\end{align}
$\Dmat$ is now partitioned into two components $\Dmat = \inlinevec{\Dmatone}{\Dmattwo}$,
with $\Dmatone \in \RR^{\xdim \times \ldim}$ and $\Dmattwo \in \RR^{\ydim \times \ldim}$,
but there is a shared representation $\Hmat$. 
The loss can be defined for a regression setting as 
\begin{align*}
\jointloss\left(\mathvec{\Dmatone}{\Dmattwo}\Hmat \right) 
&= \left \| \mathvec{\Dmatone}{\Dmattwo}\Hmat  - \mathvec{\Xmat}{\Ymat}  \right\|_F^2\\
&= \sum_{\sampiter=1}^\nsamples \left\| \mathvec{\Dmatone}{\Dmattwo}\Hmat_{:\sampiter}  - \mathvec{\Xmat_{:\sampiter}}{\Ymat_{:\sampiter}}  \right\|_2^2\\
&= \sum_{\sampiter=1}^\nsamples \| \Dmatone\Hmat_{:\sampiter}  - \Xmat_{:\sampiter} \|_2^2 + \| \Dmattwo\Hmat_{:\sampiter}  - \Ymat_{:\sampiter} \|_2^2
\end{align*}
or for classification as
\begin{align*}
\jointloss\left(\mathvec{\Dmatone}{\Dmattwo}\Hmat \right) 
&= \sum_{\sampiter=1}^\nsamples \| \Dmatone\Hmat_{:\sampiter}  - \Xmat_{:\sampiter} \|_2^2 + \text{cross-entropy}\left(\Dmattwo\Hmat_{:\sampiter}, \Ymat_{:\sampiter} \right)
.
\end{align*}
The regularizer uses the max, to ensure that each component is regularized separately.
Again because $\| \Dmat_{:i} \|_c = \max( \| \Dmatone_{:i} \|^2, \| \Dmattwo_{:i} \|^2)$
is a valid norm, we know that an induced regularizer exists for sufficiently large $\ldim$. 

There is a known form for the induced regularizer on $\joint =\Dmat \Hmat= \inlinevec{\Dmat^{(1)}}{\Dmat^{(2)}} \Hmat$ for the subspace setting, with the $\ell_2$ norm \citep{white2012convex}
\begin{align*}
\regzf{\joint} &= \tfrac{1}{2}\min_{\substack{\Dmat, \Hmat \\\Dmat \Hmat = \joint}} \sum_{i=1}^\ldim \max( \| \Dmat^{(1)}_{:i} \|_2^2, \| \Dmat^{(2)}_{:i} \|_2^2)+ \frobsq{\Hmat} \\
&= \max_{0 \le \eta \le 1} \tracenorm{\Diagmat_\eta \joint}
 \ \ \ \ \text{where} \ \ \Diagmat_\eta = \left[\begin{array}{cc} 
\!\!\! \sqrt{\eta} \eye_{\xdim} & \zerovec\\
 \zerovec & \sqrt{1-\eta} \eye_{\ydim} \!\!\! \end{array} \right].
\end{align*}
The diagonal matrix $\Diagmat_\eta$ reweights the rows of
$\joint$ in two blocks corresponding to the two components $\Dmatone$ and $\Dmattwo$.

\subsection{Robust objectives}
All of the above objectives can accommodate robust alternatives.
This includes using robust convex losses---such as the Huber loss---as well an incorporating
an additional variable $\Smat$ that represents the noise \cite{candes2011robust,xu2010robust,zhang2011convex}.
For sparse noise, for example, such as in robust PCA \cite{candes2011robust}, we can impose an $\ell_1$ regularizer on $\Smat$ and define the loss
\begin{align*}
\jointloss(\Dmat \Hmat) = \min_{\Smat \in \RR^{\xdim \times \nsamples}}\sum_{j=1}^\nsamples L_x(\Dmat \Hmat_{:j} + \Smat_{:j}, \Xmat_{:j}) + \regwgt_s \| \Smat \|_{1,1} 
\end{align*}
for some regularization parameter $\regwgt_s > 0$ that enforces the level of sparsity in the learned
noise $\Smat$. This loss $\jointloss(\cdot)$ is convex in $\joint = \Dmat \Hmat$, 
because the composition of a convex loss and an affine function on two variables (the sum of the two variables $\joint_{:j} + \Smat_{:j}$) is jointly convex in those two variables. 
For robust subspace dictionary learning, if the data is corrupted by sparse noise, then this loss can be used
to remove the sparse noise and still learn the lower-dimensional latent structure. 


\section{Exploring improved objectives}\label{sec_objectives}

The objectives defined in the previous section demonstrate the generality of the induced \RFM\ class;
in this section, we highlight some simple modifications and generalizations beyond these more widely-used induced \RFMs. 
We show that different choices for the objective, that are equivalent in terms of model specification,
can have important ramifications for optimization---particularly for incremental estimation.
Our goal is to identify practical objective choices. Though we cannot make definitive claims for the correct choices, 
we highlight some of the equivalent options and provide a summary table of what we believe are preferable objectives.
Additionally, we aim to generalize the set of widely-used induced \RFMs, since the variety of objectives that have induced regularizers
is much broader than has been the focus for most objectives using matrix factorization. 


\newcommand{\regzk}{\mathrm{R}_\ldim}

\subsection{Generalized induced form}
Most of the work on induced \RFMs\ has assumed that the regularizers are norms. As we show below, however,
this restriction is not necessary: the regularizers can be any non-negative, centered convex functions $\fcol: \RR^\xdim \rightarrow \RR^+$
and $\frow: \RR^\nsamples \rightarrow \RR^+$. \citet{haeffele2015global} have a similar generalization, in Proposition 11,
but additionally require that the regularizers be positively homogenous\footnote{They use the term ``positive semi-definite function" to mean non-negative, centered function.}. They tackle a more general setting, with the product of more than two variables; for specifically \RFMs, we can obtain a stronger result. 

Define 
\begin{align}
\regzk(\joint) \defeq \min_{\Dmat \in \RR^{\xdim \times \ldim}, \ \Hmat \in \RR^{\ldim \times \nsamples}: \joint = \Dmat \Hmat} \sum_{i=1}^\ldim \left( \fcol^2(\Dmat_{:i})  +  \frow^2(\Hmat_{i:}) \right)  
\label{eq_regk}
\end{align}

\begin{proposition}\label{prop_generalization}
Assume $\fcol: \RR^\xdim \rightarrow \RR^+$
and $\frow: \RR^\nsamples \rightarrow \RR^+$ are convex functions, that are non-negative and centered: $\frow(\zerovec) = 0 = \fcol(\zerovec)$.
For all $\joint \in \RR^{\xdim \times \nsamples}$, the limit $\regz(\joint) = \lim_{\ldim \rightarrow \infty} \regzk(\joint)$ exists
and $\regz$ is a convex function.
\end{proposition}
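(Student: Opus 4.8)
The plan is to treat the two assertions in order: existence of the limit by a monotonicity argument, and convexity by a rescaling-and-splitting construction that exploits the fact that the regularizer is built from \emph{squared} convex functions. Throughout I would work with the infimum rather than the minimum, so that attainment in \eqref{eq_regk} is never needed; near-optimal decompositions suffice.

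For existence, I would first note that $\regzk(\joint)$ is non-increasing in $\ldim$: any factorization realizing $\regzk(\joint)$ at inner dimension $\ldim$ extends to inner dimension $\ldim+1$ by appending a zero column to $\Dmat$ and a zero row to $\Hmat$, which leaves the product $\joint$ unchanged and, because $\fcol(\zerovec)=\frow(\zerovec)=0$, adds nothing to the objective. Since $\fcol,\frow\ge 0$ the sequence is bounded below by $0$, and for $\ldim\ge\rank(\joint)$ a factorization exists, so the value is finite. A non-increasing sequence bounded below converges, giving existence; moreover the limit equals $\inf_\ldim\regzk(\joint)$, which I would rewrite as an infimum over all finite rank-one decompositions $\joint=\sum_i\dvec_i\hvec_i^\top$ of the cost $\sum_i(\fcol^2(\dvec_i)+\frow^2(\hvec_i))$, with $\dvec_i=\Dmat_{:i}$ and $\hvec_i=\Hmat_{i:}^\top$.

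The key observation for convexity is that sending $\ldim\to\infty$ effectively homogenizes the squared regularizer, which is precisely what substitutes for the positive-homogeneity assumption of Haeffele et al. For convex $\fcol\ge 0$ with $\fcol(\zerovec)=0$, the slope $\fcol(s\dvec)/s$ of the chord from the origin is non-decreasing in $s>0$, so $s\mapsto\fcol^2(s\dvec)/s^2$ is non-decreasing; hence its limit as $s\downarrow 0$ exists and is at most $\fcol^2(\dvec)$, and likewise for $\frow$. I would use this to show that any scaled rank-one term $c\,\dvec\hvec^\top$ with $c\in[0,1]$ can be produced at cost at most $c(\fcol^2(\dvec)+\frow^2(\hvec))+\delta$: split it into $N$ identical atoms $(s\dvec)(s\hvec)^\top$ with $s=\sqrt{c/N}$, so the atoms sum to $c\,\dvec\hvec^\top$ while the total cost $\tfrac{c}{s^2}\bigl(\fcol^2(s\dvec)+\frow^2(s\hvec)\bigr)$ tends, as $N\to\infty$, to $c$ times the limiting chord-slopes, which is bounded by $c(\fcol^2(\dvec)+\frow^2(\hvec))$.

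With this building block in hand, given $\lambda\in[0,1]$ and $\epsilon$-optimal finite decompositions $\joint_1=\sum_i\dvec_i\hvec_i^\top$ and $\joint_2=\sum_j\evec_j\fvec_j^\top$, I would assemble a finite decomposition of $\lambda\joint_1+(1-\lambda)\joint_2$ by applying the splitting construction to each scaled atom $\lambda\,\dvec_i\hvec_i^\top$ and $(1-\lambda)\,\evec_j\fvec_j^\top$ and concatenating the results. Choosing $N$ large enough per atom controls the aggregate slack by an arbitrary $\delta$, yielding $\regz(\lambda\joint_1+(1-\lambda)\joint_2)\le\lambda\regz(\joint_1)+(1-\lambda)\regz(\joint_2)+\epsilon+\delta$; sending $\epsilon,\delta\to 0$ gives convexity (and, since $\regz$ is finite everywhere, continuity is then automatic). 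I expect the main obstacle to be making the $N\to\infty$ limit rigorous: establishing monotonicity and finiteness of the chord-slopes $\fcol^2(s\dvec)/s^2$ as $s\downarrow 0$, and controlling the lower-order remainder uniformly enough that summing over the finitely many atoms keeps the total slack below $\delta$.
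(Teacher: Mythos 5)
Your proposal is correct and follows essentially the same route as the paper: existence via the monotone non-increasing sequence bounded below, and convexity via concatenating $\epsilon$-optimal factorizations of $\joint_1$ and $\joint_2$ rescaled by $\sqrt{\lambda}$ and $\sqrt{1-\lambda}$, with the key inequality being the chord-slope bound $\fcol(s\dvec)\le s\fcol(\dvec)$ for $s\in[0,1]$ (which the paper states as $\fcol(\eta\dvec)\le\eta\fcol(\dvec)$ from convexity and centeredness). The $N$-fold atom splitting and the $N\to\infty$ limit are unnecessary overhead: your own monotonicity of $s\mapsto\fcol^2(s\dvec)/s^2$ already gives $\fcol^2(\sqrt{c}\,\dvec)\le c\,\fcol^2(\dvec)$ at $s=\sqrt{c}$, so the $N=1$ case of your construction is exactly the paper's argument.
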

\begin{proof}
For a given $\joint$, $\regzk(\joint)$ is nonnegative because of the squares on the function $f_r$ and $f_c$. 
Consequently, it's value cannot be pushed to negative $\infty$.
Because it is a minimum of these non-negative functions, which can only have more flexibility with increasing $\ldim$,
$\regzk(\joint)$ is non-increasing with $\ldim$. Therefore, it has a finite, non-negative limit as $\ldim$ tends to infinity.

We now show $\regz$ is convex. Take any $\joint_1, \joint_2 \in \RR^{\xdim \times \nsamples}$, $\eta \in [0,1]$ and
any $\epsilon > 0$. For large enough $\ldim$, there exists $\epsilon$-optimal decompositions $\Dmat^{(1)}, \Hmat^{(1)}$ and $\Dmat^{(2)}, \Hmat^{(2)}$
for $\joint_1$ and $\joint_2$ respectively
\begin{align*}
\regz(\joint_1) &\ge \sum_{i=1}^\ldim \left( \fcol^2(\Dmat^{(1)}_{:i})  +  \frow^2(\Hmat^{(1)}_{i:}) \right) - \epsilon\\
\regz(\joint_2) &\ge \sum_{i=1}^\ldim \left( \fcol^2(\Dmat^{(2)}_{:i})  +  \frow^2(\Hmat^{(2)}_{i:}) \right)  -\epsilon 
\end{align*}
 Then for $\Dmat =  [\sqrt{\eta} \Dmat^{(1)} \ \ \  \sqrt{1-\eta} \Dmat^{(2)}] $ and 
 $\Hmat =   \inlinevec{\sqrt{\eta}\Hmat^{(1)}}{ \sqrt{1-\eta} \Hmat^{(2)}}$,
 $$\joint = \eta \joint_1 + (1-\eta) \joint_2 = \Dmat \Hmat.$$
 If $\fcol$ is non-negative, then for any $\dvec \in \RR^\ldim$, because $\fcol(\zerovec) = 0$
  \begin{align*}
 \fcol(\eta \dvec) = \fcol(\eta \dvec + (1-\eta) \zerovec) \le \eta \fcol(\dvec) + (1-\eta) \fcol(\zerovec) = \eta \fcol(\dvec)
 .
 \end{align*}
Because $\fcol$ is non-negative, we can square both sides and maintain the inequality: $\fcol^2(\eta \dvec) \le \eta^2 \fcol^2(\dvec)$. 
 This is similarly true for $\frow$. 
Now we get
\begin{align*}
\regz(\joint) &\le  \sum_{i=1}^{2\ldim} \left( \fcol^2(\Dmat_{:i})  +  \frow^2(\Hmat_{i:}) \right)  \\
&=  \sum_{i=1}^\ldim \left( \fcol^2(\sqrt{\eta} \Dmat^{(1)}_{:i})  +  \frow^2(\sqrt{\eta}\Hmat^{(1)}_{i:}) \right)  + \sum_{i=1}^{\ldim} \left( \fcol^2(\sqrt{1-\eta}\Dmat^{(2)}_{:i})  +  \frow^2(\sqrt{1-\eta}\Hmat^{(2)}_{i:}) \right)  \\
&\le  \sum_{i=1}^\ldim \left( \eta \fcol^2(\Dmat^{(1)}_{:i})  +  \eta \frow^2(\Hmat^{(1)}_{i:}) \right)  + \sum_{i=1}^{\ldim} \left( (1-\eta) \fcol^2(\Dmat^{(2)}_{:i})  +  (1-\eta)\frow^2(\Hmat^{(2)}_{i:}) \right)  \\
&\le  \eta \sum_{i=1}^\ldim \left( \fcol^2(\Dmat^{(1)}_{:i})  +  \frow^2(\Hmat^{(1)}_{i:}) \right)  + (1-\eta) \sum_{i=1}^{\ldim} \left( \fcol^2(\Dmat^{(2)}_{:i})  +  \frow^2(\Hmat^{(2)}_{i:}) \right)  \\
&\le  \eta (\regz(\joint_1) + \epsilon) +  (1-\eta) (\regz(\joint_2) + \epsilon) \\
&=  \eta \regz(\joint_1) + (1-\eta) \regz(\joint_2) + \epsilon 
\end{align*}
Letting $\epsilon$ go to zero gives the desired result that $\regz$ is convex. 
\end{proof}

This generalized form includes many regularizers that were previously not possible. 
Some examples include
\begin{enumerate}
\item smoothed approximations to $\ell_1$, that are no longer norms, such as the pseudo-Huber loss \cite{fountoulakis2013asecond}:
$\fcol(\dvec) = \sum_{i=1}^\xdim \left( \sqrt{\mu^2 + \dvec_i^2} - \mu \right)$ for some $\mu > 0$. The pseudo-Huber loss is twice differentiable
and approaches $\ell_1$ as $\mu \rightarrow 0$ (see \citep[Figure 1]{fountoulakis2013asecond}).
\item the sum of the squares of any non-negative centered convex functions $g_i$, with $\frow^2 = g_1^2 + g_2^2 + \ldots g_m^2$. This is allowed
because $\frow(\dvec) = \sqrt{g_1^2(\dvec) + \ldots g_m^2(\dvec)}$ is convex (see Proposition \ref{prop_convex} in Appendix \ref{app_regularizers}). 
\item the smoothed elastic net norm, where the $\ell_1$ component is replaced by the pseudo-Huber loss (see Proposition \ref{prop_convex} in Appendix \ref{app_regularizers}). 
\item regularizers that act on partitions of the column of $\Dmat$ or rows of $\Hmat$ (see Corollary \ref{cor_regularizer_separate} in Appendix \ref{app_regularizers}). 
For example, for supervised learning, $\Dmatone$ and $\Dmattwo$ could now have different regularizers. This is appropriate as they serve different purpose:
one for unsupervised recovery and the other for supervised learning. 
\end{enumerate}
This generalization is particularly important for the result that alternating minimization provides optimal solutions,
as we will need differentiable regularizers.
The generalization beyond norms, to any convex function, enables the use of smoothed versions
of non-smooth regularizers. 

The generalization to any non-negative convex function significantly expands the space of potential
regularizers; however, many regularizers are not designed to then also be squared. Future work
will investigate how squaring a given non-negative convex function affects the properties intended to be
encoded by that regularizer. We provide one insight into the equivalence of stationary points
for a squared versus non-squared form, in Proposition \ref{prop_equivalence}.

\subsection{Relationships between multiple forms of the induced regularizer}

We have discussed only a summed form for the regularizer on the factors; however, this is not the only option. 
The induced regularizer was introduced with multiple
forms \cite{bach2008convex,white2014thesis}, including
the summed form
\begin{align*}
\regzk(\joint) = \frac{1}{2} \min_{\substack{\Dmat \in \RR^{\xdim \times \ldim}, \ \Hmat \in \RR^{\ldim \times \nsamples} \\ \joint = \Dmat \Hmat}}
 \sum_{i=1}^\ldim \| \Dmat_{:i} \|_c^2 + \sum_{i=1}^\ldim \| \Hmat_{i:} \|_r^2
\end{align*}
the producted form
\begin{align*}
\regzk(\joint) =\min_{\substack{\Dmat \in \RR^{\xdim \times \ldim}, \ \Hmat \in \RR^{\ldim \times \nsamples} \\ \joint = \Dmat \Hmat}}
\sum_{i=1}^\ldim \| \Dmat_{:i} \|_c \| \Hmat_{i:} \|_r
\end{align*}
and the constrained forms
\begin{align*}
\regzk(\joint) &= \min_{\substack{\Dmat \in \RR^{\xdim \times \ldim}, \ \Hmat \in \RR^{\ldim \times \nsamples} \\ \joint = \Dmat \Hmat, \ \| \Dmat_{:i} \|_c \le 1}} 
 \sum_{i=1}^\ldim \| \Hmat_{i:} \|_r\\
\regzk(\joint) &=  \min_{\substack{\Dmat \in \RR^{\xdim \times \ldim}, \ \Hmat \in \RR^{\ldim \times \nsamples} \\ \joint = \Dmat \Hmat, \ \| \Hmat_{i:} \|_r \le 1}} 
 \sum_{i=1}^\ldim \| \Dmat_{i:} \|_c\\
\end{align*}
These equivalent definitions are only at a global minimum 
of the above objectives. 

A natural question, therefore, is if these objectives have different properties away from the global minimum. 
For example, it is conceivable that these different objectives will have a different optimization surface, and different sets of
stationary points. Understanding the relationships between the stationary points for these different
forms can be important for understanding the ramifications of selecting
one form or the other. If there is an equivalence, for example, then any of
the forms can be selected. If this is the case, secondary criteria can be used to
select the form, including choosing the form that provides the most
numerical stability or even the form that enables the simplest computation
of gradients. 

We prove that the set of stationary points of the squared form
and the producted form are in fact equivalent. We further show that the
set of stationary points of the constrained form constitute a super-set
of those of the squared and producted forms. 
We define the pairs of stationary points 
to be equivalent if their product is equivalent, as in the below definition.

\begin{definition}
The pairs of stationary points $\Dmat_1, \Hmat_1$ and $\Dmat_2, \Hmat_2$
are \textbf{equivalent stationary points} if $\Dmat_1\Hmat_1=\Dmat_2 \Hmat_2$,
ensuring that the resulting induced variables $\joint_1 = \joint_2$. 
The sets of stationary points for 
two objectives are \textbf{equivalent} if for every point
in one set, there is an equivalent stationary point in the other set. 
\end{definition}

\begin{proposition}\label{prop_equivalence}
The stationary points are equivalent for the summed form
\begin{align*}
\jointloss(\Dmat \Hmat) + \frac{\alpha}{2} \sum_{i=1}^\ldim \| \Dmat_{:i} \|_c^2 + \frac{\alpha}{2} \sum_{i=1}^\ldim \| \Hmat_{i:} \|_r^2
\end{align*}
and the producted form
\begin{align*}
\jointloss(\Dmat \Hmat) + \alpha \sum_{i=1}^\ldim \| \Dmat_{:i} \|_c \| \Hmat_{i:} \|_c
\end{align*}
The stationary points of the summed and producted forms 
are also stationary points of the constrained form.
\end{proposition}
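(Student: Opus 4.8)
The plan is to exploit the invariance of the product $\joint = \Dmat\Hmat$ under the per-index rescaling $(\Dmat_{:i}, \Hmat_{i:}) \mapsto (t_i \Dmat_{:i}, t_i^{-1}\Hmat_{i:})$ with $t_i > 0$, which fixes both $\joint$ and the loss $\jointloss(\Dmat\Hmat)$. Writing $\Gmat = \grad_\joint \jointloss(\Dmat\Hmat)$, the chain rule gives $\grad_{\Dmat_{:i}}\jointloss = \Gmat\, \Hmat_{i:}^\top$ and $\grad_{\Hmat_{i:}}\jointloss = \Dmat_{:i}^\top \Gmat$, so these loss-gradient blocks scale as $t_i^{-1}$ and $t_i$ respectively, while the gradient of a ($1$-homogeneous) norm is $0$-homogeneous, so $\grad\|\Dmat_{:i}\|_c$ and $\grad\|\Hmat_{i:}\|_r$ are unchanged by the rescaling. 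First I would record these three transformation rules, since they drive every subsequent step. Throughout I work under the differentiability assumptions used elsewhere in the paper, so that the norm and norm-squared terms are smooth and ``stationary point'' means vanishing gradient for the two unconstrained objectives and a KKT point for the constrained one.

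Next I establish the balancing identity $\|\Dmat_{:i}\|_c = \|\Hmat_{i:}\|_r$ at every stationary point of the summed form. Restricting the summed objective to the curve $t \mapsto (t\Dmat_{:i}, t^{-1}\Hmat_{i:})$ and differentiating at $t=1$, the loss term is constant and only $\tfrac{\alpha}{2}(t^2\|\Dmat_{:i}\|_c^2 + t^{-2}\|\Hmat_{i:}\|_r^2)$ varies, with derivative $\alpha(\|\Dmat_{:i}\|_c^2 - \|\Hmat_{i:}\|_r^2)$ at $t=1$; stationarity forces this to vanish. Setting $d_i := \|\Dmat_{:i}\|_c = \|\Hmat_{i:}\|_r =: h_i$, the block conditions of the summed form, $\Gmat\Hmat_{i:}^\top + \alpha d_i \grad\|\Dmat_{:i}\|_c = 0$ and its $\Hmat$-analogue, become literally the block conditions of the producted form, whose $\Dmat$-block carries the factor $h_i$ in place of $d_i$. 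Hence every summed stationary point is verbatim a producted stationary point.

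For the converse I would first check that producted stationarity is invariant along the rescaling orbit: applying the three transformation rules, each block condition merely picks up an overall factor $t_i^{\pm1}$, so the whole orbit is stationary together. Given a producted stationary point I rescale each pair by $t_i = \sqrt{h_i/d_i}$ to the balanced representative (rescaled column and row norms both equal $\sqrt{d_i h_i}$), leaving $\joint$ fixed; there the producted and summed block conditions coincide as above, so it is summed stationary and product-equivalent to the original, giving equivalence of the two sets in the sense of the definition. For the final claim I instead rescale a producted stationary point to unit columns, $t_i = 1/d_i$, so $\|\Dmat_{:i}\|_c = 1$; the rescaled $\Hmat$-block condition becomes exactly $\grad_{\Hmat_{i:}}\jointloss + \alpha\,\grad\|\Hmat_{i:}\|_r = 0$, the stationarity condition of the constrained objective, while the rescaled $\Dmat$-block reads $\Gmat\Hmat_{i:}^\top + \mu_i\,\grad\|\Dmat_{:i}\|_c = 0$ with $\mu_i = \alpha d_i h_i \ge 0$, precisely the KKT condition for the active constraint $\|\Dmat_{:i}\|_c \le 1$ with a nonnegative multiplier. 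Thus the rescaled point is a KKT-stationary point of the constrained form.

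The main obstacle I anticipate is the degenerate indices where $d_i = 0$ (equivalently $h_i = 0$): the rescalings $t_i = \sqrt{h_i/d_i}$ and $t_i = 1/d_i$ break down, so one must argue directly that a vanishing column/row gives $\grad_{\Dmat_{:i}}\jointloss = \Gmat\Hmat_{i:}^\top = 0$ and that the smoothed regularizer gradients vanish there as well, so such blocks satisfy all three stationarity systems trivially and may be carried along or discarded without changing $\joint$. A secondary point requiring care is justifying that the notion of ``stationary point'' transfers correctly across the three objectives---vanishing gradient for the two unconstrained forms versus KKT for the constrained form---and that the paper's smoothness hypotheses make the norm terms differentiable so these gradients are well defined; under genuinely non-smooth norms the same argument would run with subdifferentials, replacing the equalities by membership in the corresponding normal cones.
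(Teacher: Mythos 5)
Your proof is correct, and at its core it uses the same mechanism as the paper's: a per-index diagonal rescaling $(\Dmat_{:i},\Hmat_{i:})\mapsto(t_i\Dmat_{:i},t_i^{-1}\Hmat_{i:})$ that fixes $\Dmat\Hmat$ and carries stationary points of one form to stationary points of another. The organization differs enough to be worth comparing. The paper reweights in both directions with explicit diagonal matrices ($\Gammamat^{-1},\Gammamat$ one way, $\Gammamat^{1/3},\Gammamat^{-1/3}$ the other) and verifies the stationarity equations by direct matrix computation, never isolating a balance identity; you instead first prove $\|\Dmat_{:i}\|_c=\|\Hmat_{i:}\|_r$ at every summed stationary point by differentiating along the rescaling orbit, which makes the summed-to-producted direction an identity map requiring no reweighting, and you obtain the converse by observing that producted stationarity is constant along each orbit and passing to the balanced representative with $t_i=\sqrt{\|\Hmat_{i:}\|_r/\|\Dmat_{:i}\|_c}$. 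Your route buys two things: the balance identity is a reusable structural fact about the summed form, and your treatment of the constrained form is more concrete than the paper's --- you exhibit the multipliers $\mu_i=\alpha\,\|\Dmat_{:i}\|_c\|\Hmat_{i:}\|_r\ge0$ and check complementary slackness at the active constraint, where the paper only asserts that there are ``enough degrees of freedom'' to choose them. A further point in your favor: the exponent $1/2$ in your rescaling is the one forced by zero-homogeneity of the gradient of a norm, whereas the paper's $1/3$ relies on bookkeeping in which $\nabla\|\Dmat_{:i}\|_c$ itself acquires a factor of $\Gammamat_i^{1/3}$ under rescaling; your calculation is the cleaner one. Both arguments share the same soft spot, namely indices with a zero column of $\Dmat$ or zero row of $\Hmat$, where the rescaling degenerates and the norm is not differentiable; you flag this explicitly while the paper dismisses it by ``removing'' such columns, so you are no worse off, but the final write-up should state the non-degeneracy assumption rather than leave it as an anticipated obstacle.
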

\begin{proofsketch}
See Appendix \ref{app_prop_equivalence}
for the full proof.
The proof follows from taking a stationary point from each optimization,
and reweighting with a diagonal matrix to obtain a stationary
point in the other form. 
\end{proofsketch}

This result applies only to normed regularizers, and does elucidate the relationship between the three forms for non-negative centered functions $\fcol$ and $\frow$. The result in Proposition \ref{prop_generalization} showing the existence of the induced regularizer could have been equivalently proven with the producted or constrained forms, for general $\fcol, \frow$.  In contrast to the norm case, however, it is unclear if the resulting
$\regz$ are the same for each of the forms; it is only the case that they each have a convex induced regularizer. 

Though we cannot make strong claims about preferences based on this result, we nonetheless advocate for the summed form.
Proposition \ref{prop_equivalence} provides some justification that the summed or producted forms could be equivalently chosen, in terms of stationary points. The constrained form could have more stationary points, which is not preferable because it creates the potential for more poor local minima or saddlepoints. 
In preliminary experiments, we found the producted form to be less stable. 
Moreover, gradient computations and updates are simpler for the summed form. 
Therefore, we move forward with this preference, and all further results are focused on the summed form. 

\subsection{Scaling with samples}\label{sec_samples}

An important oversight in previous specifications is an explicit normalization by the number of samples.
The magnitude of the regularizer on $\Hmat$ grows with samples---
since $\Hmat \in \RR^{\ldim \times \nsamples}$ grows with samples---whereas the regularizer on $\Dmat \in \RR^{\xdim \times \ldim}$
does not. Usually these regularizers are equally weighted by $\regwgt$;
preferably, $\Hmat$ should be scaled with samples.
For example, the loss is commonly an average error, 
e.g., $\jointloss(\Dmat \Hmat) = \tfrac{1}{\nsamples} \frobsq{\Dmat \Hmat - \Xmat}$.
Similarly, the regularizer on $\Hmat$ should be averaged,
to give a more balanced optimization
$$\tfrac{1}{\nsamples} \frobsq{\Dmat \Hmat - \Xmat} + \tfrac{\regwgt}{2} \frobsq{\Dmat} + \tfrac{\regwgt}{2\nsamples} \frobsq{\Hmat}.$$
In general, it is clearly useful to be able to normalize $\Hmat$ separately
from $\Dmat$. Though this modification seems trivial, it is not immediately obvious
from the previous convex reformulations using norm regularizers \citep{bach2008convex,zhang2011convex}
nor is it obvious how it affects the regularization weight on the induced regularizer.
This modification is particularly important for incremental estimation, using stochastic gradient descent, 
and so we explicitly characterize this relationship. 

\begin{proposition}\label{prop_scale}
Given any norms $\| \cdot \|_c$, $\| \cdot \|_r$ 
and scalar $s > 0$, if $\ldim$ is sufficiently large (in the worst case as large as $\xdim \nsamples$)
\begin{align*}
&\min_{\substack{\Dmat \in \RR^{\xdim \times \ldim}, \ \Hmat \in \RR^{\ldim \times \nsamples}}}
\jointloss(\Dmat\Hmat)+
\frac{\regwgt}{2} \sum_{i=1}^\ldim \| \Dmat_{:,i} \|_c^2 + \frac{\regwgt}{2\hscale^2}\sum_{i=1}^\ldim \| \Hmat_{i,:} \|_r^2\\
&=
\min_{\joint \in \RR^{\xdim \times \nsamples}} \jointloss(\joint) + \frac{\regwgt}{s} \regz(\joint)
.
\end{align*}
where $\regz(\joint) = \min_{\Dmat \in \RR^{\xdim \times \ldim}, \ \Hmat \in \RR^{\ldim \times \nsamples}: \joint = \Dmat \Hmat} \sum_{i=1}^\ldim  \|\Dmat_{:i} \|_c^2  +  \| \Hmat_{i:} \|_r^2$ is convex.    
\end{proposition}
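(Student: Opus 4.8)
The plan is to reduce the claimed identity to a single scaling identity on the induced regularizer, and then prove that identity by a diagonal change of variables. First I would invoke the standard variational split: since $\jointloss$ depends on $(\Dmat,\Hmat)$ only through the product $\joint = \Dmat\Hmat$, the joint minimization can be carried out by fixing $\joint$, minimizing the two regularizer sums over all factorizations $\Dmat\Hmat = \joint$, and then minimizing over $\joint$. This rewrites the left-hand side as $\min_\joint[\jointloss(\joint) + \tfrac{\regwgt}{2}G_s(\joint)]$, where I set $G_s(\joint) := \min_{\Dmat\Hmat=\joint}(\sum_i\|\Dmat_{:,i}\|_c^2 + \tfrac{1}{s^2}\sum_i\|\Hmat_{i,:}\|_r^2)$ and note $\regz$ corresponds to $G_1$ up to the normalization of \eqref{eq_inducednorm}.

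The heart of the argument is the scaling identity $G_s(\joint) = \tfrac{1}{s}G_1(\joint)$, which I would prove with the same reweighting used in Proposition \ref{prop_equivalence}. For any positive diagonal $\Diagmat$, the map $(\Dmat,\Hmat)\mapsto(\Dmat\Diagmat,\Diagmat^{-1}\Hmat)$ preserves the product $\joint$ and hence is a bijection of the feasible set onto itself. Minimizing the $i$-th summand $\Diagmat_{ii}^2\|\Dmat_{:,i}\|_c^2 + s^{-2}\Diagmat_{ii}^{-2}\|\Hmat_{i,:}\|_r^2$ over $\Diagmat_{ii} > 0$ and applying the AM--GM inequality collapses it to $\tfrac{2}{s}\|\Dmat_{:,i}\|_c\|\Hmat_{i,:}\|_r$. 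Thus $G_s(\joint) = \tfrac{2}{s}P(\joint)$, where $P(\joint) := \min_{\Dmat\Hmat=\joint}\sum_i\|\Dmat_{:,i}\|_c\|\Hmat_{i,:}\|_r$ is the producted form; specializing to $s=1$ gives $G_1 = 2P$, and dividing yields $G_s = \tfrac{1}{s}G_1$. Substituting this back turns the penalty $\tfrac{\regwgt}{2}G_s(\joint)$ into $\tfrac{\regwgt}{2s}G_1(\joint)$, which is exactly $\tfrac{\regwgt}{s}\regz(\joint)$ under the factor-$\tfrac12$ normalization of \eqref{eq_inducednorm}; this reproduces the right-hand side.

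Convexity of $\regz$ is immediate from Proposition \ref{prop_generalization} applied with $\fcol = \|\cdot\|_c$ and $\frow = \|\cdot\|_r$, which are non-negative, centered, and convex. The remaining and most delicate point is to justify that a finite $\ldim$, in the worst case $\ldim = \xdim\nsamples$, makes every step an equality of \emph{attained} minima rather than an inequality of infima: I must show that for each $\joint$ the inner minimum defining $G_s(\joint)$ is achieved by a factorization with at most $\xdim\nsamples$ columns, so that the split is exact and both sides use the same $\ldim$. This is a Carath\'eodory-type bound, since $\regz$ is a gauge whose rank-one atoms live in $\RR^{\xdim\nsamples}$ and any $\joint$ in their conic hull is a combination of at most $\xdim\nsamples$ of them, following \citet{bach2008convex}. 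I expect this finite-rank attainment to be the main obstacle, whereas the reweighting computation itself is routine once the bijection is in place.
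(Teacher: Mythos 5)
Your proof is correct, and it reaches the same destination by a somewhat different route. The paper's proof never forms the induced regularizer for the scaled problem explicitly: it works at the level of minimizers of the full objective, showing by contradiction that $(\Dmat^*,\Hmat^*)$ minimizes the unscaled factored objective if and only if $(\Dmat^*/\sqrt{\hscale},\sqrt{\hscale}\Hmat^*)$ minimizes the scaled one (using $\|\dvec/\sqrt{\hscale}\|_c^2=\|\dvec\|_c^2/\hscale$), and then simply re-parameterizes the regularization weight as $\regwgt/\hscale$ in the already-known unscaled equivalence. You instead push the scaling into the induced regularizer itself, proving $G_{\hscale}=\tfrac{1}{\hscale}G_1$ via a per-column diagonal reweighting $(\Dmat,\Hmat)\mapsto(\Dmat\Diagmat,\Diagmat^{-1}\Hmat)$ collapsed by AM--GM through the producted form. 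The underlying mechanism is identical --- degree-two homogeneity of the squared norms under a $\sqrt{\hscale}$ rescaling of the factors --- but your packaging buys a little more: it yields $G_{\hscale}=\tfrac{2}{\hscale}P$ as a byproduct (tying this proposition to Proposition \ref{prop_equivalence}'s producted form), and it makes explicit the finite-$\ldim$ attainment issue via a Carath\'eodory count, which the paper disposes of by citing \citet{haeffele2015global}. The paper's route is shorter because it avoids having to justify the interchange of minimizations and the attainment of the inner minimum; yours is more self-contained on exactly that point. One caution: the displayed definition of $\regz$ in the proposition statement omits the factor $\tfrac{1}{2}$ present in \eqref{eq_inducednorm}, and the claimed coefficient $\regwgt/\hscale$ is only consistent with the $\tfrac{1}{2}$-normalized version; you correctly adopted the latter, matching the paper's own usage in its proof. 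Also note that in the AM--GM step the infimum over $\Diagmat_{ii}>0$ is not attained when $\|\Dmat_{:,i}\|_c=0$ or $\|\Hmat_{i,:}\|_r=0$, but this is harmless since the paired row or column can be zeroed without changing $\joint$.
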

\begin{proof}
$\regz$ exists and is convex by Proposition \ref{prop_generalization}, and additionally because norms are positively homogenous, we know that $\regz$ is convex for $\ldim$ no greater than $\xdim \nsamples$ \citep[Proposition 11]{haeffele2015global}.

To show the connection between regularization weights for the factored and induced forms, we will instead prove that
\begin{align}
&\min_{\substack{\Dmat \in \RR^{\xdim \times \ldim}, \ \Hmat \in \RR^{\ldim \times \nsamples}}}
\jointloss(\Dmat\Hmat)+\frac{\regwgt\hscale}{2} \sum_{i=1}^\ldim \| \Dmat_{:,i} \|_c^2  + \frac{\regwgt}{2\hscale}\sum_{i=1}^\ldim \| \Hmat_{i,:} \|_r^2 \label{eq_scalep1}\\
&=
\min_{\substack{\Dmat \in \RR^{\xdim \times \ldim}, \ \Hmat \in \RR^{\ldim \times \nsamples}}}
\jointloss(\Dmat\Hmat)+\frac{\regwgt}{2} \sum_{i=1}^\ldim \|\Dmat_{:,i}\|_c^2  + \frac{\regwgt}{2}\sum_{i=1}^\ldim \| \Hmat_{i,:}\|_r^2
\label{eq_noscalep1}
\end{align}
where we already know that $ \eqref{eq_noscalep1} = \min_{\joint \in \RR^{\xdim \times \nsamples}}
\jointloss(\joint) + \regwgt \regz(\joint)$.
Using this, we can choose regularizer weight $\frac{\regwgt}{s}$ to get the desired result.

Take any $\Dmat^*, \Hmat^*$ that are minimizers of \eqref{eq_noscalep1}. 
Assume that $\Dmat^* / \sqrt{\hscale}$ and $\sqrt{\hscale}\Hmat^*$ are 
not minimizers of \eqref{eq_scalep1}.
Then there exists $\Dalt$ and $\Halt$ such that
\begin{align}
 \jointloss(\Dalt \Halt) + \frac{\regwgt\hscale}{2}\sum_{i=1}^\ldim \| \Dalt_{:,i} \|_c^2 + \frac{\regwgt}{2\hscale}\sum_{i=1}^\ldim \| \Halt_{i,:}\|_r^2
&< 
 \jointloss(\Dmat^* \Hmat^*) +  \frac{\regwgt\hscale}{2}\sum_{i=1}^\ldim \| \Dmat^*_{:,i}/\sqrt{\hscale} \|_c^2 + \frac{\regwgt}{2\hscale}\sum_{i=1}^\ldim \|\sqrt{\hscale}\Hmat^*_{i,:}\|_r^2 \nonumber\\
 &=
 \jointloss(\Dmat^* \Hmat^*) +  \frac{\regwgt}{2}\sum_{i=1}^\ldim \| \Dmat^*_{:,i} \|_c^2 + \frac{\regwgt}{2}\sum_{i=1}^\ldim \| \Hmat^*_{i,:} \|_r^2 \label{eq_contradiction}
\end{align}
where the second inequality is due to the fact that $\| \dvec/\sqrt{s}\|_c^2 = \| \dvec \|_c^2/s$ and similarly for $\| \cdot \|_r$. 
The strict inequality in \eqref{eq_contradiction}
is a contradiction of the fact that $\Dmat^*$ and $\Hmat^*$ are the minimizers of \eqref{eq_noscalep1}. 
Therefore,
$\Dmat^* / \sqrt{\hscale}$ and $\sqrt{\hscale}\Hmat^*$ are minimizers of \eqref{eq_scalep1}.

Similarly, if $\Dmat^*, \Hmat^*$ are minimizers of \eqref{eq_scalep1}, 
then $\sqrt{\hscale}\Dmat^*$ and $\Hmat^*/\sqrt{\hscale}$ are minimizers of \eqref{eq_noscalep1}. 
Therefore, the minimum value for \eqref{eq_scalep1} and \eqref{eq_noscalep1}
is equal.
\end{proof}

This proposition is largely subsumed by Proposition \ref{prop_generalization}, but serves to highlight the relationship 
between the regularization parameters for the factored form and the induced form.  
For example, for subspace dictionary learning, a natural choice is to scale $\| \Hmat \|_F^2$ with $\tfrac{1}{\nsamples}$, to correspond to the scaling on the loss $\tfrac{1}{\nsamples}\| \Dmat \Hmat - \Xmat \|_F^2$. This implies that when using the induced convex form, the corresponding scale on the trace norm should be $\sqrt{\nsamples}$. For more general $f_c, f_r$ chosen, however, this relationship is no longer clear. We can still scale the regularizer on $\Hmat$ with $\hscale > 0$, because the resulting $f_r/\hscale^2$ still satisfies the
conditions of Proposition \ref{prop_generalization} and so a convex induced regularizer $\regz$ is guaranteed
to exist. However, the impact of the choice of $\hscale$ on the regularization weight in front of $\regz$---and in fact the impact on $\regz$ itself---is no longer clear. 

\newcommand{\reglittleh}{\text{R}_h}

\subsection{Regularizers decoupled across samples}\label{sec_decoupled}

To enable incremental estimation, where each sample is processed incrementally we need to design objectives
that factor across samples. This requires that the regularizer on $\Hmat$ factors across the columns of $\Hmat$: $\regh(\Hmat) = \sum_{\sampiter = 1}^\nsamples \reglittleh(\Hmat_{:\sampiter})$ for some function $\reglittleh: \RR^\ldim \rightarrow \RR$.
The formulation for identified induced \RFMs, however, requires that the regularizer factors across columns: 
$\regh(\Hmat) = \sum_{i = 1}^\ldim \frow(\Hmat_{i:})$. In this section, we discuss options to select regularizers that satisfy both.  

For the incremental setting, we need to define the objective used for each sample. We assume that $\jointloss(\Dmat \Hmat) = \frac{1}{\nsamples}\sum_{\sampiter=1}^\nsamples L_x(\Dmat \Hmat_{:\sampiter}, \Xmat_{:\sampiter})$, for  i.i.d. samples $\xvec_1, \ldots, \xvec_\nsamples$. Because $\Hmat$ grows with the number of samples, we only maintain $\Dmat$ explicitly. To do so, similarly to previous work on incremental dictionary learning \cite{bottou1998online,mairal2009online}, we consider the following objective
\begin{align}
l_\sampiter(\Dmat) = \big(\min_{\hvec} L_x(\Dmat \hvec, \xvec_\sampiter) + \reglittleh(\hvec) \big)+ \sum_{i=1}^\ldim \fcol^2(\Dmat_{:i})
\end{align}
Each stochastic gradient descent step consists of stepping in the direction $-\nabla l_\sampiter(\Dmat)$ for each sample $\sampiter$. 
If $l_\sampiter(\Dmat)$ is an unbiased estimate of $\jointloss(\Dmat \Hmat) + \regh(\Hmat) + \regd(\Dmat)$,
then standard theoretical results imply that stochastic gradient descent will converge to a stationary point of the batch objective.

Unfortunately, if $\regh(\Hmat)$ cannot be written as $\sum_{\sampiter = 1}^\nsamples \reglittleh(\Hmat_{:\sampiter})$,
then $l_\sampiter(\Dmat)$ is not necessarily unbiased.
To see why, consider the setting with $f_r^2(\cdot) = \| \cdot \|_2^2$, giving
\begin{equation*} 
\regh(\Hmat) = \frobsq{\Hmat} = \sum_{i=1}^\ldim \|\Hmat_{i:} \|_2^2 = \sum_{\sampiter=1}^\nsamples \|\Hmat_{:\sampiter} \|_2^2
.
\end{equation*}
where the regularizer decomposes across columns. Then we get
\begin{align*}
\mathbb{E}[l_t(\Dmat)] &= \tfrac{1}{\nsamples} \sum_{\sampiter=1}^\nsamples \mathbb{E}[l_t(\Dmat)]\\
&= \tfrac{1}{\nsamples} \sum_{\sampiter=1}^\nsamples \mathbb{E}[\min_{\hvec} L_x(\Dmat \hvec, \xvec_t) + \tfrac{\regwgt}{2} \|\hvec \|_2^2] \\
&= \mathbb{E}\left[ \tfrac{1}{\nsamples} \sum_{\sampiter=1}^\nsamples \min_{\Hmat_{:\sampiter}} \left( L_x(\Dmat \Hmat_{:\sampiter}, \Xmat_{:\sampiter}) + \tfrac{\regwgt}{2}  \| \Hmat_{:\sampiter} \|_2^2 \right) \right] \\
&= \mathbb{E}\left[\min_\Hmat \tfrac{1}{\nsamples} \sum_{\sampiter=1}^\nsamples \left( L_x(\Dmat \Hmat_{:\sampiter}, \Xmat_{:\sampiter}) + \tfrac{\regwgt}{2} \| \Hmat_{:\sampiter} \|_2^2 \right) \right] \\
&= \mathbb{E}\left[\min_\Hmat \left(\jointloss(\Dmat\Hmat) + \tfrac{\regwgt}{2\nsamples}  \sum_{\sampiter=1}^\nsamples \| \Hmat_{:\sampiter} \|_2^2 \right) \right] \\
&= \mathbb{E}\left[\min_\Hmat \left(\jointloss(\Dmat\Hmat) + \tfrac{\regwgt}{2\nsamples}  \sum_{i=1}^\ldim \| \Hmat_{i:} \|_2^2 \right) \right]
.
\end{align*}
The equivalence occurs because we can swap $i$ and $\sampiter$ in the last step. 
In the sparse setting, on the other hand, with $\frow^2(\cdot) = \| \cdot \|_1^2$, we can no longer swap $i$ and $\sampiter$ because
\begin{align*}
\sum_{i=1}^\ldim \|\Hmat_{i:} \|_1^2 = \sum_{i=1}^\ldim \left(\sum_{\sampiter=1}^\nsamples |\Hmat_{ij} | \right)^2 
\neq \sum_{\sampiter=1}^\nsamples \|\Hmat_{:\sampiter} \|_1^2
\end{align*}
Therefore, $l_\sampiter(\Dmat)$ could be a biased estimate
of the expected loss. A natural alternative is to use $\| \cdot \|_1$ without squaring, since 
\begin{equation*}
\sum_{i=1}^\ldim \|\Hmat_{i:} \|_1 =  \sum_{\sampiter=1}^\nsamples \|\Hmat_{:\sampiter} \|_1
.
\end{equation*}
Technically, however, this no longer precisely fits into the formalism, because $\frow$ must be squared in the summed form
and $\frow = \sqrt{\ell_1}$---which would give $\frow^2 = \ell_1$---is no longer a convex function. 

For incremental estimation, therefore, we need to more carefully select the regularizers.
Fortunately, the flexibility of the induced \RFM\ formalism provides some recourse. For sparse coding, for example,
the producted or constrained forms both use $\ell_1$ instead of $\ell_1^2$.
In previous work on incremental sparse coding \cite{mairal2010online},
the constrained form was used
\begin{align*}
\min_{\substack{\Dmat \in \RR^{\xdim \times \ldim}, \ \Hmat \in \RR^{\ldim \times \nsamples} \\ \| \Dmat_{:i} \|_c \le 1}} \jointloss(\Dmat \Hmat)  + \regwgt \sum_{i=1}^\ldim \| \Hmat_{i:} \|_1
.
\end{align*}
The producted form also avoids the square on $\ell_1$, and additionally does not require a constrained optimization to be solved.\footnote{The required projection for more complicated regularizers on $\Dmat$ can significantly impact computation \cite{hazan2012projection}.}
Additionally, though not theoretically shown, we have empirically found that using $\ell_1$ instead of $\ell_1^2$ within the summed form also gives global solutions.
As discussed later in this document, we hypothesize that because $\frow = \sqrt{\ell_1}$ is close to being a convex function,
it enjoys similar global optimality properties as $\frow$ that are convex.

\paragraph{Remark:}
Once $\Dmat$ is learned, 
out-of-sample prediction for unsupervised learning is done by solving the following objective, for a new sample $\xvec$
\begin{align*}
\min_{\hvec} L_x(\Dmat \hvec, \xvec) + \regwgt \reglittleh(\hvec)
.
\end{align*}
%
Therefore, even for the batch setting, it makes sense to select the
objective that decouples the columns of $\Hmat$, for more effective out-of-sample prediction. 

{
 \def\arraystretch{2} 
 \setlength\tabcolsep{0.3cm} 
  \begin{table*}[htp!]
\begin{center}
\begin{small}
\begin{sc}
\begin{tabular}{| l || c |}
\hline  
Setting & Batch Loss  \\
\hline
Subspace 
    & $\tfrac{1}{\nsamples} \jointloss(\Dmat \Hmat)  + \tfrac{\regwgt}{2 \nsamples} \| \Hmat \|_F^2 +  \tfrac{\regwgt}{2} \| \Dmat \|_F^2$ \\
    \hline
Matrix completion 
    & $ \frac{1}{\# \text{observed}}\sum_{\text{observed } (i,j)} L_x(\Dmat_{i:} \Hmat_{:j}, \Xmat_{ij})  + \tfrac{\regwgt}{2 \nsamples} \| \Hmat \|_F^2 +  \tfrac{\regwgt}{2 \xdim} \| \Dmat \|_F^2$ \\
\hline
Sparse coding ($q \ge 1$)
    & $\tfrac{1}{\nsamples} \jointloss(\Dmat \Hmat)  + \tfrac{\regwgt}{2} \| \Dmat \|_F^2 + \tfrac{\regwgt}{2\nsamples} \sum_{i=1}^\ldim \| \Hmat_{i:} \|_1$ \\
\hline
Elastic net (set $\nuh = 1$)
    & $\tfrac{1}{\nsamples} \jointloss(\Dmat \Hmat)  + \tfrac{\regwgt}{\nsamples} \frobsq{\Hmat}
    + \regwgt\nud \frobsq{\Dmat} + \regwgt(1-\nud)\sum_{i=1}^\ldim \|\Dmat_{:,i}\|^2_{1}$ \\
\hline
Elastic net (set $\nuh = 0$)
    & $\tfrac{1}{\nsamples} \jointloss(\Dmat \Hmat)  + \tfrac{\regwgt}{\nsamples} \|\Hmat\|_{1,1}
    + \regwgt\nud \frobsq{\Dmat} + \regwgt(1-\nud)\sum_{i=1}^\ldim \|\Dmat_{:,i}\|^2_{1}$\\
\hline
Supervised dict. learning 
    & $\tfrac{1}{\nsamples} L\left(\inlinevec{\Dmatone}{\Dmattwo} \Hmat, \inlinevec{\Xmat}{\Ymat}\right) + \tfrac{\regwgt}{\nsamples} \|\Hmat\|_{1,1}
    + \regwgt \sum_{i=1}^\ldim \max\left( \| \Dmatone_{:i}\|_c^2, \|\Dmattwo_{:i} \|_c^2\right)$\ \\
\hline
\end{tabular}
\end{sc}
\end{small}
\caption {Our proposed preferred objectives for batch and stochastic gradient descent for induced \RFMs.
The reasons are given for these objectives throughout Section \ref{sec_objectives},
based mainly on obtaining unbiased objectives for incremental estimation and from empirical investigation into
their stability and optimality properties.
For the supervised representation learning objectives,
the column norm on $\Dmat$ can be any norm,
including the elastic net norm: $\nud \frobsq{\Dmat} + (1-\nud)\sum_{i=1}^\ldim \|\Dmat_{:,i}\|^2_{1}$. 
} \label{table_objectives} 
\end{center}
\end{table*}
}

In this section, we discussed how the objective can be specified in multiple ways, with
similar or equivalent modeling properties. In Table \ref{table_objectives},
we summarize what we believe are effective choices for the induced \RFMs\ defined in Section \ref{sec_fms}.
In the next sections, we demonstrate
theoretically and empirically that alternating minimization on induced \RFM\ objectives produces global solutions. In particular, we also provide evidence that moving
outside the class of induced \RFMs\ loses this property.
The result is actually hopeful: we can globally optimize a wide-range
of representation learning problems, with an appropriately chosen objective. 

\section{Local minima are global minima for a subclass of induced \RFMs}

Our main theoretical result is to show that,
for an appropriately chosen inner dimension $\ldim$, 
local minima are actually global minima.
The key novelty over previous work is to characterize the overcomplete setting, with $\ldim > \xdim$,
with full-rank solutions, whereas previous work has generally analyzed rank-deficient solutions.
Combined with previous results on rank-deficiency, there is compelling evidence that even
though the \RFM\   objective is nonconvex, alternating minimization between $\Hmat$ and $\Dmat$ 
should converge to a global solution.
Later, in Section \ref{sec_empirical} we illustrate empirically that this global optimality result additionally holds for induced \RFMs\ not covered by the theory,
but that it does not hold for two slight modifications that take the objective out of the class of induced \RFMs.
These theoretical and empirical insights constitute a significant step towards the conjecture
proposed in this work, that alternating minimization for induced \RFMs\ produces global solutions. 

\subsection{Previous work and moving to the full rank setting}

There has been significant effort towards understanding local minima for \RFMs, and related objectives. 
The most related and most general result to-date has been given by \citet{haeffele2015global}, which mostly characterizes the rank-deficient setting.
We give the definition of rank-deficiency more formally here.
\begin{definition}
A stationary point $(\Dalt, \Halt)$ with $\Dalt \in \RR^{\xdim \times \ldim}, \Halt \in \RR^{\ldim \times \nsamples}$ is \textbf{rank-deficient} 
if $\ldim$ is strictly greater than the rank of both $\Dalt$ and $\Halt$. 
\end{definition}
For a rank-deficient $(\Dalt, \Halt)$, one of these variables may still be full rank. For example, if $\xdim < \ldim < \nsamples$, then
$\Dalt$ could be full rank (rank $\xdim$); however, $\Halt$ cannot be full rank, as that would violate the definition of rank-deficiency. Therefore, rank-deficiency describes the rank of the pair, rather than the ranks of each individual matrix, reflecting that at least one of the variables must be reduced-rank. 

The theoretical properties of stationary points for the rank-deficient settings has been characterized for a surprisingly broad set of problems, which mostly encompasses induced \RFMs\footnote{The result by \citet{haeffele2015global} mostly includes  induced \RFMs, but requires positive homogeneity for the regularizers, whereas we only require that the regularizers be centered, convex functions.}. 
\citet{haeffele2015global} show that, when $\ldim < \nsamples$, if $\Dmat, \Hmat$ are rank-deficient local minima, and an entire column of $\Dmat$ or row of $\Hmat$ is zero, then that local minimum is a global minimum. Several other works have found similar properties for the rank-deficient setting (see Section \ref{sec_related}), though none for such a general setting. 

The full-rank setting, however, where $\Dmat$ and $\Hmat$ are both full rank, is not as well understood. For sparse coding, the results do not require rank deficiency, but characterize a different objective and require a careful initialization strategy to ensure convergence to global optima (again see Section \ref{sec_related}). The result by \citet{haeffele2015global} does apply to full-rank $\Dalt, \Halt$, but only when $\ldim > \nsamples$ and again still requires that an entire column of $\Dalt$ or row of $\Halt$ is zero. Because the rank-deficient setting is comparatively more thoroughly characterized, we focus our theoretical investigation on the full rank case.  

One key modification beyond previous work---to make the result meaningful for the full-rank case---is to allow smaller $\ldim$ such that $\regzk \neq \regz$. We can expand the set of of considered induced \RFMs\ by instead only requiring that $\ldim$ be sufficiently large so that $\regzk$ is convex, where 
 \begin{equation}
\regzk(\joint) \defeq 
        \min_{\substack{\Dmat \in \RR^{\xdim \times \ldim}, \ \Hmat \in \RR^{\ldim \times \nsamples}\\ \joint = \Dmat \Hmat}} \sum_{i=1}^\ldim \left( \fcol^2(\Dmat_{:i})  +  \frow^2(\Hmat_{i:}) \right) 
\tag{\ref{eq_regk}}  
\end{equation}
As shown in Proposition \ref{prop_generalization}, there exists a sufficiently large $\ldim^*$ such that $\text{R}_{\ldim^*} = \regz$ and so is guaranteed to be convex. However, $\regzk$ may be convex for smaller $\ldim$ and empirically we find that this seems to be the case. 
This is contrary to common wisdom that for these models we require $\ldim$ to be as large as the induced rank to obtain global solutions (see \cite{bach2008convex,zhang2011convex}).
For example, for sparse coding, the true induced rank is $\ldim^* = \nsamples$, where an optimal solution consists of memorizing the training samples. In practice, however, one would almost definitely set $\ldim < \ldim^*$.

This generalization is key because otherwise it is likely that the result would only apply to impractically large $\ldim$. For the rank-deficient setting, by definition, $\ldim$ was already sufficiently large because the stationary point did not use that additional parametrization. For the full-rank setting, however, it is feasible that increasing $\ldim$ and adding more parameters could further decrease the objective. For example, for sparse coding, it is possible that by increasing $\ldim$ towards $\ldim^* = \nsamples$, the objective could be further reduced. However, it could still be the case that, for $\ldim$ smaller than $\ldim^*$, a full-rank local minimum $\Dmat, \Hmat$ is a global minimum. Stationary points with $\ldim$ smaller than $\ldim^*$ would be unlikely to be stationary points for the induced $\regz$, but could be stationary points of $\regzk$. Because the proof uses the induced form---with $\regzk$---to characterize the stationary points of the factored form, if we only considered $\regzk = \regz$ for the full-rank setting, we would severely limit the scope of the result.  

\subsection{Theoretical result for the induced form}

We first provide the more general result, for optimality of alternating minimization on the induced form. 
We characterize the optimization surface for the minimization over $\Dmat$ and $\Hmat$ for the induced form
\begin{equation*}
\min_{\Dmat \in \RR^{\xdim \times \ldim}, \ \Hmat \in \RR^{\ldim \times \nsamples}} \jointloss(\Dmat \Hmat) + \regzk(\Dmat \Hmat)
.
\end{equation*}
We show that, despite the fact that this is a nonconvex optimization over $\Dmat, \Hmat$, that all full rank stationary points are actually global minima. This is a surprisingly strong result. Despite the fact that we do not directly optimize the induced form, this result may help to explain why empirically alternating minimization on the factored form typically produces global solutions.  This result---more so than the more limited result for the factored form---may suggest that the optimization surface for induced \RFMs\ is well-behaved. 

Because the induced problem may not be differentiable, due to $\regzk$,
we turn to more general definitions of directional derivatives \cite{ivanov2015second}. 
The lower Hadamard directional derivative exists for any continuous function; for a real-valued function $g: \RR^{\xdim \times \nsamples} \rightarrow \RR$, it is defined as
\begin{align*}
D_Z g (\joint)[\Umat] = \lim_{t \downarrow 0} \inf_{\Umat' \rightarrow \Umat} t^\inv [f(\joint + t \Umat') - f(\joint)]
.
\end{align*}
These directional derivatives become the standard Frechet derivatives, when those exist,
as they do for differentiable functions.
\begin{theorem}[Non-differentiable losses]\label{thm_convex}
Let $g: \RR^{\xdim \times \nsamples} \rightarrow \RR$ be any real-valued, continuous convex function, where the lower Hadamard directional derivative exists.
Let $(\Dalt,\Halt)$ be a stationary point of
\begin{align*}
\min_{\genrankset} g(\Dmat\Hmat).
\end{align*}
%
If \textbf{either}
\begin{enumerate} 
\item $\Dalt$ is full rank and $\ldim \ge \min(\xdim, \nsamples)$ \textbf{or}
\item $\Halt$ is full rank and $\ldim \ge \min(\xdim, \nsamples)$
\textbf{or} 
\item $(\Dalt, \Halt)$ is a local minimum and is rank-deficient (i.e., 
$r = \max(\rank(\Dalt), \rank(\Halt)) < \ldim$ )
\end{enumerate}
then $(\Dalt,\Halt)$ is a global minimum.
\end{theorem}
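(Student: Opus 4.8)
The plan is to exploit the fact that $\phi(\Dmat,\Hmat) \defeq g(\Dmat\Hmat)$ is the convex function $g$ precomposed with the smooth bilinear map $m(\Dmat,\Hmat) = \Dmat\Hmat$. Since $\ldim \ge \min(\xdim,\nsamples)$, every matrix in $\RR^{\xdim\times\nsamples}$ has rank at most $\ldim$ and is therefore attainable as some product $\Dmat\Hmat$; consequently $\inf_{\Dmat,\Hmat}\phi$ equals the unconstrained infimum of the convex $g$ over $\RR^{\xdim\times\nsamples}$, and $(\Dalt,\Halt)$ is a global minimizer of $\phi$ exactly when $\joint \defeq \Dalt\Halt$ minimizes $g$. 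The whole theorem thus reduces to showing that, under each of the three hypotheses, the stationary point maps to a $\joint$ that globally minimizes $g$.

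First I would record two general facts. Because $m$ is Fr\'echet differentiable with derivative $(\Delta_{\Dmat},\Delta_{\Hmat}) \mapsto \Delta_{\Dmat}\Halt + \Dalt\Delta_{\Hmat}$ at $(\Dalt,\Halt)$, and the lower Hadamard directional derivative obeys a chain rule against differentiable maps (this is precisely why it is the right notion here), stationarity of $(\Dalt,\Halt)$ is equivalent to $D_Z g(\joint)[\Delta_{\Dmat}\Halt + \Dalt\Delta_{\Hmat}] \ge 0$ for every $(\Delta_{\Dmat},\Delta_{\Hmat})$. Second, for a continuous convex $g$, the point $\joint$ is a global minimizer if and only if $D_Z g(\joint)[\Umat] \ge 0$ in every direction $\Umat$, which is immediate from the convexity inequality $g(\joint + \Umat) \ge g(\joint) + D_Z g(\joint)[\Umat]$. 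Hence it suffices to show that the set of realizable directions $V = \{\Delta_{\Dmat}\Halt + \Dalt\Delta_{\Hmat}\}$ is all of $\RR^{\xdim\times\nsamples}$, i.e. that the derivative of $m$ at $(\Dalt,\Halt)$ is surjective.

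For hypotheses (1) and (2) this surjectivity follows directly from the rank assumption. If $\Dalt$ is full rank with $\col(\Dalt) = \RR^\xdim$, then for any target $\Umat$ one solves $\Dalt\Delta_{\Hmat} = \Umat$ column by column, so already $\{\Dalt\Delta_{\Hmat}\} = \RR^{\xdim\times\nsamples}$ and $V$ is everything; combined with the two general facts, $D_Z g(\joint)[\Umat]\ge 0$ for all $\Umat$, so $\joint$ minimizes $g$. The $\Halt$-full-rank case is symmetric: applying the same argument to $\tilde g(\Wmat) \defeq g(\Wmat^\top)$ and the factorization $\joint^\top = \Halt^\top\Dalt^\top$ reduces it to the previous one, since $\{\Delta_{\Dmat}\Halt\} = \RR^{\xdim\times\nsamples}$ whenever $\row(\Halt) = \RR^\nsamples$. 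The one place needing care is to confirm that ``full rank'' is being used as exactly the surjectivity condition ($\col(\Dalt)=\RR^\xdim$, resp. $\row(\Halt)=\RR^\nsamples$) that the bound $\ldim\ge\min(\xdim,\nsamples)$ makes attainable in the overcomplete regime, rather than merely rank equal to the smaller dimension.

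Hypothesis (3) is the genuinely different case, because when $(\Dalt,\Halt)$ is rank-deficient the derivative of $m$ need not be surjective and $V$ can be a proper subspace, so the argument above fails. Here I would instead use the local-minimality assumption together with the spare dimension $\ldim > r$: given any descent direction of $g$ at $\joint$, an invertible reparametrization $\Dalt\Qmat,\ \Qmat^{\inv}\Halt$ exposes a zero column of the dictionary and zero row of the coefficients, and these slack coordinates can be used to build an admissible curve $(\Dmat(s),\Hmat(s))$ whose product moves along that descent direction to first order, contradicting local minimality unless $\joint$ already minimizes $g$. This is the established rank-deficiency argument of \citet{haeffele2015global}, adapted from positively homogeneous regularizers to merely centered convex ones, and I expect it to be the main obstacle: unlike (1)--(2) it requires an explicit perturbation construction and a careful verification both that the curve remains admissible and that the first-order change in $g$ has the claimed sign under the lower Hadamard directional derivative.
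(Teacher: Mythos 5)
Your treatment of conditions (1) and (2) is correct and is essentially the paper's own argument: stationarity plus the chain rule gives $D_Z g(\Dalt\Halt)[\Dalt\, d\Hmat] \ge 0$, full rank of $\Dalt$ makes $\{\Dalt\, d\Hmat\}$ all of $\RR^{\xdim\times\nsamples}$, and convexity of $g$ converts nonnegativity of all directional derivatives into global optimality. The symmetric reduction for $\Halt$ is also what the paper does implicitly.

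Condition (3), however, is left as a sketch that would not go through as described. First, the perturbation you describe is mischaracterized: after rotating into the SVD bases of $\Dalt$ and $\Halt$ and restricting $d\Dmat, d\Hmat$ to the $\ldim - r$ slack coordinates, one gets $\Dalt\, d\Hmat = 0$ and $d\Dmat\, \Halt = 0$, so the product moves as $\joint + t^2\, d\Dmat\, d\Hmat$ --- at \emph{second} order in $t$, not first order. This is in fact the crux of the paper's argument: local minimality then yields $D_Z g(\joint)[\dvec\hvec^\top] \ge 0$ only for rank-one directions $\dvec\hvec^\top$ built from the slack coordinates (the paper obtains this via the second directional derivative, whose $D_Z^2$ term vanishes along these directions). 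Second, and more importantly, nonnegativity of the directional derivative of a nonsmooth convex $g$ along all rank-one directions does not immediately give global optimality, because $D_Z g(\joint)[\cdot]$ is the support function of $\partial g(\joint)$ and is only sublinear; the paper needs a separate subdifferential argument (choosing $\dvec = -\Gmat\hvec$ to force $\Gmat = \zerovec$ in the singleton case, and a convexity-of-the-subdifferential argument with $\Gmat_1, \Gmat_2$ in general) to conclude $\zerovec \in \partial g(\joint)$. Your appeal to the Haeffele et al.\ zero-column construction also does not directly apply here: that argument is tailored to a loss plus positively homogeneous regularizer, whereas Theorem \ref{thm_convex} concerns a single arbitrary continuous convex $g$ of the product, with no regularizer structure to exploit. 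So the gap is concrete: you have not shown how local minimality in the rank-deficient case forces $\zerovec \in \partial g(\Dalt\Halt)$, and the route you gesture at skips both the second-order nature of the admissible perturbation and the nonsmooth subdifferential step.
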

\begin{proof}
We need to show two cases. For the overcomplete setting, we consider full rank $\Dalt$,
the argument is similar for full rank $\Halt$. 
Notice that
\begin{align*}
D_H g(\Dalt \Halt)[d\Hmat] \ge 0
\end{align*}
by definition, since $\Dalt, \Halt$ is a stationary point and $g$ is convex, making $g(\Dalt \cdot)$ convex in $\Hmat$. By the chain rule
\begin{align*}
D_H g(\Dalt \Halt)[d\Hmat] = D_Z g(\Dalt \Halt)[\Dalt d\Hmat]. 
\end{align*}
The chain rule follows, because $g(\Dalt (\Halt + t d\Hmat)) = g(\Dalt \Halt + t \Dalt d\Hmat)$.
Because $\Dalt$ is full rank, $\Dalt d\Hmat$ constitutes all possible directions $d\joint$. 
Therefore, in all directions, $D_Z g(\Dalt \Halt)[d\joint] \ge 0$; since this is a convex function,
$\Dalt \Halt$ must be a global minimum.
 
For the second setting, where $\min(\rank(\Dalt), \rank(\Halt)) = r < \min(\xdim,\ldim)$, we show that local optimality implies global optimality.
The key is to use the fact that $\Dalt$ has singular values that are zero. 
The first directional derivative in direction $(d\Dmat, d\Hmat)$, for the stacked variable $\Ymat = (\Dmat, \Hmat)$ is 
\begin{align*}
D_Y g(\Dalt \Halt)[(d\Dmat, d\Hmat)] = D_Z g(\Dalt \Halt)[\Dalt d\Hmat + d\Dmat\Halt]
.
\end{align*}
The second directional derivative, by the chain rule \citep[Theorem 1.127]{mordukhovich2006variational}, is
\begin{align*}
D_Y^2 g(\Dalt \Halt)[(d\Dmat, d\Hmat), (d\Dmat, d\Hmat)] 
&= D_Y (D_Z g(\Dalt \Halt)[\Dalt d\Hmat + d\Dmat\Halt] )[(d\Dmat, d\Hmat)]\\
&= 2 D_Z g(\Dalt \Halt)[d\Dmat d\Hmat] + D_Z^2 g(\Dalt \Halt)[\Dalt d\Hmat + d\Dmat\Halt,\Dalt d\Hmat + d\Dmat\Halt]
.
\end{align*}
Let $\Dalt = \Umat_1 \Sigmamat_1 \Vmat_1^\top$ and $\Halt = \Umat_2 \Sigmamat_2 \Vmat_2^\top$. 
Any directions $d\Dmat, d\Hmat$ can be written
\begin{align}
d\Hmat &= \Vmat_1 [\hvec_1, \ldots, \hvec_{\ldim}]^\top \label{eq_directions}\\
d \Dmat &= [\dvec_1, \ldots, \dvec_{\ldim}] \Umat_2^\top \nonumber
\end{align}
for vectors $\hvec_i \in \RR^\nsamples, \dvec_i \in \RR^\xdim$. 

We consider the more specific choice $d\Dmat_r, d\Hmat_r$ with $\hvec_1, \ldots, \hvec_{\rdim} = \zerovec$ and 
$\dvec_1, \ldots, \dvec_{\rdim} = \zerovec$.
The rotation in $d\Hmat_r$ results in 
$\Dalt d\Hmat_r = \Umat \Sigmamat_1 [\zerovec, \ldots, \zerovec, \hvec_{\rdim+1}, \ldots, \hvec_{\ldim}]^\top = \zerovec$.
Similarly, $d \Dmat_r \Halt = \zerovec$. Therefore, $D^2_\joint g(\joint)[\Dalt d\Hmat_r + d\Dmat_r\Halt,\Dalt d\Hmat_r + d\Dmat_r\Halt] = D^2_\joint g(\joint)[\zerovec,\zerovec] = \zerovec$.
Therefore, for all such $d\Dmat_r, d\Hmat_r$, 
    \begin{equation*}
D_Z g(\Dalt \Halt)[d\Dmat_r d\Hmat_r]  = D_Y^2 g(\Dalt \Halt)[(d\Dmat, d\Hmat), (d\Dmat, d\Hmat)]  \ge 0
  \end{equation*}
where the second directional derivate is greater than zero because $\Dalt,\Halt$ is a local minimum.

Using this, we can show that 
$\zerovec \in \partial g(\Dalt \Halt)$, where $\partial g(\Dalt \Halt)$
is the subdifferential of $g$ at $\bar\joint = \Dalt \Halt$. 
Because $g$ is convex, we know that $D_Z g(\bar\joint)[d\joint] = \sup_{\Gmat \in \partial g(\bar\joint)} \trace{\Gmat^\top d\joint}$
with the set $\partial g(\bar\joint)$ convex. 
Consider the simpler setting where $\dvec_{\rdim+1}, \hvec_{\rdim+1}$ can be any $\ldim$-dimensional vectors,
and we set the other vectors in the directions $d\Dmat_r$ and $d\Hmat_r$ to zero. 
If $\partial g(\Dalt \Halt)$ is a singleton $\{\Gmat \}$, then we get, for directions $\hvec_{\rdim+1} = \hvec$ 
and $\dvec_{\rdim+1} = -\Gmat \hvec$ that $0 \le \trace{\dvec_{\rdim+1}^\top \Gmat \hvec_{\rdim+1} } = 
-\trace{\hvec^\top \Gmat^\top\Gmat \hvec}$ which implies that $\Gmat = \zerovec$.
More generally, let $\Gmat_1$ and $\Gmat_2$ correspond to these subderivatives for $d\Dmat_r$
and $-d\Dmat_r$. Therefore, $\trace{\Gmat_1^\top d\Dmat_r d\Hmat_r}  \ge 0$ and 
$-\trace{\Gmat_2^\top d\Dmat_r d\Hmat_r} \ge 0$. 
If $\Gmat_2 = -\Gmat_1$, by convexity of the subdifferential, $\zerovec \in \partial g(\Dalt \Halt)$. 
More generally, if the set does not contain both $\Gmat_1$ and $-\Gmat_1$, then the point
that has highest product with $-d\Dmat_r$ must be the projection to the set from $-\Gmat_1$,
and so $\Gmat_2 = (1-t) \Gmat_1$, i.e., $\Gmat_2$ is between $\Gmat_1$ and $-\Gmat_1$ for $t \in [0,2]$. 
This implies that $-\trace{\Gmat_2^\top d\Dmat_r d\Hmat_r} = (t-1)\trace{\Gmat_1^\top d\Dmat_r d\Hmat_r} \ge 0$
if and only if $t \ge 1$. This would again imply $\zerovec \in \partial g(\Dalt \Halt)$, by convexity of the subdifferential.

Therefore, since $\zerovec \in \partial g(\Dalt \Halt)$ 
and because $g$ is convex, this implies $\Dalt, \Halt$ is a global minimum. 
\end{proof}

This result shows that the stationary points of the induced form are well-behaved. 
This can be seen by setting $g(\joint) = \jointloss(\joint) + \regzk(\joint)$, with $\ldim$ sufficiently large to ensure that $\regzk$ is convex.
The regularizer $\regzk$ may not be smooth (e.g., trace norm), but it will be continuous, and the theorem only requires continuity for $g$.  For the overcomplete setting, any stationary point with full rank $\Dalt$ or $\Halt$ is guaranteed to be a global minimum. 
For the rank-deficient setting, if this point is a local minimum, then it is a global minimum. 
This result, therefore, characterizes even the saddlepoints for the full-rank setting---since it characterizes all stationary points.

\subsection{Theoretical result for the factored form}

We now provide a more restricted result for the factored form, which is the objective we will optimize in practice. 
We provide a novel proof strategy to characterize the stationary points of the factored form using the results from the induced form.
We highlight a necessary condition---the induced regularization property---for optimality for the factored form, given in Definition \ref{def_irp}, and, given this condition, we can prove an analogously strong result to Theorem \ref{thm_convex}. We then show a general setting for the factored form that guarantees that this condition is satisfied for local minima, only requiring that the regularizers be strictly convex. We expect that more induced \RFMs\ can be shown to satisfy the induced regularization property, and so leverage the general result in Theorem \ref{thm_convex}. 

We formally characterize this result in the remainder of this section. We use the following assumption. 
 \begin{assumption}
   The loss $\jointloss: \RR^{\xdim \times \nsamples} \rightarrow \RR$ and the regularizers $\fcol:\RR^\xdim \rightarrow \RR^+, \frow: \RR^\nsamples \rightarrow \RR^+$ are all differentiable convex functions. Further, as required for induced \RFMs, $\frow$ and $\fcol$ are centered, non-negative functions. 
\end{assumption}
This assumption does not allow non-smooth regularizers, such as the $\ell_1$ or elastic net regularizer.
However, due to the generalization in Proposition \ref{prop_generalization} to any non-negative convex $\fcol$ and $\frow$,
we can use smooth approximations to these regularizers. The theory applies to these smooth approximations 
even for parameter selections that make them arbitrarily close to the non-smooth regularizer. Intuitively, this suggests that the results
extend to the non-smooth setting; we leave this generalization to future work.

We introduce a condition that is necessary for obtaining optimality for the full rank case.
We state this property as an explicit condition to highlight it as a key property
for the proof. We then demonstrate one setting where this assumption is satisfied, in Proposition \ref{prop_strongly}. 
We hope that by identifying this condition as key, it can help further identify
more \RFMs\ that satisfy this condition, for the setting with full rank solutions. 
\begin{definition}[Induced regularization property]\label{def_irp}
A stationary point $\Dmat,\Hmat$ of the objective $\{\jointloss(\Dmat\Hmat) + \regd(\Dmat) + \regh(\Hmat)\}$ satisfies the \textit{induced regularization property} if
\begin{align}
\displaystyle \regd(\Dmat) + \regh(\Hmat) = \min_{\substack{\Dmat_k \in \RR^{\xdim \times \ldim}, \Hmat_k \in \RR^{\ldim \times \nsamples}\\ \Dmat \Hmat = \Dmat_k \Hmat_k}}  \regd(\Dmat_k) + \regh(\Hmat_k) \label{eq_key}
\end{align}
and each possible solution $\Dmat_k, \Hmat_k$ is itself a stationary point of the objective. 
\end{definition}
This property essentially states that the point $\Dmat, \Hmat$ cannot be reweighted to create $\Dmat_k, \Hmat_k$
that has same $\joint = \Dmat_k\Hmat_k = \Dmat \Hmat$ but with a lower objective value. If such a $\Dmat_k, \Hmat_k$ existed, 
they could only have a lower objective value because of smaller regularization terms, which would imply that $\Dmat, \Hmat$ could not be a global minimum. 
Further, even if the regularization term is equivalent for all $\Dmat_k, \Hmat_k$, if any of the equivalent $\Dmat_k, \Hmat_k$ are not stationary points, then we would obtain a descent direction to further decrease the objective. This would again
mean that $\Dmat, \Hmat$ is not a global minimum. The induced regularization property, therefore, is a necessary condition. 

We provide our main result for the factored form in Theorem \ref{thm_irfm}. We focus the characterization on full rank stationary points. For the rank-deficient setting, we can leverage the results\footnote{Their general factorization objective does not fully cover all of the induced \RFMs\ considered here, because they require positively homogenous regularizers, rather than just convexity.} by \citet{haeffele2015global}, to show that local minima are global minima, similarly to Condition 3 in Theorem \ref{thm_convex}. We therefore focus the characterization in Theorem \ref{thm_irfm} on full rank stationary points. 
\begin{theorem}[General regularizers on $\Dmat, \Hmat$]\label{thm_irfm}
Let $(\Dalt,\Halt)$ be a stationary point of
%
\begin{align}
\min_{\genrankset} \jointloss(\Dmat\Hmat) + \regd(\Dmat) + \regh(\Hmat)
\label{eq_generic}
\end{align}
where $\regd(\Dmat) = \sum_{i=1}^k \fcol^2(\Dmat_{:i} )$ and 
$\regh(\Hmat) = \sum_{i=1}^k \frow^2(\Hmat_{i:} )$ and the loss and regularizers satisfy the conditions of Assumption 1.
Let $\regzk$ be the induced norm given $\fcol$ and $\frow$, as defined in Equation \eqref{eq_regk}.
If 
 \begin{enumerate}[itemindent=0.5cm]
 \item $\regzk$ is convex
 \item Both $\Dalt$ and $\Halt$ are full rank 
 \item $(\Dalt,\Halt)$ satisfies the induced regularization property (Definition \ref{def_irp})
 \end{enumerate}
then $(\Dalt, \Halt)$ is a globally optimal solution to \eqref{eq_generic}.
\end{theorem}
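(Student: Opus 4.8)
The plan is to reduce the factored objective to the induced convex objective and then invoke Theorem \ref{thm_convex}. Define $g(\joint) = \jointloss(\joint) + \regzk(\joint)$, which is continuous and, by Condition 1 together with convexity of $\jointloss$, convex (so its lower Hadamard derivatives exist). Because $\regzk(\joint) \le \regd(\Dmat) + \regh(\Hmat)$ for every factorization $\joint = \Dmat\Hmat$, we have $g(\Dmat\Hmat) \le \jointloss(\Dmat\Hmat) + \regd(\Dmat) + \regh(\Hmat)$ for all $\Dmat,\Hmat$; moreover, the induced regularization property (Condition 3) states exactly that $\regzk(\Dalt\Halt) = \regd(\Dalt) + \regh(\Halt)$, so this inequality is an equality at $(\Dalt,\Halt)$. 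It therefore suffices to prove that $\bar\joint \defeq \Dalt\Halt$ is a global minimizer of $g$: combining $g(\bar\joint) \le g(\Dmat\Hmat) \le \jointloss(\Dmat\Hmat)+\regd(\Dmat)+\regh(\Hmat)$ with the equality $g(\bar\joint) = \jointloss(\Dalt\Halt)+\regd(\Dalt)+\regh(\Halt)$ yields global optimality of $(\Dalt,\Halt)$ for \eqref{eq_generic}.

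The central step is to show that $(\Dalt,\Halt)$ is a stationary point of the induced form $\min_{\Dmat,\Hmat} g(\Dmat\Hmat)$, after which Theorem \ref{thm_convex} (its full-rank case, which applies since $\Dalt$ and $\Halt$ are full rank by Condition 2) immediately gives that $\bar\joint$ minimizes $g$. Write $\Gmat = \nabla\jointloss(\bar\joint)$; since the factored objective is differentiable under Assumption 1, stationarity of $(\Dalt,\Halt)$ means $\Gmat\Halt^\top + \nabla_\Dmat\regd(\Dalt) = \zerovec$ and $\Dalt^\top\Gmat + \nabla_\Hmat\regh(\Halt) = \zerovec$. To bound the Hadamard derivative of $\regzk$ along a factored direction $d\joint = \Dalt\, d\Hmat + d\Dmat\,\Halt$, I would perturb the decomposition itself: for $\Dmat(t) = \Dalt + t\,d\Dmat$ and $\Hmat(t) = \Halt + t\,d\Hmat$ one has $\Dmat(t)\Hmat(t) = \bar\joint + t\,d\joint + t^2\, d\Dmat\,d\Hmat$, so choosing the varying direction $\Umat_t' = d\joint + t\, d\Dmat\, d\Hmat \to d\joint$ makes $\bar\joint + t\Umat_t' = \Dmat(t)\Hmat(t)$ an exact factorization. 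Using $\regzk(\Dmat(t)\Hmat(t)) \le \regd(\Dmat(t)) + \regh(\Hmat(t))$ and the equality $\regzk(\bar\joint) = \regd(\Dalt) + \regh(\Halt)$, the difference quotient is bounded above, and letting $t\downarrow 0$ gives $D_Z\regzk(\bar\joint)[d\joint] \le \langle\nabla_\Dmat\regd(\Dalt), d\Dmat\rangle + \langle\nabla_\Hmat\regh(\Halt), d\Hmat\rangle$.

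Combining this with $D_Z g(\bar\joint)[d\joint] = \langle\Gmat, d\joint\rangle + D_Z\regzk(\bar\joint)[d\joint]$ (the sum rule applies since $\jointloss$ is differentiable) and regrouping the inner products as $\langle\Gmat\Halt^\top + \nabla_\Dmat\regd(\Dalt), d\Dmat\rangle + \langle\Dalt^\top\Gmat + \nabla_\Hmat\regh(\Halt), d\Hmat\rangle$, the two stationarity identities make this quantity vanish, so $D_Z g(\bar\joint)[d\joint] \le 0$; applying the same bound to the opposite direction $(-d\Dmat,-d\Hmat)$ gives $D_Z g(\bar\joint)[-d\joint] \le 0$, and convexity of $g$ (for which $D_Z g(\bar\joint)[v] + D_Z g(\bar\joint)[-v] \ge 0$) forces both directional derivatives to equal $0$. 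Hence $D_Z g(\bar\joint)[\Dalt\,d\Hmat + d\Dmat\,\Halt] \ge 0$ for all $d\Dmat, d\Hmat$, i.e.\ $(\Dalt,\Halt)$ is a stationary point of the induced form, and Theorem \ref{thm_convex} completes the argument.

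I expect the main obstacle to be the second step: correctly handling the second-order term $t^2\, d\Dmat\, d\Hmat$ when perturbing the factorization. This is precisely why the lower Hadamard directional derivative is the right tool—its freedom to vary the direction $\Umat_t'\to d\joint$ lets us keep $\bar\joint + t\Umat_t'$ an exact product $\Dmat(t)\Hmat(t)$, so that the convex value function $\regzk$ can be upper-bounded by the differentiable factored regularizers without incurring an uncontrolled error. The remaining care is in justifying the sum rule for $D_Z$ when one summand ($\jointloss$) is differentiable, and in confirming that the full-rank hypothesis supplies exactly the surjectivity of $d\Hmat \mapsto \Dalt\,d\Hmat$ (resp.\ $d\Dmat\mapsto d\Dmat\,\Halt$) that Theorem \ref{thm_convex} needs to promote stationarity in factored directions to stationarity of $g$ in every direction.
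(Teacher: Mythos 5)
Your proof is correct, and it reaches the same waypoint as the paper---showing that $(\Dalt,\Halt)$ is a stationary point of the induced objective $\jointloss(\Dmat\Hmat)+\regzk(\Dmat\Hmat)$ and then invoking Theorem \ref{thm_convex} plus the sandwich $\regzk(\Dmat\Hmat)\le\regd(\Dmat)+\regh(\Hmat)$ with equality at $(\Dalt,\Halt)$---but the route to that waypoint is genuinely different. The paper rewrites $\regzk(\Dmat\Hmat)$ as a value function $\bar J(\Ymat)=\min_{\Qmat\in C(\Ymat)}J(\Ymat,\Qmat)$ over reparametrizations $\Dmat_k=\Dmat\Amat$, $\Hmat_k=\Bmat\Hmat$ (which is where it needs full rank of both factors a second time), and then runs a Danskin-type argument to show $D_Y\bar J(\Ymat_0)[\Umat]=\min_{\Qmat\in S(\Ymat_0)}D_YJ(\Ymat_0,\Qmat)[\Umat]$; to conclude nonnegativity it must invoke the \emph{second} clause of Definition \ref{def_irp} (every minimizing factorization is itself a stationary point), and it has to manage compactness of the $\Qmat$-set and continuity of the argmin correspondence. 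You instead exploit that $\regzk$ is an infimum: plugging the single feasible factorization $(\Dalt+t\,d\Dmat)(\Halt+t\,d\Hmat)$ into the envelope, and using the lower Hadamard derivative's freedom to send $\Umat_t'=d\joint+t\,d\Dmat\,d\Hmat\to d\joint$, gives a one-sided upper bound on $D_Z\regzk(\bar\joint)[d\joint]$ by the differentiable factored regularizers; the factored stationarity identities kill the resulting expression, and the convexity inequality $D_Zg[v]+D_Zg[-v]\ge0$ squeezes both directional derivatives to zero. This only uses the \emph{first} clause of the induced regularization property ($\regzk(\bar\joint)=\regd(\Dalt)+\regh(\Halt)$), avoids the Danskin machinery entirely, and is arguably cleaner; what it gives up is the stronger intermediate fact the paper establishes (an exact formula for the directional derivative of the value function), which is not needed for the theorem. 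The only points requiring the care you already flag are the sum rule for $D_Z$ with one differentiable summand and the fact that, for a finite-valued convex $g$, the lower Hadamard and ordinary one-sided directional derivatives coincide, so the stationarity notion you verify is the one Theorem \ref{thm_convex} consumes.
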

\begin{proof}
We would like to relate $(\Dalt, \Halt)$ to the stationary points of the induced problem
\begin{align}
\argmin_{\genrankset} \jointloss(\Dmat \Hmat) +  \regzk(\Dmat \Hmat)
.
\label{eq_inducedk}
\end{align}
According to Theorem \ref{thm_convex}, if we can show that $\Dalt, \Halt$ are stationary points of \eqref{eq_inducedk},
then we know they are global minima. Under the induced-regularization property, the optimal solution to \eqref{eq_inducedk} corresponds
to the optimal solution of \eqref{eq_generic}, and so we would correspondingly know this is a global minima
for \eqref{eq_generic} and be done. 

To relate these optimizations, we rewrite the optimization in \eqref{eq_inducedk} for full rank $\Dmat, \Hmat$ using the factored form
with an additional optimization over $\Qmat = \inlinevec{\Amat}{\Bmat} \in \RR^{2\ldim \times \ldim}$
\begin{align*}
\regzk(\Dmat \Hmat)
&= \frac{1}{2} \min_{\substack{\Dmat_k \in \RR^{\xdim \times \ldim}, \Hmat_k \in \RR^{\ldim \times \nsamples}\\ \Dmat \Hmat = \Dmat_k \Hmat_k}}  \regd(\Dmat_k) + \regh(\Hmat_k) \nonumber\\ 
&= \frac{1}{2} \min_{\Amat, \Bmat \in \RR^{\ldim \times \ldim}: \Dmat \Amat \Bmat \Hmat = \Dmat \Hmat}  \regd(\Dmat \Amat) + \regh(\Bmat\Hmat)
.
\end{align*}
The second equality follows from the fact that a minimal such pair $\Dmat_k, \Hmat_k$ can always be written 
$\Dmat_k = \Dmat \Amat$ and $\Hmat_k = \Bmat \Hmat$, because $\Dmat$ and $\Hmat$ are both full rank. 
%
Consequently, we will instead consider
\begin{align}
\jointloss(\Dmat \Hmat) + \regzk(\Dmat\Hmat)
&=
\min_{\Amat, \Bmat \in \RR^{\ldim \times \ldim}: \Dmat \Amat \Bmat \Hmat = \Dmat \Hmat} \jointloss(\Dmat \Hmat) +  \regd(\Dmat \Amat) + \regh(\Bmat\Hmat)
.
\label{eq_inducedaltq}
\end{align}
%

%
For a differentiable function $f$, 
the directional derivative is simpler and can be written as
\begin{align*}
D_Y f (\Ymat)[\Umat] = \lim_{t \downarrow 0} t^\inv [f(\Ymat + t \Umat) - f(\Ymat)]
\end{align*}

We use Danskin's theorem to characterize the stationary point for the induced problem.
We provide a more specific, simpler proof here, by taking advantage of some of the properties of our function. For a nice overview on the more general Danskin's theorem,
see \citep[Theorem 4.1]{bonnans1998optimization} and \citep[Theorem 1.29]{guler2010foundations}.
Let $\pairvar_0 = [\Dalt; \Halt^\top] \in \RR^{(\xdim+\nsamples) \times \ldim}$ and for $\Ymat = [\Dmat; \Hmat^\top]$ define
\begin{equation*}
J\left(\Ymat, \Qmat = \inlinevec{\Amat}{\Bmat}\right) = \jointloss(\Dmat \Hmat) +  \regd(\Dmat \Amat) + \regh(\Bmat\Hmat)
.
\end{equation*}
Pick an arbitrary direction $\Umat = [d\Dmat; d\Hmat^\top] \neq 0$ and 
let $\{\Ymat_{\seqiter}\}_{\seqiter=1}^\infty$, $\Ymat_{\seqiter} = \Ymat_0 + t_{\seqiter} \Umat$, $t_{\seqiter} \ge 0$ be a sequence
converging to $t_0 = 0$ (so $\Ymat_\seqiter \rightarrow \Ymat_0$). Define 
\newcommand{\cset}{C}
\begin{align*}
\cset(\Ymat_0) &= \left\{\inlinevec{\Amat}{\Bmat} \in \RR^{2\ldim \times \ldim}: \Dmat \Amat \Bmat \Hmat = \Dmat \Hmat \right\}\\
S(\Ymat_{\seqiter}) &= \argmin_{\Qmat \in \cset(\Ymat_\seqiter)} J(\Ymat_\seqiter, \Qmat)
.
\end{align*}

Since $\Dalt, \Halt$ are finite, we can assume that the set of possible $\Qmat$ is compact. This is because we can pick a sufficiently large ball around $\Dalt, \Halt$ to encompass the sequence $\Ymat_i$, and so restrict the definition to a compact set of $\Dmat, \Hmat$. For this compact set, in the optimization, the regularizers are non-negative, centered and convex and so the optimization will select bounded $\Amat, \Bmat$ for all $\Dmat, \Hmat$. The optimization over $\Qmat$ is guaranteed to have at least one solution, and
so $S(\Ymat_{\seqiter})$ is not empty. By boundedness of the set over $\Qmat$ and because $J(\Ymat, \Qmat)$ is continuous in $\Qmat$, for each $\Qmat_0 \in S(\Ymat_0)$, there exists a sequence $\Qmat_i \in S(\Ymat_i)$ such that $\Qmat_i \rightarrow \Qmat_0$.

Defining
\begin{equation*}
\bar{J}(\Ymat) = \min_{\Qmat \in \cset(\Ymat)} J(\Ymat, \Qmat)
\end{equation*}
we get that     
\begin{align*}
\frac{\bar{J}(\Ymat_{\seqiter}) - \bar{J}(\Ymat_0)}{t_{\seqiter}} 
&= \frac{J(\Ymat_{\seqiter}, \Qmat_{\seqiter}) - J(\Ymat_0, \Qmat_0)}{t_{\seqiter}} \\
&= \frac{J(\Ymat_{\seqiter}, \Qmat_{\seqiter}) - J(\Ymat_{\seqiter}, \Qmat_0)}{t_{\seqiter}} + \frac{J(\Ymat_{\seqiter}, \Qmat_0) - J(\Ymat_0, \Qmat_0)}{t_{\seqiter}} \\
&\le  \frac{J(\Ymat_{\seqiter}, \Qmat_0) - J(\Ymat_0, \Qmat_0)}{t_{\seqiter}}\\
&= D_Y J(\Ymat_0 + t_{\seqiter}' \Umat, \Qmat_0) [\Umat] 
\end{align*}
where the first inequality follows from $J(\Ymat_{\seqiter}, \Qmat_{\seqiter}) \le J(\Ymat_{\seqiter}, \Qmat_0)$
and the last equality from the mean value theorem, where there is some $0 \le t'_\seqiter \le t_\seqiter$
that provides a point between $\Ymat_{\seqiter}$ and $\Ymat_0$.
Taking the limit superior, we obtain
\begin{align*}
\lim_{\seqiter \rightarrow \infty} \sup\frac{\bar{J}(\Ymat_{\seqiter}) - \bar{J}(\Ymat_0)}{t_\seqiter} 
&\le D_Y J(\Ymat_0, \Qmat_0)[\Umat]
\end{align*}
Since this holds for all $\Qmat_0 \in S(\Ymat_0)$, then we must have 
\begin{align*}
\lim_{\seqiter \rightarrow \infty} \sup\frac{\bar{J}(\Ymat_{\seqiter}) - \bar{J}(\Ymat_0)}{t_\seqiter} 
&\le \min_{\Qmat \in S(\Ymat_0)}D_Y J(\Ymat_0, \Qmat)[\Umat]
\end{align*}
For the other direction, for each $\Umat$, 
there is convergent sequence $\{\Qmat_i\}$ to some $\Qmat_0 \in S(\Ymat_0)$, where $\Qmat_0$ is likely different
depending on $\Umat$.
Then 
\begin{align*}
\frac{\bar{J}(\Ymat_{\seqiter}) - \bar{J}(\Ymat_0)}{t_{\seqiter}} 
&= \frac{J(\Ymat_{\seqiter}, \Qmat_{\seqiter}) - J(\Ymat_0, \Qmat_0)}{t_{\seqiter}} \\
&= \frac{J(\Ymat_{\seqiter}, \Qmat_{\seqiter}) - J(\Ymat_{0}, \Qmat_\seqiter)}{t_{\seqiter}} + \frac{J(\Ymat_{0}, \Qmat_\seqiter) - J(\Ymat_0, \Qmat_0)}{t_{\seqiter}} \\
&\ge \frac{J(\Ymat_{\seqiter}, \Qmat_{\seqiter}) - J(\Ymat_{0}, \Qmat_\seqiter)}{t_{\seqiter}}\\
&= D_Y J(\Ymat_0 + t_{\seqiter}' \Umat, \Qmat_\seqiter)[\Umat] 
\end{align*}
for some $0 \le t'_\seqiter \le t_\seqiter$. 
Because $D_Y J$ is continuous in both arguments, 
$\lim_{\seqiter \rightarrow \infty} \inf D_Y J(\Ymat_0 + t_{\seqiter}' \Umat, \Qmat_\seqiter)[\Umat] 
= D_Y J(\Ymat_0, \Qmat_0) [\Umat]$.  
Taking now the limit inferior of the left as well, 
\begin{align*}
\lim_{\seqiter \rightarrow \infty} \inf \frac{\bar{J}(\Ymat_{\seqiter}) - \bar{J}(\Ymat_0)}{t_\seqiter} 
&\ge D_Y J(\Ymat_0,\Qmat_0)[\Umat] \\
&\ge \min_{\Qmat \in S(\Ymat_0)}D_Y J(\Ymat_0, \Qmat)[\Umat]
.
\end{align*}
Therefore, the limit inferior and superior converge to the same point, and we have
that the directional derivative for $\bar{J}(\Ymat_0)$ exists and 
\begin{equation*}
D_Y\bar{J}(\Ymat_0)[\Umat] = \min_{\Qmat \in S(\Ymat_0)} D_Y J(\Ymat_0, \Qmat)[\Umat]
.
\end{equation*} 
Because $\Ymat_0$ satisfies the induced regularization property, regardless of $\Qmat \in S(\Ymat_0)$ the resulting
$\Dmat_0 \Amat, \Bmat \Hmat_0$ is a stationary point and so $\zerovec \in \partial \bar{J}(\Ymat_0)$.
Therefore, $\Ymat_0$ is a stationary point of $\bar{J}$, which corresponds to the objective in \eqref{eq_inducedaltq}.
%

By showing that $\Dalt, \Halt$ is a stationary point of \eqref{eq_inducedaltq} for optimal $\Amat,\Bmat$, 
this shows that $\Dalt, \Halt$ is a stationary point of \eqref{eq_inducedk}. This is because in a small
ball around $\Dalt, \Halt$, the objective $ \jointloss(\Dalt \Halt) + \regzk(\Dalt\Halt)$ is equal to $ \jointloss(\Dalt \Halt) +  \min_{\inlinevec{\Amat}{\Bmat} \in \cset(\Dalt, \Halt)} \regd(\Dalt \Amat) + \regh(\Bmat\Halt)$, as the matrices
remain full rank in a small ball around $\Dalt, \Halt$. 
By Theorem \ref{thm_convex},  
$(\Dalt, \Halt)$ is a global minimum of \eqref{eq_inducedk}
and so must also be a global minimum of \eqref{eq_generic}.
\end{proof}

Now we show one setting that satisfies the induced regularization property.\footnote{We have derived one other
setting that removes the requirement on the loss; however, the conditions are more difficult to verify
and so we only include this result in the appendix (see Appendix \ref{app_prop}). We hope, nonetheless, that it can provide a useful step for future results showing the induced regularization property.} We require the regularizers to be strictly convex and the loss to be Holder continuous. 
The $\ell_2$, the elastic net and some of the smooth approximations, including the pseudo-Huber loss, all result in strictly convex regularizers on $\Dmat$ and $\Hmat$. The following result, therefore, demonstrates that full-rank local minima for a broad class of induced \RFMs\ satisfy the induced regularization property, implying that local minima are global minima. 
\begin{proposition}[Induced regularization property for strictly convex regularizers]\label{prop_strongly}
Assume that 
\begin{enumerate}
\item $\fcol$ and $\frow$ are strictly convex
\item the loss is Holder continuous: for some $m > 0$, $| \jointloss(\joint_1) -  \jointloss(\joint_2) | \le m \| \joint_1 - \joint_2 \|_F^2$
\item $\xdim \leq k \leq \nsamples$.
\end{enumerate}
If 
 \begin{enumerate}[itemindent=0.2cm]
 \vspace{-0.3cm}
 \item $(\Dalt,\Halt)$ is a local minimum of \eqref{eq_generic}
 \item $\Dalt$ and $\Halt$ are full rank
 \end{enumerate}
then $(\Dalt, \Halt)$ satisfies the induced regularization property.
\end{proposition}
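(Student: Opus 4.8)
The plan is to use the fact that on the fiber of factorizations with a fixed product the loss is frozen, so that the induced regularization property reduces to a pure statement about the regularizers, and then to combine the first-order conditions coming from the local minimum with strict convexity to show that $(\Dalt,\Halt)$ already realizes the minimal regularization. First I would record the stationarity conditions. Writing $\bar\joint = \Dalt\Halt$ and $\Gmat = \nabla\jointloss(\bar\joint)$, differentiability (Assumption 1) and local optimality give
\begin{equation*}
\Gmat\Halt^\top + \nabla\regd(\Dalt) = \zerovec, \qquad \Dalt^\top\Gmat + \nabla\regh(\Halt) = \zerovec .
\end{equation*}
Since the loss depends on $(\Dmat,\Hmat)$ only through the product, it is constant along the fiber $\{(\Dmat_k,\Hmat_k):\Dmat_k\Hmat_k=\bar\joint\}$, so $(\Dalt,\Halt)$ being a local minimum of \eqref{eq_generic} forces it to be a local minimum of $\regd+\regh$ on that fiber. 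The two clauses of Definition \ref{def_irp} are then (a) that this local minimum is the \emph{global} minimum of $\regd+\regh$ over the fiber, and (b) that every minimizer is a stationary point of \eqref{eq_generic}.

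For (a) I would introduce the linearized surrogate $\Psi(\Dmat,\Hmat) = \regd(\Dmat)+\regh(\Hmat)+\inner{\Gmat}{\Dmat\Hmat}$. Two features make it the right object. First, for any factorization of $\bar\joint$ we have $\regd+\regh = \Psi - \inner{\Gmat}{\bar\joint}$, so if $(\Dalt,\Halt)$ globally minimizes $\Psi$ then it globally minimizes $\regd+\regh$ over the fiber; convexity of $\jointloss$ (the bound $\jointloss(\Dmat\Hmat)\ge \jointloss(\bar\joint)+\inner{\Gmat}{\Dmat\Hmat-\bar\joint}$) is what justifies passing through this linear surrogate. Second, $\Psi$ \emph{decouples across atoms}, $\Psi = \sum_{i=1}^{\ldim}\big(\fcol^2(\Dmat_{:i})+\frow^2(\Hmat_{i:})+\Dmat_{:i}^\top\Gmat\Hmat_{i:}^\top\big)$, and the stationarity equations above say exactly that each pair $(\Dalt_{:i},\Halt_{i:})$ is a critical point of the common single-atom function $\psi(\dvec,\hvec)=\fcol^2(\dvec)+\frow^2(\hvec)+\dvec^\top\Gmat\hvec$. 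Thus (a) is reduced to the single-atom statement that $(\Dalt_{:i},\Halt_{i:})$ globally minimizes $\psi$, where strict convexity of $\fcol,\frow$ is used to guarantee isolated minimizers and the continuity control on $\jointloss$ (bounding $\Gmat$) is used to keep each $\psi$ bounded below.

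For (b) I would note that every minimizer $(\Dmat_k,\Hmat_k)$ has the same product $\bar\joint$, hence the same loss gradient $\Gmat$, and is a KKT point of the fiber-constrained problem with some multiplier $\Lambda_k$ satisfying $\nabla\regd(\Dmat_k)=\Lambda_k\Hmat_k^\top$ and $\nabla\regh(\Hmat_k)=\Dmat_k^\top\Lambda_k$. Substituting the first into $\nabla_{\Dmat}[\jointloss+\regd+\regh]=\Gmat\Hmat_k^\top+\nabla\regd(\Dmat_k)=(\Gmat+\Lambda_k)\Hmat_k^\top$ shows the point is stationary for \eqref{eq_generic} precisely when $\Lambda_k=-\Gmat$. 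Strict convexity of the regularizers makes the induced value function differentiable at $\bar\joint$, so the multiplier is unique across all minimizers and equals the one exhibited by $(\Dalt,\Halt)$, namely $-\Gmat$; this yields stationarity of every minimizer and closes the property.

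I expect the main obstacle to be the single-atom claim underlying (a): showing that the critical point $(\Dalt_{:i},\Halt_{i:})$ is the \emph{global} minimizer of the non-convex $\psi$ over all of $\RR^\xdim\times\RR^\nsamples$, rather than merely a critical point (the origin is always a competing critical point with value zero). This is a biconvex-but-not-jointly-convex problem, and it is exactly here that the local-minimum hypothesis must be fed in to select $(\Dalt_{:i},\Halt_{i:})$ over the trivial critical point, while strict convexity (ruling out degenerate flat directions, relevant because for $\ldim>\xdim$ the second factor carries $\ldim-\xdim$ directions of freedom) and the loss bound (ensuring $\psi$ is coercive) must be combined to rule out any lower-value configuration; making this reduction rigorous, and verifying that $\min\Psi$ is finite and attained at $(\Dalt,\Halt)$, is the technical heart of the argument.
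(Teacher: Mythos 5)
Your plan correctly isolates the right question---whether the local minimum of $\regd+\regh$ on the fiber $\{(\Dmat_k,\Hmat_k):\Dmat_k\Hmat_k=\bar\joint\}$ is a global one on that fiber---but the route you propose for answering it has a genuine gap, and you acknowledge as much in your last paragraph. The atom-decoupled surrogate $\psi(\dvec,\hvec)=\fcol^2(\dvec)+\frow^2(\hvec)+\dvec^\top\Gmat\hvec$ is a nonconvex (biconvex) function whose only information from your hypotheses is that each $(\Dalt_{:i},\Halt_{i:})$ is a critical point of it. Criticality of $\psi$ at each atom does not yield global minimality of $\psi$, and the local-minimum hypothesis on the coupled objective \eqref{eq_generic} cannot be ``fed in'' at the single-atom level: local minimality only lets you test perturbations of the existing columns and rows, whereas $\min_{\dvec,\hvec}\psi(\dvec,\hvec)\ge\psi(\Dalt_{:i},\Halt_{i:})$ is a statement about arbitrary new rank-one atoms. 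Indeed, the condition $\inf_{\dvec,\hvec}\psi\ge 0$ is essentially the polar/dual optimality condition for the induced convex problem, i.e.\ it is equivalent to the global optimality you are ultimately trying to establish, so deriving it from per-atom stationarity is circular in spirit. It is also stronger than what Definition \ref{def_irp} requires, which only concerns the fiber. A symptom of the problem is that your argument never actually uses the Holder continuity assumption for anything essential (you invoke it only to ``bound $\Gmat$'' and make $\psi$ coercive), whereas it is the quantitative engine of the result.

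The paper's proof stays on the fiber and argues by a local perturbation. Because $\Dalt,\Halt$ are full rank (and $\Dalt$ has no zero column, which is shown first), every factorization of $\bar\joint$ can be written as $\Dalt\Qmat^{-1}$ and $\Qmat\Halt+\Qmat\Bmat_1\Halt$ with $\Qmat$ invertible and $\Bmat_1\Halt$ in the nullspace of $\Dalt$. Supposing some such pair strictly decreases $\regd+\regh$, one steps a fraction $t$ toward it along straight lines in each factor separately. The product then leaves the fiber only by an amount $s(t)$ of order $t(1-t)$, so the Holder condition $|\jointloss(\joint_1)-\jointloss(\joint_2)|\le m\|\joint_1-\joint_2\|_F^2$ bounds the loss increase by $O(t^2(1-t)^2)$; meanwhile Lemma \ref{lem_strongly} converts strict convexity of $\fcol,\frow$ (plus full rank and nonzero columns) into local strong convexity of $\regd,\regh$, so the regularizers decrease by $\Omega(t)$ along the segment whenever the endpoint is strictly better. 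For small $t$ the first-order gain beats the second-order damage, contradicting local minimality; hence $\Qmat=\eye$, $\Bmat_1=\zerovec$ is optimal, and any other optimal $\Qmat$ is an invariance of the regularizers whose descent directions would transfer back to $(\Dalt,\Halt)$, giving the stationarity clause. If you want to salvage your approach, you would need to replace the global-minimality-of-$\psi$ step with an argument of exactly this local, fiber-restricted type; your part (b) via uniqueness of the KKT multiplier is plausible but would also need the differentiability of the value function at $\bar\joint$ to be justified rather than asserted.
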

\begin{proof}
First notice that $\Dalt$ cannot have a zero column. Otherwise, the corresponding row
in $\Halt$ could be set to zero. There would be a clear descent direction pushing that row
in $\Halt$ to zero (to minimize $\regh$), without changing the loss or regularizer on $\Dalt$. 
Since $\Dalt,  \Halt$ is a local minimum and $\Halt$ is full rank, this is not possible and so $\Dalt$
cannot have a zero column.

We can further constrain that $\Amat$ is also full rank. 
Notice that after some elementary column operation $\mathbf{O} \in \RR^{\ldim \times \ldim}$ on $\Dalt$, the first $\xdim$ columns of the resultant matrix $\hat{\Dmat} = \Dalt\mathbf{O}$ are full rank and the rest columns are zero. Therefore any possible $\Dmat \in \RR^{\xdim \times \ldim}$ can be written as $\hat{\Dmat}\mathbf{P}$, where $\mathbf{P}\in \RR^{\ldim \times \ldim}$. Since $\Dmat\Hmat$ are full rank, then $\Dmat$ must be full rank thus the first $\xdim$ rows of $\mathbf{P}$ are full rank, and the rest rows of $\mathbf{P}$ can be any vectors as they will not affect $\hat{\Dmat}\mathbf{P}$. Without losing any generalization, let's say for the first $\xdim$ row of $\mathbf{P}$, the first $\xdim$ columns are linearly independent. If for the last $\ldim-\xdim$ rows of $\mathbf{P}$, the first $\xdim$ columns are all zero and the rest columns are linearly independent, then $\mathbf{P}$ must be full rank. Since $\mathbf{O}$ is full rank as well, $\mathbf{O}\mathbf{P}$ is full rank. Therefore, we can assume that $\Amat$ is full rank.     

Since $\Amat = \Qmat^\inv$ for some full rank $\Qmat^\inv$, we can now consider $\Bmat$.
Notice that $\Dalt \Qmat^\inv \Bmat \Halt = \Dalt \Halt$ means we can rewrite 
\begin{equation}
\Bmat \Halt = \Qmat \Halt + \Qmat\Bmat_1 \Halt
\label{eq_nullspace}
\end{equation}
for some component $\Bmat_1 \Halt$ in the nullspace of $\Dalt$. Because $\Halt$ is full rank, this means
that $\Bmat_1$ can be written as a linear combination of the bottom $\ldim -\xdim$ right singular vectors of $\Dmat$. 

This results in a slightly different reformulation, 
\begin{align}
\regzk(\Dmat \Hmat)
&= \frac{1}{2} \min_{\substack{\Dmat_k \in \RR^{\xdim \times \ldim}, \Hmat_k \in \RR^{\ldim \times \nsamples}\\ \Dmat \Hmat = \Dmat_k \Hmat_k}}  \regd(\Dmat_k) + \regh(\Hmat_k) \nonumber\\ 
&= \frac{1}{2} \min_{\substack{\Qmat \in \RR^{\ldim \times \ldim}: \Qmat \text{ full rank}\\\Bmat_1 \in \RR^{\ldim \times \ldim}: \Bmat_1 \Hmat \in \text{nullspace}(\Dalt)}}  \regd(\Dmat \Qmat^{-1}) + \regh(\Qmat\Hmat + \Qmat\Bmat_1 \Hmat) \label{eq_frobform}
\end{align}
%
%
Let $\Qmatopt, \Bmatopt_1$ be an optimum for this reformulation. 
If $\Qmatopt, \Bmatopt_1$ strictly decreases the regularization component, 
we know $\regd(\Dalt \Qmatopt^\inv) + \regh(\Qmatopt \Halt + \Qmatopt\Bmatopt_1\Hmat) < \regd(\Dalt) + \regh(\Halt)$.
We will show that this leads to a contradiction, and so $\Qmatopt = \eye, \Bmatopt_1 = \zerovec$ must be a solution. 

Define directions 
$(1-t) \Dalt + t \Dalt \Qmatopt^\inv = \Dalt + t (\Dalt \Qmatopt^\inv - \Dalt)$ and
$(1-t) \Halt + t \Bmatopt\Halt$.
We show that the improvement in the regularizers from stepping in these directions is greater than the loss incurred from the Holder continuous loss. To see why, notice that 
\begin{align*}
\Dalt ((1-t) \eye + t \Qmatopt^\inv)((1-t) \eye + t \Bmatopt)\Halt
&= (1-t)^2 \Dalt \Halt + (1-t)t \Dalt \Bmatopt\Halt + (1-t)t \Dalt \Qmatopt^\inv \Halt + t^2 \Dalt \Qmatopt^\inv \Bmatopt\Halt\\
&= (1-2t + t^2) \Dalt \Halt + (1-t)t \Dalt \Bmatopt\Halt + (1-t)t \Dalt \Qmatopt^\inv \Halt + t^2 \Dalt \Halt\\
&= (1-2t + 2t^2) \Dalt \Halt + (1-t)t (\Dalt \Bmatopt\Halt + \Dalt \Qmatopt^\inv \Halt )\\
&= \Dalt \Halt + t (1-t) (\Dalt \Bmatopt\Halt - \Dalt \Halt + \Dalt \Qmatopt^\inv \Halt -\Dalt \Halt )
\end{align*}
and so the additional term that could increase the loss is 
\begin{align*}
s(t) = t (1-t) (\Dalt \Bmatopt\Halt - \Dalt \Halt + \Dalt \Qmatopt^\inv \Halt -\Dalt \Halt )
.
\end{align*}
%
We show in Lemma \ref{lem_strongly} in Appendix \ref{app_strongly} that $\regd$ and $\regh$ are strongly convex around $\Dalt, \Halt$ because $\Dalt, \Halt$ are full rank, i.e., there exist $m_d, m_h > 0$ such that
\begin{align}
\regd((1-t)\Dalt + t\Dmat) &\le
(1-t)\regd(\Dalt) + t\regd(\Dmat) - \frac{m_d}{2} t (1-t) \| \Dalt - \Dmat\|_F^2 \label{eq_sc_d}\\
\regh((1-t)\Halt + t\Hmat) &\le
(1-t)\regh(\Halt) + t\regh(\Hmat) - \frac{m_h}{2} t (1-t) \| \Halt - \Hmat\|_F^2  \label{eq_sc_h}
\end{align}
for any $\Dmat, \Hmat$ for a sufficiently small $t$.
Now because $\jointloss$ is Holder continuous, we get
\begin{align*}
&\jointloss(\Dalt \Halt + s(t)) - \jointloss(\Dalt \Halt) \\
&\le m \| \Dalt \Halt + s(t) - \Dalt \Halt \|_F^2\\
&= m \| s(t) \|_F^2\\
&= m t^2 (1-t)^2 \| \Dalt \Bmatopt\Halt - \Dalt \Halt + \Dalt \Qmatopt^\inv \Halt -\Dalt \Halt \|_F^2\\
&\le m t^2 (1-t)^2 \left(\| \Dalt \Bmatopt\Halt - \Dalt \Halt \|_F^2 + \|\Dalt \Qmatopt^\inv \Halt -\Dalt \Halt \|_F^2 \right)\\
&\le m t^2 (1-t)^2 \| \Dalt \|_F^2 \|\Bmatopt\Halt - \Halt \|_F^2 + m t^2 (1-t)^2 \|\Dalt \Qmatopt^\inv  - \Dalt\|_F^2 \|\Halt \|_F^2 \\
&\le c_d(t) \left[(1-t)\regd(\Dalt) + t\regd(\Dalt \Qmatopt^\inv) - \regd((1-t)\Dalt + t\Dalt \Qmatopt^\inv) \right] &&\triangleright \text{ by \eqref{eq_sc_d} and \eqref{eq_sc_h}, with}\\
& \ \ + c_h(t) \left[(1-t)\regh(\Halt) + t\regh(\Bmatopt\Halt) - \regh((1-t)\Halt + t \Bmatopt\Halt) \right] && \ \ \ c_d(t) = \frac{2mt(1-t)}{m_d}\|\Halt \|_F^2\\
&\ && \ \ \ c_h(t) = \frac{2mt(1-t)}{m_h}\|\Dalt \|_F^2\\
&< c_d(t) \left[\regd(\Dalt) - \regd((1-t)\Dalt + t\Dalt \Qmatopt^\inv) \right]  &&\triangleright \regd(\Dalt) > \regd(\Dalt \Qmatopt^\inv)\\
& \ \ + c_h(t) \left[\regh(\Halt) - \regh((1-t)\Halt + t\Bmatopt\Halt) \right]  &&\triangleright \regh(\Halt) > \regh(\Bmatopt\Halt)
\end{align*}
For a small enough $\Delta$, $c_d(t) < 1$ and $c_h(t) < 1$ for all $t < \Delta$. Therefore, for a sufficiently small $t < \Delta$
\begin{align*}
\jointloss(\Dalt \Halt + s(t)) - \jointloss(\Dalt \Halt) &< \regd(\Dalt) - \regd((1-t)\Dalt + t\Dalt \Qmatopt^\inv) \\
&+ \regh(\Halt) - \regh((1-t)\Halt + t\Bmatopt\Halt))
.
\end{align*}
We can conclude that the potential increase in the loss term from stepping in this direction is less than the reduction in 
the regularization terms. Therefore, there exists a direction from $\Dalt, \Halt$ that strictly decreases the loss, which contradicts
the fact that $\Dalt, \Halt$ is a local minimum. 
Consequently, $\Qmat = \eye$ and $\Bmatopt_1 = \zerovec$ is one solution to \eqref{eq_frobform}.  

Finally, it is possible that there could be other $\Qmatopt, \Bmatopt_1$ that do not decrease the regularizer. Because the regularizer $\regh$ is strictly convex and $\Halt$ is full rank, the solutions $\Qmatopt$ and $\Bmatopt_1$ are unique up to invariances in the regularizers (e.g., the Frobenius norm is invariant under orthonormal matrices). This further implies that $\Bmatopt_1 = \zerovec$, as the regularizers are centered functions. Consider now such a solution $\Qmat, \Bmat_1 = \zerovec$, and assume that
there is a descent direction $d\Dmat, d\Hmat$ from $\Dalt \Qmat^\inv, \Qmat \Halt$. 
Notice that 
\begin{align*}
(\Dalt \Qmat^\inv + d\Dmat)(\Qmat \Halt + d\Hmat) 
&= 
(\Dalt+ d\Dmat \Qmat) \Qmat^\inv \Qmat (\Halt + \Qmat^\inv d\Hmat)\\
&=
(\Dalt+ d\Dmat \Qmat)(\Halt + \Qmat^\inv d\Hmat)
\end{align*}
giving $d\Dmat \Qmat,  \Qmat^\inv d\Hmat$ as a possible descent direction from $\Dalt, \Halt$.
Because the solution is unique up to invariances in the regularizer, 
$\regd(\Dalt \Qmat^\inv + d\Dmat) = \regd((\Dalt + d\Dmat \Qmat )\Qmat^\inv) =  \regd(\Dalt + d\Dmat \Qmat)$
and  
$\regh(\Qmat \Halt + d\Hmat) = \regh(\Qmat (\Halt + \Qmat^\inv d\Hmat)) = \regh(\Halt + \Qmat^\inv d\Hmat)$.
If $d\Dmat, d\Hmat$ is a descent direction for $\Dalt \Qmat^\inv, \Qmat \Halt$, 
then $d\Dmat \Qmat,  \Qmat^\inv d\Hmat$ must also be a descent direction for $\Dalt, \Halt$.
This would be a contradiction, since $\Dalt, \Halt$ is a local minimum. Therefore, any solution $\Dalt \Qmatopt^\inv, \Qmatopt \Halt$
must also be a stationary point. 
 
 Therefore, $\Dalt, \Halt$ satisfies the induced regularization property, because $\regd(\Dalt) + \regh(\Halt)$ cannot
 be improved by selecting different $\Dmat_k, \Hmat_k$ that satisfy $\Dmat_k \Hmat_k = \Dalt \Halt$
 and all possible solutions are stationary points. 
\end{proof}

We can use this more general result to show that local minima are global minima for smoothed versions of 
the elastic net and the $\ell_1$ for sparse coding. The theory requires differentiable regularizers, so we can only currently
characterize smoothed approximations to these objectives. Because these smooth approximations converge to the non-smooth regularizer, according to a parameter $\mu \rightarrow 0$, 
this suggests that the theoretical result should extend to the original non-smooth formulation. The smooth elastic-net objective
has $\xdim \le \ldim \le \nsamples$ and pseudo-Huber loss within the elastic net regularizer on $\Hmat$: 
\begin{equation}
\frow^2(\hvec) = \nuh \| \hvec \|_2^2 + (1-\nuh) \left(\sum_{\sampiter=1}^\nsamples \sqrt{\mu^2 + \hvec_\sampiter^2} - \mu \right)^2
. 
\label{eq_smoothelastic}
\end{equation}
As $\mu \rightarrow 0$, this smooth regularizer converges to the elastic-net regularizer: $\nuh \| \hvec \|_2 + (1-\nuh) \| \hvec \|_1^2$. Additionally, for $\nu = 0$, the below corollary characterizes a smooth approximation to the sparse coding objective. Any strictly convex regularizer $\regd$ can be used for $\Dmat$, where a common choice is $\fcol(\cdot) = \| \cdot \|_2$ to give $\regd(\Dmat) = \frobsq{\Dmat}$. 
\begin{corollary}[Elastic-net regularizer or Sparse regularizer on $\Hmat$]\label{cor_en}
Consider the (smooth) elastic-net objective, with $\xdim \le \ldim \le \nsamples$, convex Holder continuous loss, any centered strongly convex $\fcol$ and $\regh(\Hmat) = \regwgt \sum_{i=1}^\ldim \frow^2(\Hmat_{i:})$ with $\frow$ as defined in \eqref{eq_smoothelastic}.
Then for sufficiently large $\ldim$, such that $\regzk$ is convex, if 
 \begin{enumerate}[itemindent=0.5cm]
 \item $(\Dalt,\Halt)$ is a local minimum and
 \item $\Dalt$ and $\Halt$ are full rank
 \end{enumerate}
then $(\Dalt, \Halt)$ is a global minimum. 
\end{corollary}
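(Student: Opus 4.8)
The plan is to read Corollary~\ref{cor_en} as a specialization of Proposition~\ref{prop_strongly} feeding into Theorem~\ref{thm_irfm}. Because the loss is convex and differentiable and the regularizer pieces $\fcol^2$ and $\frow^2$ built from \eqref{eq_smoothelastic} are differentiable, Assumption~1 holds and any local minimum $(\Dalt,\Halt)$ is automatically a stationary point of \eqref{eq_generic}. Theorem~\ref{thm_irfm} then gives global optimality as soon as its three conditions hold: $\regzk$ convex, $\Dalt,\Halt$ full rank, and the induced regularization property of Definition~\ref{def_irp}. The first two are assumed in the corollary, so the entire content is to produce the induced regularization property, and for that I would invoke Proposition~\ref{prop_strongly}.

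So the main task reduces to checking the hypotheses of Proposition~\ref{prop_strongly}: Holder continuity of the loss and $\xdim\le\ldim\le\nsamples$ are assumed, and strict convexity of $\fcol$ follows from the strong-convexity assumption (for instance $\fcol(\cdot)=\|\cdot\|_2$, whose square gives the strongly convex $\regd(\Dmat)=\frobsq{\Dmat}$). The one nontrivial point is strict convexity of the smooth elastic-net $\frow$ in \eqref{eq_smoothelastic}. I would first observe that each pseudo-Huber summand $x\mapsto\sqrt{\mu^2+x^2}-\mu$ has second derivative $\mu^2(\mu^2+x^2)^{-3/2}>0$, so the penalty $g(\hvec)=\sum_\sampiter(\sqrt{\mu^2+\hvec_\sampiter^2}-\mu)$ is strictly convex; for the sparse case $\nuh=0$ one has $\frow=g$ and strict convexity is immediate.

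For $0<\nuh<1$, convexity of $\frow=\sqrt{\nuh\|\cdot\|_2^2+(1-\nuh)g^2}$ is already guaranteed by Proposition~\ref{prop_convex}, so only the way a square-root-of-squares can fail strict convexity---going affine along a ray through the origin, where an $\ell_2$ term degenerates---must be excluded. Restricting to a ray, $h(s)=\frow(s\uvec)$ with $h^2=\nuh s^2+(1-\nuh)g^2$, a double differentiation and clearing of denominators gives the identity $h^3 h''=\nuh(1-\nuh)\big[(sg'-g)^2+s^2 g g''\big]+(1-\nuh)^2 g^3 g''$, whose right-hand side is strictly positive for $s\neq0$ since $g>0$ and $g''>0$ off the origin; in every non-radial direction the $\nuh\|\cdot\|_2^2$ term supplies strict positivity directly. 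Hence $\frow$ is strictly convex for $\nuh<1$ (the boundary value $\nuh=1$ is the pure-subspace case already covered by the trace-norm reformulation). Proposition~\ref{prop_strongly} then yields the induced regularization property at the full-rank local minimum, and Theorem~\ref{thm_irfm} converts it to a global minimum.

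I expect the only real obstacle to be this strict-convexity check for $\frow$: a square root of a sum of squares is generically just convex, and its $\ell_2$ component is exactly linear---hence not strictly convex---along radial directions, so one must verify that the pseudo-Huber curvature $g''>0$ restores strictness precisely where the $\ell_2$ part degenerates, which is what the displayed identity does. The rest is routine bookkeeping: confirming differentiability so that Assumption~1 and the passage from local minimum to stationary point apply, and matching the dimension condition $\xdim\le\ldim\le\nsamples$ with the standing requirement that $\ldim$ be large enough for $\regzk$ to be convex.
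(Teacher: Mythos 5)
Your proposal is correct and follows essentially the same route as the paper, which states Corollary~\ref{cor_en} as an immediate consequence of Proposition~\ref{prop_strongly} (to obtain the induced regularization property at a full-rank local minimum) followed by Theorem~\ref{thm_irfm}, without giving a separate proof. The one piece you add---the explicit check that the smoothed elastic-net $\frow$ of \eqref{eq_smoothelastic} is strictly convex, including the radial directions where the $\ell_2$ component degenerates and the pseudo-Huber curvature must take over---is a detail the paper leaves implicit, and your verification of it is sound.
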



\section{Avoiding saddlepoints}\label{sec_saddle}


To complete the characterization of using alternating minimization for induced \RFMs,
it is important to understand issues with saddlepoints. 
For non-convex optimizations, saddlepoints are problematic,
often corresponding to large flat regions and stalling convergence. 
For the biconvex optimization for induced \RFMs, there are clear symmetries that cause multiple
equivalent solutions and saddlepoints between solutions, such as in Figure \ref{fig:optimization}.
Our empirical investigation in the next section indicates that alternating minimization for induced \RFMs\
does not get stuck in saddle points. Nonetheless, we would like
a stronger guarantee of convergence to a local minimum,
which then corresponds to a global minimum.

To do so, we can take advantage
of recent characterization of non-convex problems using the strict-saddle property \cite{ge2015escaping,sun2015when,sun2015complete,lee2016gradient},
with the most general definition given by \citet{lee2016gradient}.
The requirement is simply that either the stationary point is a local minimum or the Hessian
has at least one strictly negative eigenvalue. 
Recall that Hessians at saddle points can have both positive and negative eigenvalues, and degenerate
saddlepoints are those that have positive semi-definite Hessians with zero eigenvalues. 
Therefore, another way to state the strict-saddle property is that there are no degenerate saddle points.
 \citet{lee2016gradient} recently proved that for twice continuously differentiable functions that satisfy the strict saddle property, 
 gradient descent with a random initialization and a sufficiently small constant step-size converges to a local minimizer.
 
 We conjecture that this is the case for induced \RFMs, and believe that this is the main reason that empirically induced \RFMs\ converge stably
to local minima, and so to global minima, even without additional noise or a particular initialization.
Towards theoretical motivation for this conjecture, 
we show that a subclass of induced \RFMs\ do not have degenerate saddlepoints, in Theorem \ref{thm_weighted}.
Further, as we showed for the induced optimization in \eqref{eq_inducedk}, the result is even stronger:
 all stationary points that are full rank are guaranteed to be global minima. 
 Though these two results provide some insight into saddlepoints properties,
 an important next step is to more fully characterize the saddlepoints for induced \RFMs.

\begin{theorem} \label{thm_weighted}
For invertible $\Diagmat \in \RR^{\xdim \times \xdim}$, $\regwgt \ge 0$ and $s > 0$, 
let $(\Dalt,\Halt)$ be a stationary point of
%
\begin{align}
\min_{\genrankset} \jointloss(\Dmat\Hmat) + \tfrac{\regwgt}{2} \frobsq{\Diagmat\Dmat} + \tfrac{\regwgt}{2 \hscale^2}  \frobsq{\Hmat} 
\label{eq_localweighted}
.
\end{align}
%
If the Hessian is positive semi-definite and $(\Dalt,\Halt)$ is rank-deficient 
then $(\Dalt, \Halt)$ is a global optimum.
\end{theorem}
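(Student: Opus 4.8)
The plan is to reduce \eqref{eq_localweighted} to the unweighted, unit-scale objective, extract a balancing identity from stationarity, and then prove a dichotomy: at a rank-deficient stationary point, either the loss gradient is spectrally bounded by $\regwgt$ (which certifies global optimality) or there is a direction of strictly negative curvature (which the positive semi-definite Hessian forbids). Since $\Diagmat$ is invertible, the linear reparametrization $\Dmat\mapsto\Diagmat\Dmat$, $\Hmat\mapsto\Hmat/\hscale$ turns the objective into $\tilde\jointloss(\Dmat\Hmat)+\tfrac{\regwgt}{2}\frobsq{\Dmat}+\tfrac{\regwgt}{2}\frobsq{\Hmat}$ with convex loss $\tilde\jointloss(\joint)=\jointloss(\hscale\Diagmat^{-1}\joint)$; a linear change of variables sends stationary points to stationary points, conjugates the Hessian by an invertible matrix (preserving its inertia, hence positive semi-definiteness) and preserves the ranks of the factors, so I may assume $\Diagmat=\eye$ and $\hscale=1$. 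Writing $\Gmat=\grad\jointloss(\Dalt\Halt)$, stationarity reads $\Gmat\Halt^\top=-\regwgt\Dalt$ and $\Dalt^\top\Gmat=-\regwgt\Halt$; left-multiplying the first identity by $\Dalt^\top$ and substituting the second yields the balancing identity $\Dalt^\top\Dalt=\Halt\Halt^\top$ (taking $\regwgt>0$; the case $\regwgt=0$ is handled separately). Balancing forces $\ker\Dalt=\ker\Halt^\top$ and $\rank\Dalt=\rank\Halt=:\rho$, and rank-deficiency gives $\rho<\ldim$, so the common null space $N=\ker\Dalt=\ker\Halt^\top$ has dimension $\ldim-\rho\ge1$.

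Next I would build the negative-curvature direction. Choose a unit vector $\uvec\in N$, so that $\Dalt\uvec=\zerovec$ and $\uvec^\top\Halt=\zerovec$, and for vectors $\vvec\in\RR^\xdim$, $\zvec\in\RR^\nsamples$ set $d\Dmat=\vvec\uvec^\top$ and $d\Hmat=\uvec\zvec^\top$. Then $\Dalt\,d\Hmat=\zerovec=d\Dmat\,\Halt$, so the first-order perturbation of the product vanishes and the $\grad^2\jointloss$ contribution to the Hessian quadratic form drops out; since $d\Dmat\,d\Hmat=\vvec\zvec^\top$, the Hessian form collapses to $Q=2\,\vvec^\top\Gmat\zvec+\regwgt\twonorm{\vvec}^2+\regwgt\twonorm{\zvec}^2$. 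Letting $\avec,\bvec$ be leading singular vectors of $\Gmat$ with $\avec^\top\Gmat\bvec=\spnorm{\Gmat}$ and taking $\vvec=-t\avec$, $\zvec=t\bvec$ gives $Q=2t^2(\regwgt-\spnorm{\Gmat})$. If $\spnorm{\Gmat}>\regwgt$ this is strictly negative, contradicting positive semi-definiteness of the Hessian; hence $\spnorm{\Gmat}\le\regwgt$.

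It then remains to show that $\spnorm{\Gmat}\le\regwgt$ implies $(\Dalt,\Halt)$ is globally optimal. For all $\Dmat,\Hmat$ the inequality $\tracenorm{\Dmat\Hmat}\le\tfrac12(\frobsq{\Dmat}+\frobsq{\Hmat})$ shows the factored objective dominates the convex function $g(\joint)=\jointloss(\joint)+\regwgt\tracenorm{\joint}$ evaluated at $\Dmat\Hmat$. This bound is tight at $(\Dalt,\Halt)$: balancing gives $\frobsq{\Dalt}=\frobsq{\Halt}=\tr(\Dalt^\top\Dalt)$, and writing a compact SVD $\Dalt\Halt=\Umat\Sigmamat\Vmat^\top$ of rank $\rho$ one checks $\tr(\Dalt^\top\Dalt)=\tracenorm{\Dalt\Halt}$, so the factored objective at $(\Dalt,\Halt)$ equals $g(\Dalt\Halt)$. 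Finally I would verify $\zerovec\in\partial g(\Dalt\Halt)$. Since $\col(\Umat)=\col(\Dalt)$ and $\col(\Vmat)=\col(\Halt^\top)$, the stationarity equations reduce to the on-support relations $\Gmat\Vmat=-\regwgt\Umat$ and $\Umat^\top\Gmat=-\regwgt\Vmat^\top$; combined with $\spnorm{\Gmat}\le\regwgt$ these are precisely the conditions for $-\Gmat/\regwgt\in\partial\tracenorm{\Dalt\Halt}$, i.e. $\zerovec\in\partial g(\Dalt\Halt)$. Thus $\Dalt\Halt$ minimizes the convex $g$ over all matrices, and since the factored objective is bounded below by $g$ and meets it at $(\Dalt,\Halt)$, the point is a global minimum.

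The conceptual key is the balancing identity, which forces $\ker\Dalt=\ker\Halt^\top$ and thereby supplies a single null direction $\uvec$ that simultaneously annihilates $\Dalt\,d\Hmat$ and $d\Dmat\,\Halt$ (killing the loss-curvature term) while realizing an arbitrary rank-one perturbation $\vvec\zvec^\top$ of the product. I expect the main technical obstacle to be the certificate step: cleanly deducing the on-support identities $\Gmat\Vmat=-\regwgt\Umat$ and $\Umat^\top\Gmat=-\regwgt\Vmat^\top$ from the two matrix stationarity equations, which requires using a balanced factorization to align the singular subspaces of $\Dalt\Halt$ with $\col(\Dalt)$ and $\col(\Halt^\top)$ and then cancelling the (invertible-on-support) singular-value factors. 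The degenerate case $\regwgt=0$ loses the balancing identity and would require a separate perturbation exploiting the rank deficiency to show $\Gmat=\zerovec$, after which convexity of $\jointloss$ finishes.
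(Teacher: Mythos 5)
Your proof is correct in its main line but follows a genuinely different route from the paper's. The paper (via Theorem \ref{thm_weighted_general} in Appendix \ref{app_weighted}) keeps the weighting $\Diagmat$ throughout, invokes the semidefinite characterization of $\tracenorm{\Diagmat\joint}$ as $\min\{\tfrac12\tr\Wmat_1+\tfrac12\tr\Wmat_2\}$ subject to a positive semi-definite block constraint, and verifies the full KKT system for the induced convex problem: it builds explicit primal variables $\Wmat_1=\Diagmat\Dalt\Dalt^\top\Diagmat^\top$, $\Wmat_2=\Halt^\top\Halt$ and dual multipliers from $\nabla\jointloss(\Dalt\Halt)$, checks complementary slackness from the traced stationarity identities, gets primal feasibility from a Gram-matrix factorization, and reduces dual feasibility via a Schur complement to the spectral bound $\spnorm{\Diagmat^{-\top}\nabla\jointloss(\Dalt\Halt)}\le\regwgt$, which it extracts from the positive semi-definite Hessian using perturbation directions built from the separate SVDs of $\Dalt$ and $\Halt$. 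You instead normalize away $\Diagmat$ and $\hscale$ by an invertible change of variables, derive the balancing identity $\Dalt^\top\Dalt=\Halt\Halt^\top$ from the two stationarity equations, and use the resulting common null vector $\uvec$ to form the rank-one perturbations $d\Dmat=\vvec\uvec^\top$, $d\Hmat=\uvec\zvec^\top$ that isolate the cross term $2\vvec^\top\Gmat\zvec$; you then certify optimality directly through the trace-norm subdifferential rather than through SDP duality. Both arguments hinge on the same spectral certificate, but yours buys three things: the balancing identity makes the negative-curvature direction unambiguous (the paper's choice of directions tacitly requires the bottom singular subspaces of $\Dalt$ and $\Halt$ to align, which is exactly what balancing guarantees); the subdifferential condition $\langle-\Gmat/\regwgt,\Dalt\Halt\rangle=\tracenorm{\Dalt\Halt}$ falls straight out of the traced stationarity identity, avoiding the block-matrix bookkeeping; and your last step explicitly closes the loop back to the factored objective via $\tracenorm{\Dmat\Hmat}\le\tfrac12(\frobsq{\Dmat}+\frobsq{\Hmat})$ with equality at the balanced point, whereas the paper's general theorem only asserts optimality of $\joint=\Dalt\Halt$ for the induced convex problem \eqref{eq_globalweighted}.

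The one place your plan is thinner than it looks is $\regwgt=0$. There the balancing identity disappears; $\ker\Dalt$ and $\ker\Halt^\top$ (both subspaces of $\RR^\ldim$) are each nontrivial by rank-deficiency but need not intersect, and if $\uvec_D\perp\uvec_H$ the cross term $\vvec^\top\Gmat\zvec\,(\uvec_H^\top\uvec_D)$ vanishes, so rank-one perturbations of that form tell you nothing; you would need directions satisfying only $\Dalt\,d\Hmat+d\Dmat\,\Halt=\zerovec$ rather than each term separately. The paper's own treatment of $\regwgt=0$ has the same soft spot, so this does not distinguish the two proofs, but it is the step to nail down if you write the argument out in full.
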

We provide a slightly more general theorem---Theorem \ref{thm_weighted_general}---and the proof in Appendix \ref{app_weighted}.

\begin{figure*}[tp]
  \centering
   \subfigure[Optimization surface]{\includegraphics[width=0.45\textwidth]{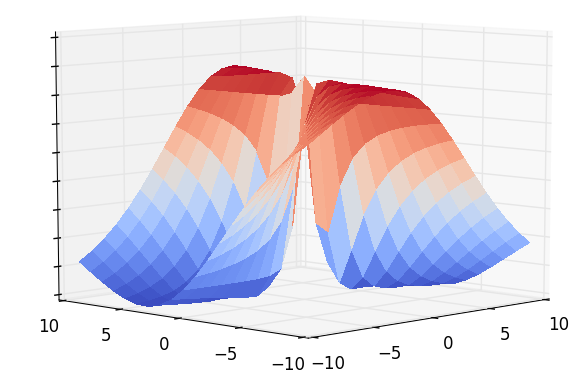}\label{fig:optimization}}
   \subfigure[Saddlepoint]{\includegraphics[width=0.45\textwidth]{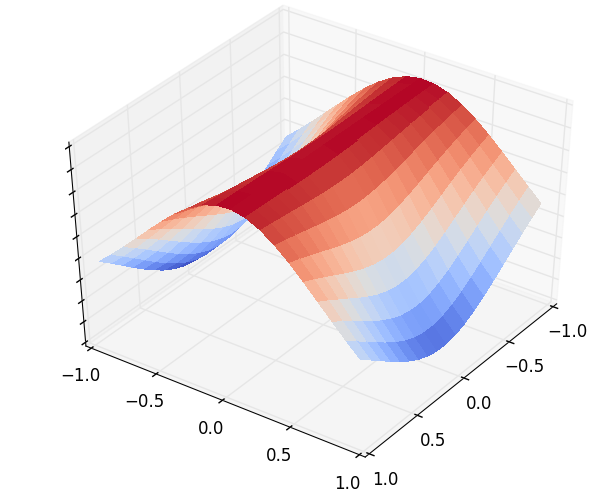}\label{fig:saddlepoint}}
 \caption{ Optimization surface for $\text{loss}(\Dmat) = \min_{\Hmat} \sum_{t=1}^\nsamples \| \Xmat_{:t} - \Dmat \Hmat_{:t} \|_2^2 + \regwgt \| \Hmat \|_F^2 + \regwgt \| \Dmat \|_F^2$ where $\Dmat \in \RR^{2 \times 1}$, $\nsamples = 20$ and $\regwgt = 0.5$. The first figure illustrates the symmetry of the problem, where there
 are two equivalent optimal $\Dmat$ in terms of the objective. The second figure is zoomed into the top of the optimization surface, indicating
 the saddle-point between these two optimal points.}
  \label{fig:optimization}
\end{figure*}

\section{Empirical evidence for optimality of alternating minimization for induced \RFMs}\label{sec_empirical}

We now empirically investigate the central hypothesis to this work, to complement the theoretical insights in the previous two sections. 
We show that for several induced \RFMs---including those not directly covered by the theoretical results---that alternating minimization seems to find global solutions. Further, we show that a couple of deviations oustide of this class do not appear to maintain this property.
This suggests that the class of induced \RFMs\ identifies a subclass of \RFMs\ that have well-behaved optimization surfaces. 
We first provide these empirical results and then conclude the section with details about the alternating minimization variants and implementation.
 
 \subsection{Optimality of alternating minimization for a variety of \RFMs}
 
To test for global optimality, the alternating minimization is started from different initial values
and differences in the solution reported. We report both differences in the objective value (Table \ref{table_batch})
and in the matrices themselves (Table \ref{table_batch_supplement}). 
The normed differences between the solutions $\joint = \Dmat \Hmat$ are reported to clarify
that similarities in objectives are due to similar solutions, rather than because the objective
itself does not change much. 
The initializations are random but of highly differing magnitudes 
to better search for different local minima and saddlepoints in which the optimization could get stuck. 
Small relative differences between objective values suggest that the
stationary points are in fact the same global minimum.

As baselines and to further elucidate if this global optimality property is characteristic of induced \RFMs,
we also test two 
modifications that take the objective
outside the class of induced \RFMs. 
The first modification is to use $\sum_{i=1}^\xdim \| \Dmat_{i:} \|_2$ as the regularizer on $\Dmat$,
which couples the columns of $\Dmat$ and so is not a valid regularizer for induced \RFMs. 
The second modification is to use the non-norm elastic net regularizer $\fcol^2(\dvec) = \nu \| \dvec \|_2^2 + (1-\nu) \| \Lambdamat \dvec \|_1$ \citep{zou2005regularization}, 
which does not satisfy the requirements of Proposition \ref{prop_generalization} for generalized induced \RFMs. 
The non-norm elastic net almost satisfies the requirements of Proposition \ref{prop_generalization}, because
$\fcol(\dvec) = \sqrt{\nu \| \dvec \|_2^2 + (1-\nu) \| \Lambdamat \dvec \|_1}$
is centered and non-negative; however, it is non-convex.
For $\Lambdamat = \eye$, however, $\fcol(\dvec)$ is almost a convex function,
with only a slight concave bow. We find, in fact, that with $\Lambdamat = \eye$, alternating minimization does seem to provide global solutions. 
However, with different choices of $\Lambdamat$, the $\fcol$ becomes more non-convex and alternating minimization is no longer
global. In the following experiments, we choose $\Lambdamat = \diag(1, 2, \ldots, \xdim)$. 


The results for the differences in objective values and in solution matrices are summarized in Table \ref{table_batch} and Table \ref{table_batch_supplement} respectively.
The induced \RFMs\ are in columns 1, 2 and 3 and the modified settings, which no longer correspond to induced \RFMs, are in columns 4, 5 and 6. 
The results are reported across many settings, including $d \in \{5, 10, 50\}$, $k \in \{3, 5, 10\}$ and 
$\regwgt \in \{0.005, 0.05, 0.5\}$, 
with a fixed sample size of $\nsamples = 100$
and least-squares $\ell_2$ loss.
The initial entries in the factors were randomly selected from unit-variance Gaussian distributions with increasing mean values $\mu \in \{0, 5, 10, 15, \ldots, 45\}$. 
For objective values the induced \RFMs\ have relative differences within 0.1\%, suggesting they are equivalent global optima, 
whereas the modified objectives have significantly larger relative differences, between 10\% to 150\%, demonstrating clearly different local minima. 
The relative differences between two solutions $\joint_1 = \Dmat_1 \Hmat_1$ and $\joint_2 = \Dmat_2 \Hmat_2$ are similarly small, 
though unsurprisingly larger than the objective values. Particularly for the elastic net, the larger the variable, the more opportunity
to select which entries in $\joint$ will be zeroed with similar objective values. The trend, however, remains consistent, with smaller relative differences
for the induced \RFMs. 

   \newcommand{\gtpbox}{30cm}
\setlength{\tabcolsep}{4pt}
\begin{table*}[h!]
\begin{center}
\begin{small}
\begin{sc}
\begin{tabular}{|l||c|c|c|c|c|c|c|}
\hline   & Subspace & Sparse & Elastic Net &  Elastic net &  Coupled cols & Coupled cols \\
 & \scriptsize $\fcol^2 = \| \cdot \|_2^2$ & \scriptsize $\fcol^2 = \| \cdot \|_1^2$  & \scriptsize $\nu \| \cdot \|_2^2 + (1-\nu) \|\cdot \|_1^2$ & (non-norm) & \scriptsize $\sum_{i=1}^\xdim \| \Dmat_{i:} \|_1^2$ & \scriptsize $\sum_{i=1}^\xdim \| \Dmat_{i:} \|_2$\\
 & \scriptsize $\frow^2 = \| \cdot \|_2^2$ &\scriptsize $\frow^2 = \| \cdot \|^2_1$  & \scriptsize $\nud = 0.5 = \nuh$ & \scriptsize $\nud = 0.5 = \nuh$ & \scriptsize $\frow^2 = \| \cdot \|_2^2$& \scriptsize $\frow^2 = \| \cdot \|_2^2$  \\
\hline
\pbox{\gtpbox}{\pboxspace\shortstack{Min\\ Difference} \pboxspace} & \pbox{\gtpbox}{\pboxspace\pbox{\gtpbox}{\pboxspace\shortstack{0.000000 \\ \scriptsize$(0.5,  5,  10)$}\pboxspace}\pboxspace} & \pbox{\gtpbox}{\pboxspace\shortstack{0.000000 \\ \scriptsize$(0.05,  5,  3)$}\pboxspace} & \pbox{\gtpbox}{\pboxspace\shortstack{0.000000 \\ \scriptsize$(0.5,  5,  3)$}\pboxspace} & \pbox{\gtpbox}{\pboxspace\shortstack{0.000007 \\ \scriptsize$(0.05,  50,  3)$}\pboxspace} & \pbox{\gtpbox}{\pboxspace\shortstack{0.004586 \\ \scriptsize$(0.5,  10,  3)$}\pboxspace} & \pbox{\gtpbox}{\pboxspace\shortstack{0.466634 \\ \scriptsize$(0.005,  5,  3)$}\pboxspace}\\
\hline
\pbox{\gtpbox}{\pboxspace\shortstack{Max\\ Difference} \pboxspace}&  \pbox{\gtpbox}{\pboxspace\shortstack{0.000785 \\ \scriptsize$(0.005,  5,  5)$}\pboxspace} & \pbox{\gtpbox}{\pboxspace\shortstack{0.000136 \\ \scriptsize$(0.5, 50, 10)$}\pboxspace} & \pbox{\gtpbox}{\pboxspace\shortstack{0.001269 \\ \scriptsize$(0.05, 10, 10)$}\pboxspace} & \pbox{\gtpbox}{\pboxspace\shortstack{1.535013 \\ \scriptsize$(0.005, 50, 3)$}\pboxspace} & \pbox{\gtpbox}{\pboxspace\shortstack{0.096123 \\ \scriptsize$(0.005,  5,  3)$}\pboxspace} & \pbox{\gtpbox}{\pboxspace\shortstack{0.566597 \\ \scriptsize$(0.05,  10,  3)$}\pboxspace}\\
\hline
\end{tabular}
\end{sc}
\end{small}
\caption{The minimum and maximum relative differences between objective values for the solutions found by alternating minimization,
with a least-squares loss and the given regularizers. 
The relative difference between a pair of solutions is computed by taking the absolute difference between their objective values divided by
the expected value of the objective across the solutions (to normalize the magnitude for changing $\xdim, \ldim$). 
The parameter settings that resulted in the minimum and maximum relative difference
are reported in brackets below the values with the order $(\regwgt, \xdim, \ldim)$. 
} \label{table_batch} 
\end{center}
\end{table*}

\setlength{\tabcolsep}{4pt}
\begin{table*}[h!]
\begin{center}
\begin{small}
\begin{sc}
\begin{tabular}{|l||c|c|c|c|c|c|c|}
\hline   & Subspace & Sparse & Elastic Net &  Elastic net &  Coupled cols & Coupled cols \\
 & \scriptsize $\fcol^2 = \| \cdot \|_2^2$ & \scriptsize $\fcol^2 = \| \cdot \|_1^2$  & \scriptsize $\nu \| \cdot \|_2^2 + (1-\nu) \|\cdot \|_1^2$ & (non-norm) & \scriptsize $\sum_{i=1}^\xdim \| \Dmat_{i:} \|_1^2$ & \scriptsize $\sum_{i=1}^\xdim \| \Dmat_{i:} \|_2$\\
 & \scriptsize $\frow^2 = \| \cdot \|_2^2$ &\scriptsize $\frow^2 = \| \cdot \|^2_1$  & \scriptsize $\nud = 0.5 = \nuh$ & \scriptsize $\nud = 0.5 = \nuh$ & \scriptsize $\frow^2 = \| \cdot \|_2^2$& \scriptsize $\frow^2 = \| \cdot \|_2^2$  \\
\hline
\pbox{\gtpbox}{\pboxspace\shortstack{Min \\ Difference} \pboxspace} & \pbox{\gtpbox}{\pboxspace\pbox{\gtpbox}{\pboxspace\shortstack{0.000000 \\ \scriptsize$(0.005,  5,  3)$}\pboxspace}\pboxspace} & \pbox{\gtpbox}{\pboxspace\shortstack{0.000000 \\ \scriptsize$(0.05, 5, 3)$}\pboxspace} & \pbox{\gtpbox}{\pboxspace\shortstack{0.000000 \\ \scriptsize$(0.5,  5,  3)$}\pboxspace} & \pbox{\gtpbox}{\pboxspace\shortstack{0.000000 \\ \scriptsize$(0.5, 5, 3)$}\pboxspace} & \pbox{\gtpbox}{\pboxspace\shortstack{0.007800 \\ \scriptsize$(0.005, 50, 3)$}\pboxspace} & \pbox{\gtpbox}{\pboxspace\shortstack{0.004000 \\ \scriptsize$(0.05, 5, 10)$}\pboxspace}\\
\hline
\pbox{\gtpbox}{\pboxspace\shortstack{Max\\ Difference} \pboxspace}  &  \pbox{\gtpbox}{\pboxspace\shortstack{0.005000\\ \scriptsize$(0.05, 10, 10)$}\pboxspace} & \pbox{\gtpbox}{\pboxspace\shortstack{0.023400 \\ \scriptsize$(0.05, 50, 3)$}\pboxspace}  & \pbox{\gtpbox}{\pboxspace\shortstack{0.124800 \\ \scriptsize$(0.5, 50, 10)$}\pboxspace} & \pbox{\gtpbox}{\pboxspace\shortstack{0.994400 \\ \scriptsize$(0.5, 50, 3)$}\pboxspace} & \pbox{\gtpbox}{\pboxspace\shortstack{0.631000 \\ \scriptsize$(0.5, 10, 10)$}\pboxspace} & \pbox{\gtpbox}{\pboxspace\shortstack{0.592000 \\ \scriptsize$(0.005, 50, 3)$}\pboxspace}\\
\hline
\end{tabular}
\end{sc}
\end{small}
\caption{The minimum and maximum relative norm differences between solutions found by alternating minimization,
with a least-squares loss and the given regularizers. 
The relative difference is a thresholded relative difference, illustrating the percentage of significantly different values.
The relative difference between two solutions is the number of entries above the threshold 0.05, in terms of
the absolute value of the difference between the two solutions $\joint_1 = \Dmat_1 \Hmat_1$
and $\joint_2 = \Dmat_2 \Hmat_2$, divided by the total number of elements. 
The parameter settings that resulted in the minimum and maximum are reported in brackets below the values with the order $(\regwgt, \xdim, \ldim)$.      
} \label{table_batch_supplement} 
\end{center}
\end{table*}

\subsection{Selecting the inner dimension}

The optimality of induced \RFMs\ has been predicated on selecting a sufficiently large inner dimension $\ldim$. 
Because $\ldim$ can be critical, it is important to understand how to set this parameter. 
For certain models, the selection of $\ldim$ is intuitive. 
For example, for subspace dictionary learning, $\ldim$ is the size of the desired latent rank, where $\ldim$ can be chosen smaller
for larger values of $\regwgt$. 
For sparse coding, the increase in the $\regwgt$ does not naturally lead to a decrease in $\ldim$,
but rather to an increase in the level of sparsity. 
For the elastic net regularizer, however, we have less intuition; 
there is likely a preference for $\xdim < \ldim < \nsamples$, but this remains unknown.

There are several strategies that can be pursued to facilitate choice of $\ldim$.
A simple approach is to allow the optimization to select $\ldim$ by setting $\ldim = \nsamples$. 
The optimal solution will have an implicit $\ldim$ that is smaller than $\nsamples$. This approach, however,
is typically not computationally feasible. To avoid setting $\ldim$ too large, a number of algorithms have been developed
that iteratively generate columns and rows to add to $\Dmat$ and $\Hmat$ respectively 
\cite{aspremont2004direct,bach2008convex,journee2010low,zhang2012accelerated,hsieh2014nuclear,mirzazadeh2015scalable}.
In practice, however, it is common to simply choose a fixed $\ldim$;
such a strategy, however, compromises some of the simplicity and efficiency of a standard alternating minimization algorithm,
initialized from a random solution. Despite the introduction of such algorithms, it remains common in practice to simply choose a fixed $\ldim$ and using alternating minimization. Further, it is less clear how to extend such algorithms, that iteratively increase $\ldim$, to the incremental setting. 

To facilitate the use of the simple alternating minimization approach used in practice, for a fixed user-specified $\ldim$,
we show the optimality properties of subspace, sparse and elastic-net \RFMs, for increasing $\ldim$. The goal of these exploratory results is to better elucidate practical choices of $\ldim$ for these induced \RFMs. The results are reported on the Extended Yale Face Database B \cite{georghiades2001from}. We provide two experiments. The first parallels the experiments in the previous section, but now we test optimality for a variety of $\ldim$, including small $\ldim$. The second experiment is to determine how much the objective can be improved by increasing $\ldim$. 
The goal of the first experiment is to examine for which user-specified $\ldim$, alternating minimization can find global solutions. 
The goal of the second experiment is to examine the inherent preference in the objective for a larger $\ldim$.

For the first comparison, we determine the optimality of alternating minimization for smaller $\ldim$, shown in Figure \ref{fig:heatmapSD}. This contrasts the results in the previous section, where we fixed $\ldim$ to one, larger value. 
The theoretical results indicate that even if $\ldim$ is less than the dimension $\ldim^*$ that gives $\regzk = \regz$,
alternating minimization can still give global solutions for $\ldim$ that results in convex $\regzk$. 
We find that for even very small $\ldim$, alternating minimization produces global solutions. 
  This is a surprising result, considering for sparse coding ($\nuh = 0$), the conventional wisdom is that $\ldim$ needs to be quite large
  (close to $\nsamples$) to get optimal solutions. Here, however, we are showing that alternating minimization, with a restricted $\ldim$, can still obtain the global solution \textit{for that objective}. When comparing to an objective where $\ldim$ is allowed to be larger, there is a bigger difference (see Figure \ref{fig:heatmap}). Therefore, if a specific $\ldim$ is desired for sparse coding, alternating minimization is likely going to reach the global minimum, despite the fact that further increasing $\ldim$ could further decrease the objective. 
This behavior suggests that either $\regzk$ is convex for these $\ldim$,
or potentially that $\regzk$ has other nice properties not currently characterized by our theory. 
We discuss this outcome further in the conclusion.

\begin{figure*}[h]
  \centering
   \subfigure[$\nuh$ = 0]{\includegraphics[width=0.32\textwidth]{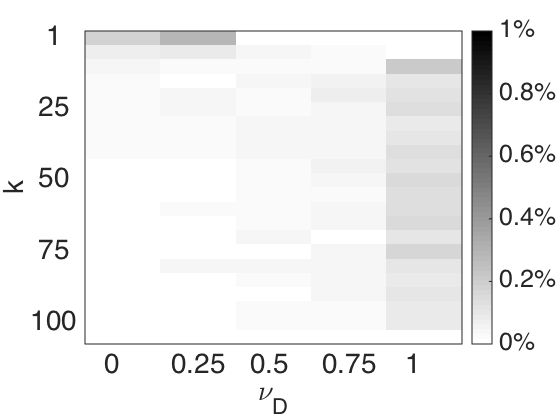}\label{fig:heatmapLSSd00a}}
   \subfigure[$\nuh$ = 0.5]{\includegraphics[width=0.32\textwidth]{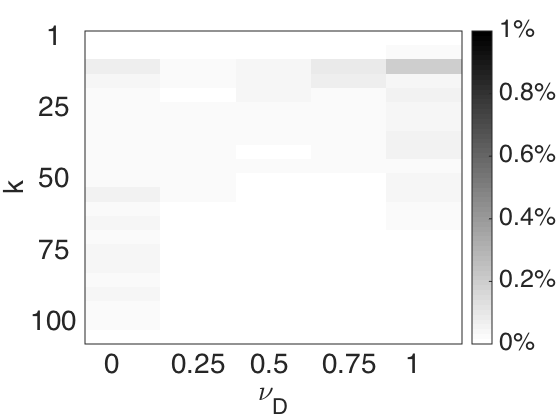}\label{fig:heatmapLSSd05a}}
   \subfigure[$\nuh$ = 1.0]{\includegraphics[width=0.32\textwidth]{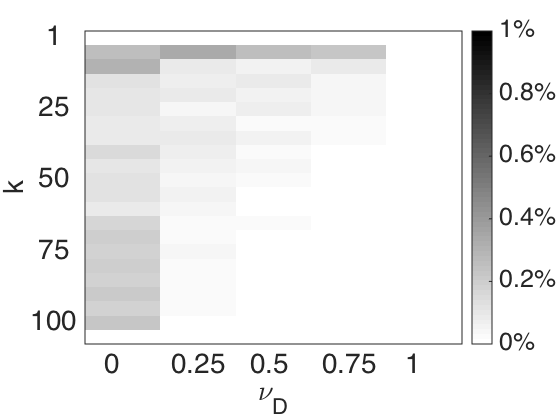}\label{fig:heatmapLSSd10a}}  
    \subfigure[$\nuh$ = 0]{\includegraphics[width=0.32\textwidth]{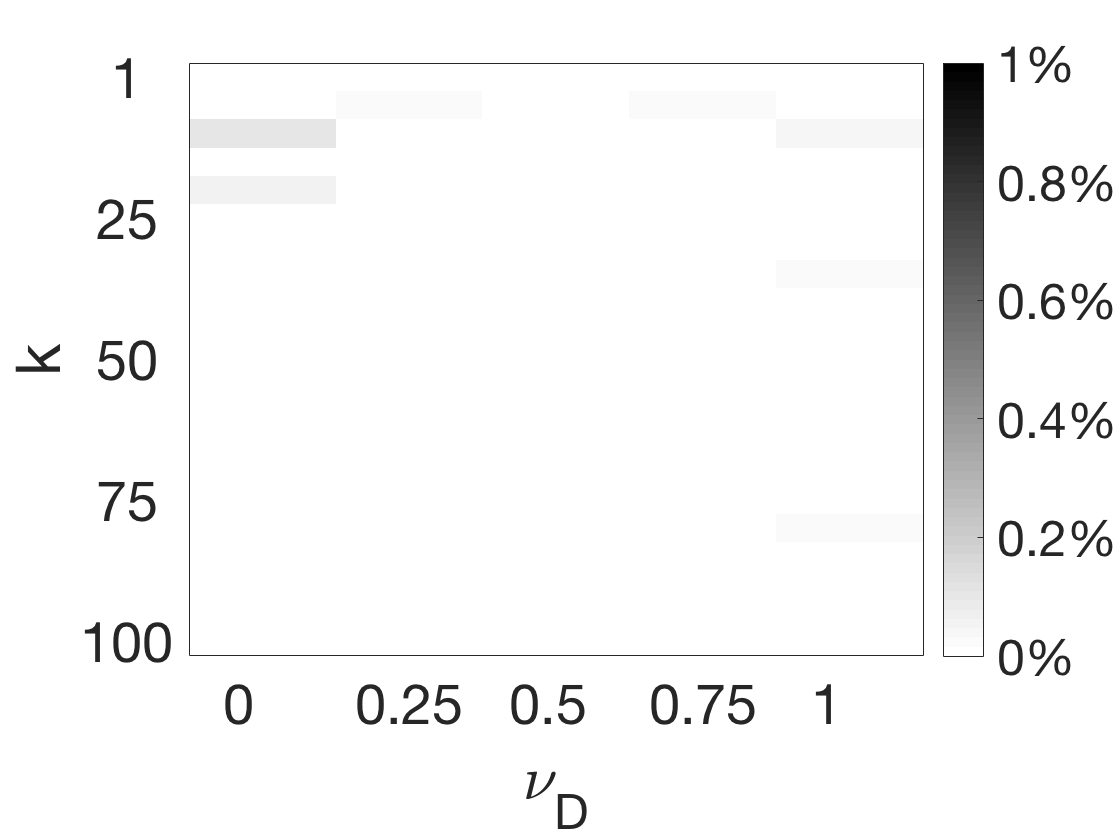}\label{fig:heatmapGasLSSd00a}}
   \subfigure[$\nuh$ = 0.5]{\includegraphics[width=0.32\textwidth]{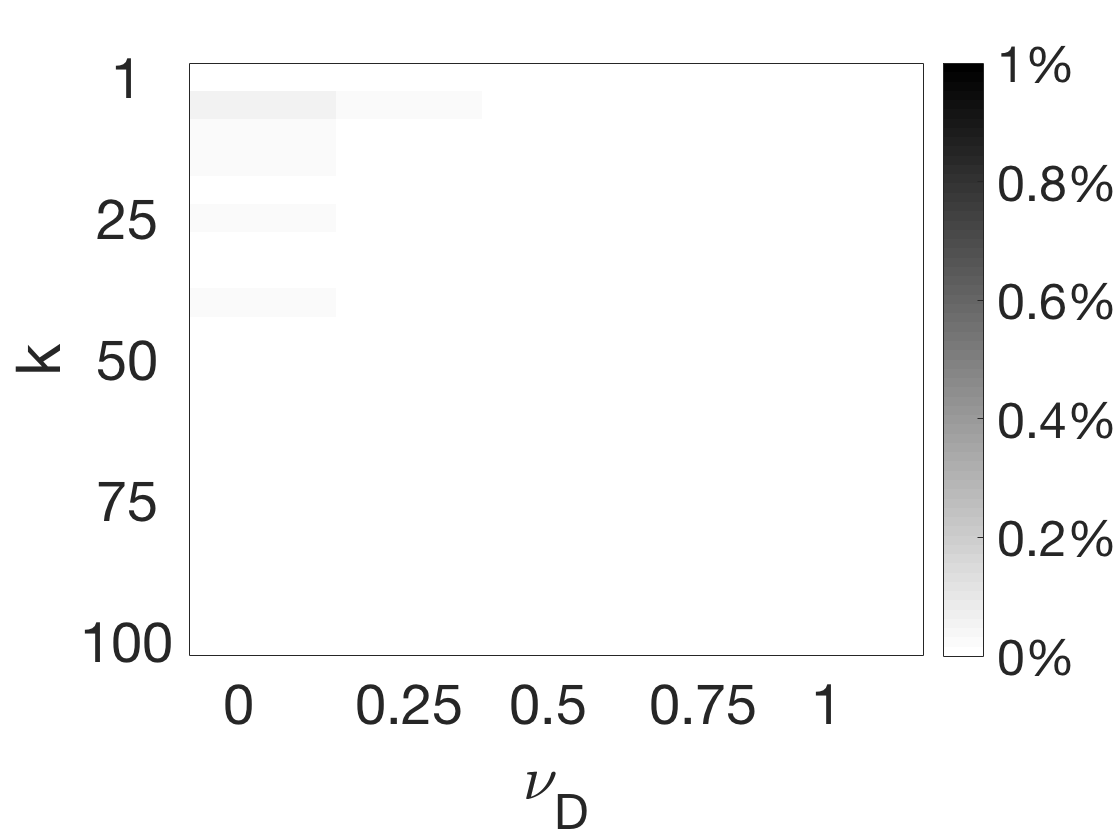}\label{fig:heatmapGasLSSd05a}}
   \subfigure[$\nuh$ = 1.0]{\includegraphics[width=0.32\textwidth]{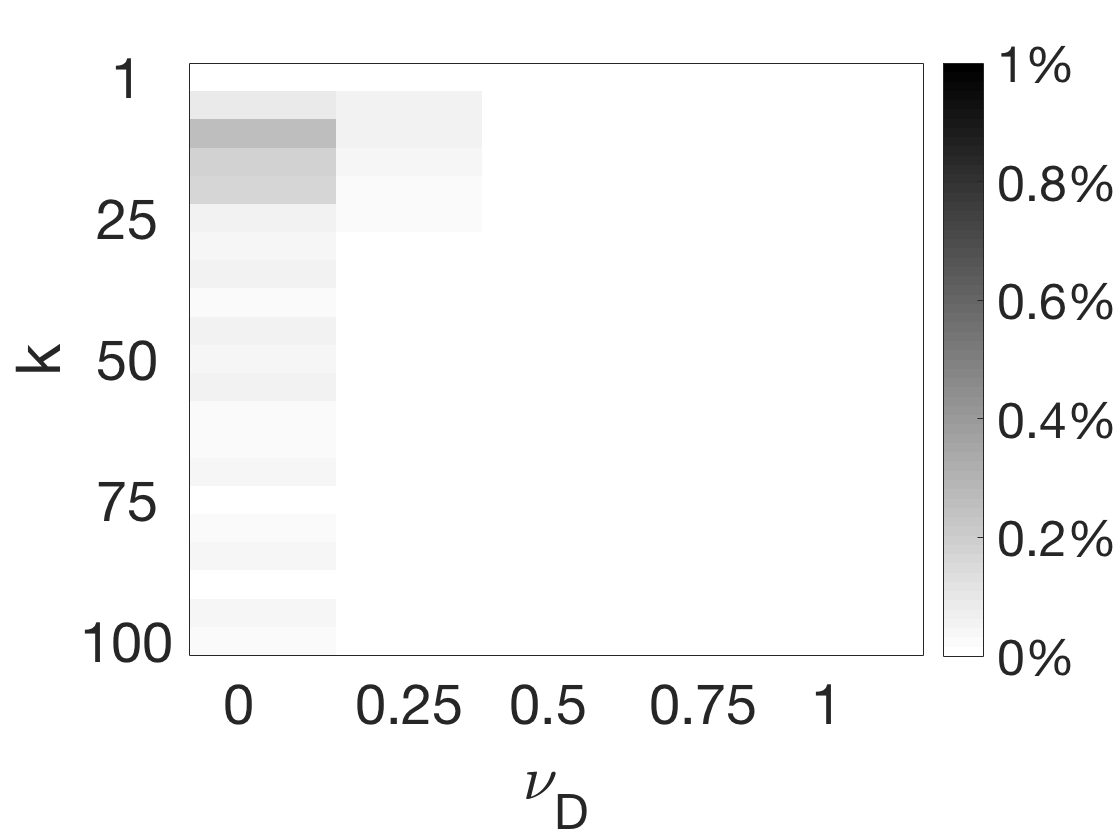}\label{fig:heatmapGasLSSd10a}}  
  \caption[caption]{ Standard deviations of the objective value for $\ldim \leq \nsamples$ for elastic net \RFMs,
  over 10 runs of different random initializations with the same regularization parameters.
  The lighter the color, the smaller the difference. 
 The loss is the least-squares loss for (a) - (c), and the logistic loss for (d) - (f), with $\nsamples = 100$ and $\xdim = 50$ on data randomly drawn from the Extended Yale Face Database B.   
 The relative differences are averaged over 10 runs, with the maximal distance reported within each run from 10 different random initializations.
  The goal is to identify if the solutions returned
  for highly different initial points result in different local minima. The standard deviation is very small for all choices of $\ldim$,
  suggesting that the AM-\RFM\  algorithm (see Algorithm \ref{alg_amrfm}) obtains a globally optimum solution even for small $\ldim$.
 }
  \label{fig:heatmapSD}
\end{figure*}

For the second experiment, we compare the objective values for increasing sizes of $\ldim$, compared to the least constrained setting of $\ldim = \nsamples$. The results are summarized in Figure \ref{fig:heatmap}. We expect that if the objective has a strong preference for larger $\ldim$,
then there will be a large discrepancy between the objective value for small $\ldim$ and the objective value for $\ldim = \nsamples$. 
 The result indicates that $\ldim$ can be much smaller than $\nsamples$ and obtain good solutions, within $1\%$. 
 Recall that $\nud = 0$ results in a sparse $\ell_1$ regularizer on $\Dmat$, and $\nud = 1$ results in the subspace $\ell_2$ regularizer on $\Dmat$. 
 The cases where $\ldim$ needs to be larger to match performance of $\ldim = \nsamples$ are when $\nud = 0, \nuh = 1$ or $\nud = 1, \nuh = 0$.
 This is consistent with previous results, where the global solution for sparse coding requires $\ldim = \nsamples$. We can see that even with a small decrease from this extreme
 setting, with $\nud < 1, \nuh = 0$, $\ldim$ can be significantly smaller without incurring much difference to the optimal solution at $\ldim = \nsamples$. 
Therefore, for sparse coding ($\nuh = 0$ and $\nud = 1$), the choice of $\ldim$ does need to be noticeably
larger than for the elastic net (i.e., all other settings). This validates the initial motivation for using the elastic-net for dictionary learning,
for learning more compact sparse representations.

\begin{figure*}[]
  \centering
   \subfigure[$\nuh$ = 0]{\includegraphics[width=0.32\textwidth]{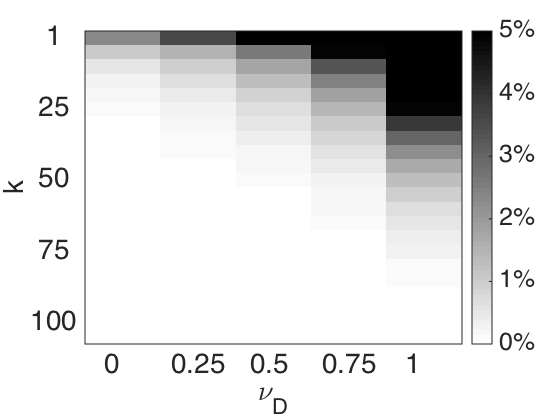}\label{fig:heatmapLS00b}}
   \subfigure[$\nuh$ = 0.5]{\includegraphics[width=0.32\textwidth]{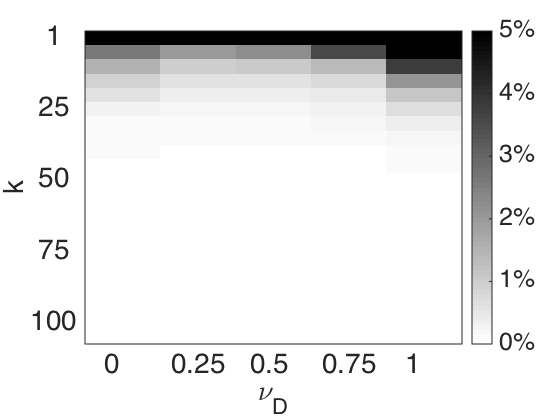}\label{fig:heatmapLS05b}}
   \subfigure[$\nuh$ = 1.0]{\includegraphics[width=0.32\textwidth]{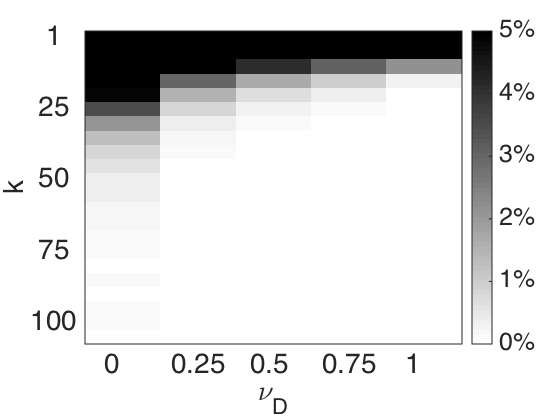}\label{fig:heatmapLS10b}}  
   \subfigure[$\nuh$ = 0]{\includegraphics[width=0.32\textwidth]{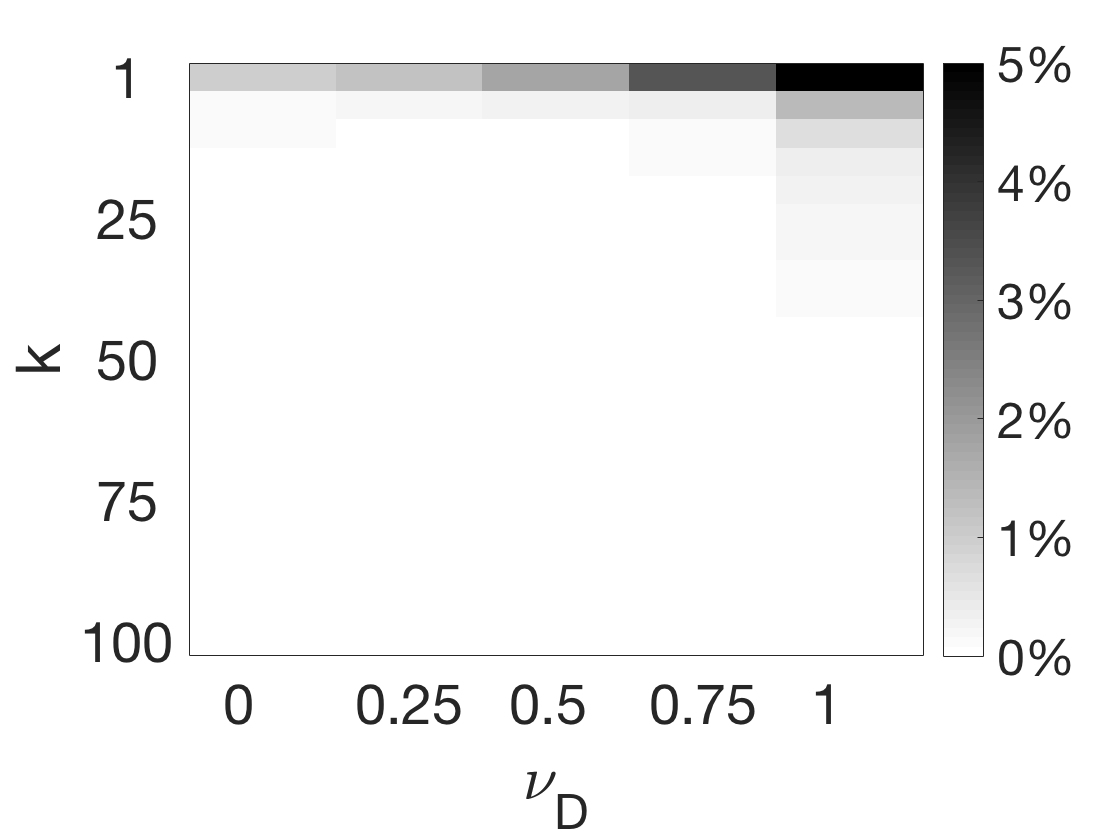}\label{fig:heatmapGasLS00b}}
   \subfigure[$\nuh$ = 0.5]{\includegraphics[width=0.32\textwidth]{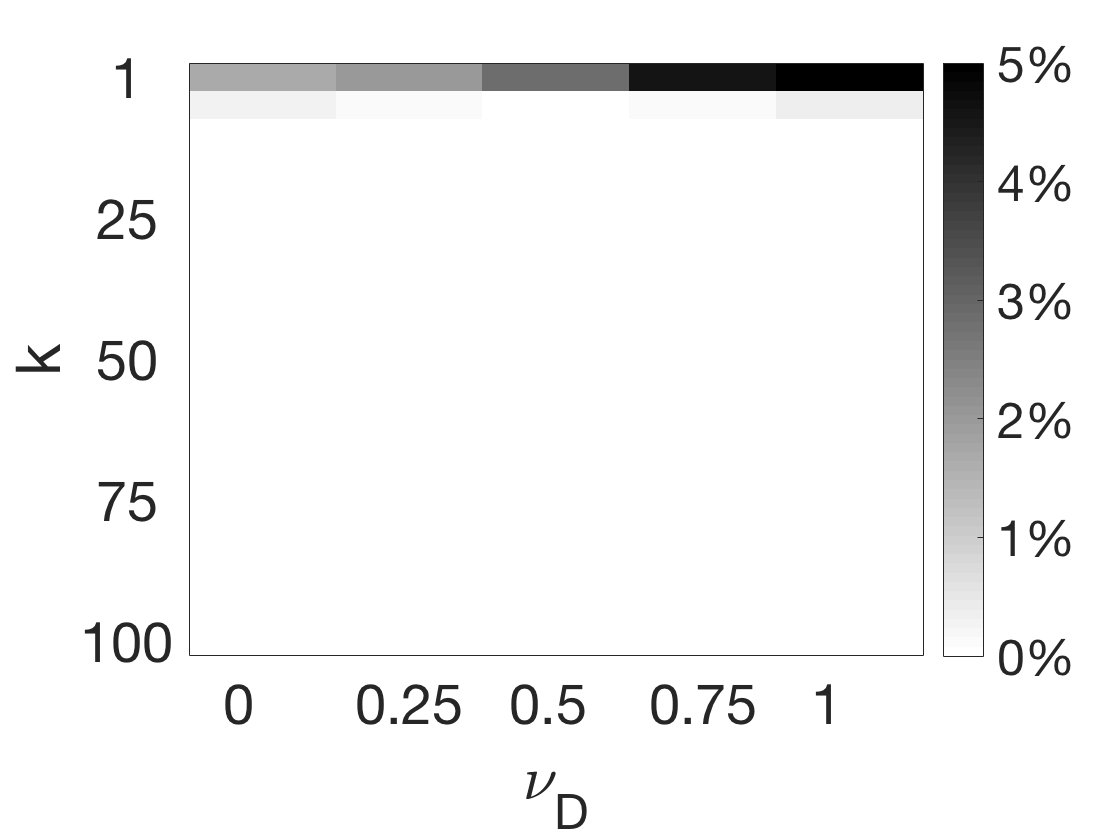}\label{fig:heatmapGasLS05b}}
   \subfigure[$\nuh$ = 1.0]{\includegraphics[width=0.32\textwidth]{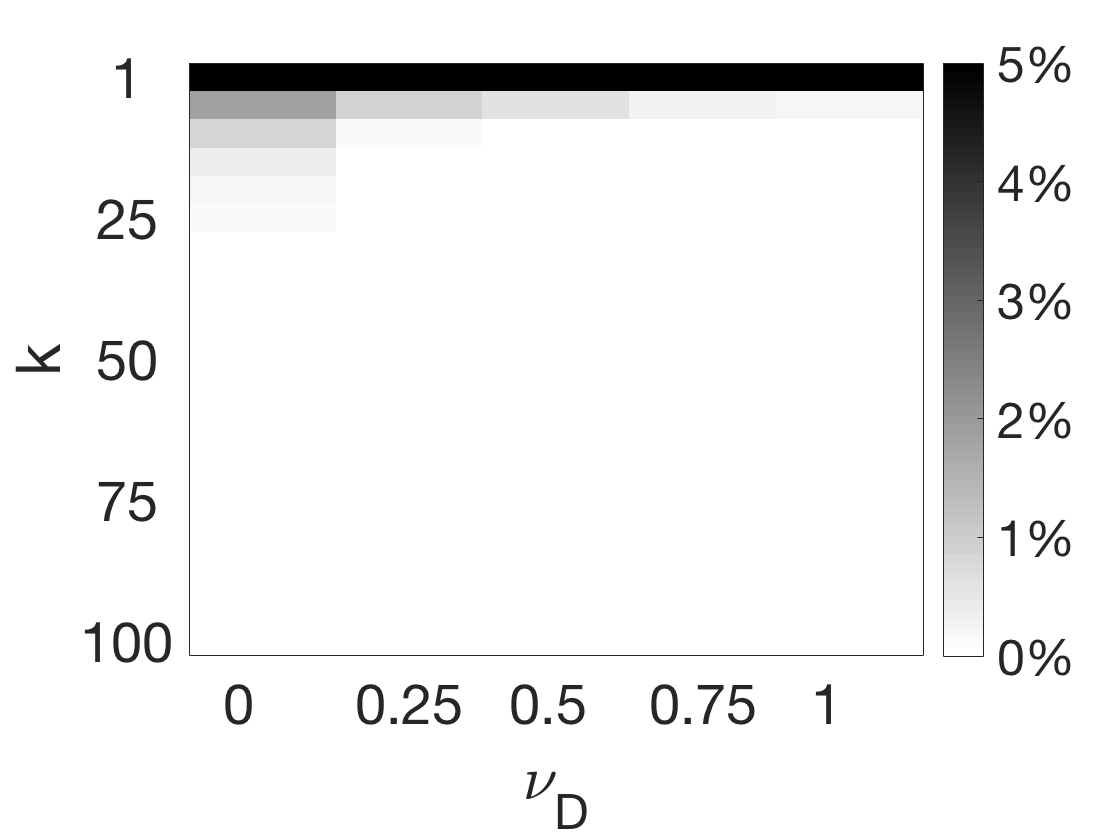}\label{fig:heatmapGasLS10b}}  
  \caption[caption]{ Comparison between the objective value for $\ldim < \nsamples$ and $\ldim = \nsamples$ for  the elastic net \RFMs.
  The reported relative differences are computed between the objective value for $\ldim$ and the objective values at $\ldim = \nsamples$, normalized by
  the objective value at $\ldim = \nsamples$ to give a percentage difference. 
  The lighter the color, the smaller the difference. 
 The loss is the least-squares loss for (a) - (c), and the logistic loss for (d) - (f), with $\nsamples = 100$ and $\xdim = 50$ on data randomly drawn from the Extended Yale Face Database B.   
 The relative differences are averaged over 10 runs, with the maximal distance reported within each run from 10 different random initializations.
 }
  \label{fig:heatmap}
\end{figure*}

\subsection{Batch optimization algorithm for induced \RFMs}

In this section, we include algorithmic details on the alternating minimization. 
We opt for a basic alternating minimization
strategy, without specific initialization or explicit strategies to escape saddle points. 
In particular, we focus only on making the alternating minimization
more effective by improving the convergence rate in two ways: (a) using an incomplete
alternating minimization and (b) deriving a proximal operator for the squared $\ell_2$ norm, required
for both the sparse coding and elastic net \RFMs. 


The alternating minimization algorithm for \RFMs---summarized in Algorithm \ref{alg_amrfm}---
is a standard block coordinate descent algorithm, with a few specific choices that we found effective. 
The standard approach involves descending in one variable with the other fixed, and then alternating. 
The main algorithmic choices are to use inexact updates for the alternating minimization,
with proximal gradient updates for non-smooth regularizers.
Empirically, we found both these modifications significantly speed convergence.
We opt for proximal gradient approaches rather than other approaches, such as 
the alternating direction method of multipliers, because it maintains sparse variables which can be
more efficiently stored and because the convergence results for proximal gradient
approaches are well understood, even under approximate updates \cite{machart2012optimal}.

\paragraph{Inexact alternating step.}
To alternate between $\Dmat$ and $\Hmat$ in the optimization, one can completely solve for each variable with the other fixed (exact) or alternate between single gradient descent steps (inexact). Both approaches converge under general conditions, proven as part of more general results about the convergence of block coordinate descent for multi-convex problems using exact updates \cite{xu2013ablock} and inexact updates \cite{tappenden2014inexact}. In our own experiments, we found the exact updates significantly slower and only marginally less sensitive to parameter choices, such as the step-size. We therefore adopt the inexact method, which is significantly faster. 

\newcommand{\useregwgt}{}
\newcommand{\RComment}[1]{\hfill $\triangleright$ #1}

\begin{algorithm}[htp]
\caption{Alternating Minimization for Dictionary Learning Models (AM-\RFM)}
\label{alg_amrfm}
\begin{algorithmic}
\State $\triangleright$ Input loss components $\jointloss$, $\regd$, $\regh$, $\ldim$
\State $\Dmat, \Hmat \gets $ random full-rank matrices with inner dimension $\ldim$
\State prevobj $\gets \infty$
\Repeat
\State Update $\Dmat$ using one step of gradient descent (such as in Algorithms \ref{alg_gradsmooth}, \ref{alg_one} or \ref{alg_elastic})
\State Update $\Hmat$ using one step of gradient descent (such as in Algorithms \ref{alg_gradsmooth}, \ref{alg_one} or \ref{alg_elastic})
\State currentobj $\gets \jointloss(\Dmat\Hmat) + \regd(\Dmat) + \regh(\Hmat)$
\If {$| \text{currentobj} - \text{prevobj} |< $ tolerance} \textbf{break}
\EndIf
\State prevobj $\gets$ currentobj
\Until{convergence within tolerance or reach maximum number of iterations}\\
\Return $\Dmat, \Hmat$
\end{algorithmic}
\end{algorithm}

\begin{algorithm}[htp]
\caption{Standard gradient descent step for smooth regularizers}
\label{alg_gradsmooth}
\begin{algorithmic}
\State $\Dmat \gets \Dmat - \stepsize_D (\nabla_Z \jointloss(\Dmat\Hmat) \Hmat^\top +  \nabla_D \regd(\Dmat))$  \RComment{$\Hmat$ update:
$\Hmat \gets \Hmat - \stepsize_H (\Dmat^\top \nabla_Z \jointloss(\Dmat\Hmat) +  \nabla_H \regh(\Hmat))$}
\end{algorithmic}
\end{algorithm}

\begin{algorithm}[]
\caption{Proximal gradient descent step for $\ell_1$ regularizer}
\label{alg_one}
\begin{algorithmic}
\State $\triangleright$  The steps are written for updating $\Dmat$ with fixed $\Hmat$. Differences in the updates to $\Hmat$ are in comments.
\State $\Delta \gets \nabla \jointloss(\Dmat\Hmat) \Hmat^\top$ \RComment{$\Hmat$ update:  $\Delta \gets \Dmat^\top\nabla \jointloss(\Dmat\Hmat)$}
\State $l \gets$ the Lipschitz constant of the gradient (or an upper bound to the constant)
\For{all $j$}
	\For{all $i$}  \RComment{$\Hmat$ update: swap indices to update one row at a time}
	\State $\Dmat_{ij} = \sign{\Dmat_{ij} - \Delta_{ij}/l} \max(| \Dmat_{ij} - \Delta_{ij}/l  | - \frac{\regwgt}{l} , 0)$	
	\EndFor
	\State $\Delta \gets \nabla \jointloss(\Dmat\Hmat) \Hmat^\top$  \RComment{$\Hmat$ update:  $\Delta \gets \Dmat^\top\nabla \jointloss(\Dmat\Hmat)$}
\EndFor
\end{algorithmic}
\end{algorithm}
\begin{algorithm}[]
\caption{Proximal gradient descent step for elastic net regularizer with $\ell_1^2$}
\label{alg_elastic}
\begin{algorithmic}
\Repeat
\For{$i = 1$ to $\ldim$}
	\State $\triangleright$ Calculate Lipschitz constant. For example, if $\jointloss$ is the least-squares loss, then   
	\State $\triangleright$  for the $\Dmat$ update: $l \gets 2(\| \Hmat_{i:} \|_1^2 + \nud \regwgt)$ or for the $\Hmat$ update:  $l \gets 2(\| \Dmat_{:i} \|_1^2 + \nuh \regwgt/\hscale)$
	\State $l \gets$ the Lipschitz constant of the gradient (or an upper bound to the constant)
	\State Update $\Dmat_{:i}$ (or $\Hmat_{i:}$) using Algorithm \ref{alg_st} with current $\Dmat$ (or $\Hmat$)
\EndFor
\Until{difference between $\Dmat$ (or $\Hmat$) on successive steps within tolerance or maximum number of iterations}
\end{algorithmic}
\end{algorithm}

\newcommand{\sparsity}{r}

\begin{algorithm}
\caption{Soft thresholding algorithm for updating column vectors of $\Dmat$}
\label{alg_st}
\begin{algorithmic}
\State $\lambda \gets \tfrac{(1-\nud)\regwgt}{l}$ \RComment{To update $\Hmat$:  $\lambda \gets \tfrac{(1-\nuh)\regwgt}{sl}$}
\Repeat
\State $\uvec \gets \Dmat_{:i} - \tfrac{1}{l}[\nabla\jointloss(\Dmat\Hmat)\Hmat_{i:}^\top+2\nud\regwgt \Dmat_{:i}]$ \RComment{$\Hmat$ update:  $\uvec \gets \Hmat_{i:} - \tfrac{1}{l}[\Dmat_{:i}^\top\nabla\jointloss(\Dmat\Hmat)+2\nuh\regwgt/\hscale \Hmat_{i:}]$}
\State Sort $\uvec = [u_1, \ldots, u_\ldim]$ according to $\left| u_i\right|$ in descending order such that $\left| u_{m_1}\right| \geq \left|u_{m_2}\right|\geq ... \geq \left|u_{m_\ldim}\right|$
\State $C \gets 0, \sparsity \gets 0$, 
\While{$\sparsity < \ldim$ and $\left|u_{m_{\sparsity+1}}\right| > \tfrac{2\lambda C}{1+2\lambda \sparsity}$}
	\State $C \gets C + \left|u_{m_{\sparsity+1}}\right|$ 
	\State $\sparsity \gets \sparsity+1$
\EndWhile

\For{$j = 1$ to $\ldim$}
\State $\Dmat_{ij} \gets \sign{u_{j}}\max\left(\left|u_{j}\right|-\tfrac{2\lambda C}{1+2\lambda \sparsity},0\right)$ \RComment{$\Hmat$ update: flip indices to update $\Hmat_{ji}$}
\EndFor

\Until{difference between $\Dmat$ on successive steps within tolerance or reach maximum number of iterations}
\end{algorithmic}
\end{algorithm}

%

\paragraph{Proximal gradient updates.} 
A standard gradient descent step is problematic when the regularizer, or a component of the regularizer, is non-differentiable. For example, $\ell_1$ has a non-differentiable point at $0$. 
Though one could simply choose a subgradient at $0$ and apply subgradient descent, in practice for batch optimization, the convergence properties are poor. 
For alternating minimization on induced \RFMs, we found that the descent would converge to a point and then very slowly decrease over a large number of iterations. 

Proximal gradient methods, on the other hand, use a proximal operator that avoids computation of the subgradient of the non-differentiable component. 
For example, for the $\ell_1$ regularizer, with Lipschitz constant $l$ for the gradient of the loss, the proximal operator is the soft-thresholding operator
\begin{align*}
\text{prox}(\uvec) = \argmin_{\uvec \in \RR^{\nsamples}} \frac{1}{2} \| \uvec - \hvec \|_2^2 + \frac{\regwgt}{l} \| \hvec \|_1 = \sign{\uvec} \circ \max(| \uvec | - \frac{\regwgt}{l} , \zerovec)
\end{align*}
where the multiplication $\circ$ is element-wise. 
Proximal operators are well-known for common non-smooth regularizers such as $\|\cdot\|_1$ and $\|\cdot\|_2$ \cite{bach2011convex}.
Some regularizers for induced \RFMs, however, involve squares of convex functions, which is atypical. For example, $\|\cdot\|_1^2$ is used in the elastic net \RFM. 
To the best of our knowledge, proximal operators have not been derived for $\|\cdot\|_1^2$. 
We provide the proximal updates for the elastic net \RFM\   in Algorithm \ref{alg_elastic} and \ref{alg_st}, with proof of the validity of the operator in Proposition \ref{prop_prox} in Appendix \ref{app_prox}.

\section{Incremental estimation}

In this section, we explore how to effectively learn $\Dmat$ incrementally. We discuss an incremental algorithm for least-squares losses that summarizes
past data (online AM-\RFM) and discuss how to use stochastic gradient descent updates.
We provide experiments demonstrating the different properties of the online versus stochastic incremental algorithms,
particularly providing insights into step-size selection.

\subsection{Online AM-\RFM\   algorithm}
For a least-squares loss, a more sample efficient incremental algorithm
can be used to optimize induced \RFMs. This algorithm is summarized in Algorithm \ref{alg_oamrfm}.
It is a straightforward modification of \citet[Algorithm 1]{mairal2010online},
which was introduced for sparse coding and for which they proved convergence
to stationary points \citep[Proposition 1]{mairal2010online}.
The main modification is to use a regularizer on $\Dmat$ instead of a projection onto a constraint set.  
Note that \citet[Section 3.4]{mairal2010online} discuss several algorithmic improvements,
including down-weighting past samples and using mini-batches; these ideas extend
directly to this algorithm and so we do not repeat them here. 

  

\begin{algorithm}[htp!]
\caption{Online AM-\RFM}
\label{alg_oamrfm}
\begin{algorithmic}
\State $\triangleright$ Input full rank initial dictionary $\Dmat_0 \in \RR^{\xdim \times \ldim}$, $\nsamples$ as number of iterations
\State $\Amat_0 \gets \zerovec \in \RR^{\ldim \times \ldim}$, $\Bmat_0 \gets \zerovec \in \RR^{\xdim \times \ldim}$
\For{$t=1$ to $\nsamples$}
\State $\triangleright$ If multiple passes over a dataset, reshuffle the data for each pass
\State $\triangleright$ As before, if there is a non-smooth regularizer on $\Dmat$ or $\Hmat$, then use proximal updates
\State $\hvec_t \gets \argmin_{\hvec \in \RR^\ldim} \frac{1}{2} \sqnorm{\Dmat_{t-1} \hvec - \xvec_t}^2 + \regh(\hvec)$
\State $\beta_t \gets \max(\tfrac{1}{t},0.01)$
\State $\Amat_t \gets (1-\beta_t) \Amat_{t-1} + \beta_t \hvec_t \hvec_t^\top$
\State $\Bmat_t \gets (1-\beta_t) \Bmat_{t-1} + \beta_t \xvec_t \hvec_t^\top$
\State Compute $\Dmat_t$ using $\Dmat_{t-1}$ as initial point (warm restart)
\begin{align*}
\Dmat_t &= \argmin_{\Dmat \in \RR^{\xdim \times \ldim}} \frac{1}{t} \sum_{i=1}^t \left(\frac{1}{2}\sqnorm{\Dmat \hvec_i - \xvec_i}^2 +  \regh(\hvec_i) \right ) + \regd(\Dmat)\\
&= \argmin_{\Dmat \in \RR^{\xdim \times \ldim}} \frac{1}{2} \trace{\Dmat^\top \Dmat \Amat_t} - \trace{\Dmat^\top \Bmat_t} +  \regd(\Dmat) 
\end{align*}
\EndFor\\
\Return $\Dmat_\nsamples$
\State $\triangleright$ If it takes at most $S$ descent iterations to get $\hvec_t$ or $\Dmat_t$, then the time complexity for each $t$ is $O(S \xdim \ldim^2)$ 
\end{algorithmic}
\end{algorithm}

\begin{algorithm}[htp!]
\caption{SGD AM-\RFM}
\label{alg_sgdrfm}
\begin{algorithmic}
\State $\triangleright$ Input full rank initial dictionary $\Dmat_0 \in \RR^{\xdim \times \ldim}$, $\nsamples$ as number of iterations
\For{$t=1$ to $\nsamples$}
\State $\triangleright$ If multiple passes over a dataset, reshuffle the data for each pass
\State $\triangleright$ If there is a non-smooth regularizer on $\Hmat$, use sub-gradient descent to get $\hvec$
\State $\hvec_t \gets \argmin_{\hvec \in \RR^\ldim} L_x(\Dmat_{t-1} \hvec, \xvec_t) +  \regh(\hvec)$
\State $\triangleright$ Update $\Dmat_t$ using one step of stochastic subgradient descent step
\State Select step-size $\stepsize_t$ as described in Section \ref{sec_sgd}
\State $\Dmat_{t} \gets \Dmat_{t-1} - \stepsize_t (\nabla l_\sampiter(\Dmat_{t-1}) + \nabla \regd(\Dmat_{t-1}))$ \RComment{$\nabla l_t(\Dmat_{t-1}) = \nabla L_x(\Dmat_{t-1}\hvec,\xvec_t) \hvec^\top$}
\EndFor\\
\Return $\Dmat_\nsamples$
\State $\triangleright$ If it takes at most $S$ descent iterations to get $\hvec_t$ or $\Dmat_t$, then the time complexity for each $t$  is $O(S\xdim \ldim)$. The time complexity for the step-size selection strategies, is dominated by this; for accelerated SG, the time complexity is $O(\xdim + \ldim)$, and for the others is constant. 
\end{algorithmic}
\end{algorithm}

\subsection{Stochastic gradient descent for \RFMs}\label{sec_sgd}

Stochastic gradient descent (SGD) is another approach to incremental estimation of \RFMs. 
As discussed in Section \ref{sec_decoupled}, only $\Dmat$ is updated incrementally, according to the loss
\begin{align}
l_\sampiter(\Dmat) =  \regd(\Dmat) + \big(\min_{\hvec} L_x(\Dmat \hvec, \xvec_\sampiter) + \regh(\hvec) \big)
.
\end{align}
Therefore, on each step, the optimal $\hvec$ is computed for the current $\Dmat$ and data point $\xvec_t$, and
then a gradient step is performed for $\Dmat$ using $l_{\sampiter}(\Dmat)$.
Unlike online AM-\RFM, we used subgradient descent updates for both $\hvec$ and $\Dmat$ for SGD AM-\RFM. Using proximal gradient updates for either $\hvec$ or $\Dmat$ resulted in $\hvec$ becoming progressively more sparse and the resulting solution was poor. 
This remains an important open question for future work in using stochastic gradient descent for induced \RFMs. 
We summarize the SGD choices we found to be effective, in Algorithm \ref{alg_sgdrfm}. 

We will empirically investigate which of these two incremental algorithms is generally more effective for optimizing induced \RFMs\ incrementally. The SGD algorithm is significantly faster per-step, scaling as $O(\xdim \ldim)$, compared to online AM-\RFM, which scales as $O(\xdim \ldim^2)$. Additionally, the SGD algorithm allows for more general losses, whereas online AM-\RFM\ is restricted to the least-squares loss. However, we expect that online AM-\RFM\ should be more sample efficient. Further, it should be simpler to select step-sizes for online AM-\RFM, because line search can be used for the two optimizations within the loop. For SGD, we will need to explore alternative heuristics for setting the step-size.

The key factor for making SGD effective for induced \RFMs\ is to incorporate recent accelerations to SGD. 
\citet{mairal2009online} found that for sparse coding, their online AM-\RFM\   algorithm converged more quickly in terms of samples
than SGD. We find, however, that this is not always the case, particularly by taking advantage of the recent understanding of accelerations to stochastic gradient descent (\citet{roux2012astochastic} provide a nice summary). 
Several of these accelerations rely on strategies for selecting the step-size, $\stepsize_t$. We explored several of these accelerations,
and report results for the two that were the most effective with induced \RFMs\ and that did not involve storing gradients. 

\paragraph{Accelerated SG.} The first acceleration involves an aggressive step-size selection strategy that enables a constant
step-size for several iterations. This strategy gives a more aggressive step-size, that can speed convergence, but that adaptively is decreased
to ensure convergence \cite{kesten1958accelerated}. On each step, the step-size is only decreased when the inner-product between two successive gradient estimates are negative. Specifically, a list of strictly decreasing step sizes is predefined, with $\stepsize_0$ initialized to the first step size in this list. If $\tr(\nabla l_{\sampiter}(\Dmat_{t-1})^\top\nabla l_{\sampiter}(\Dmat_{t-2})) \geq 0$, then $\stepsize_t = \stepsize_{t-1}$; otherwise, the step-size is chosen such that $\stepsize_t < \stepsize_{t-1}$, from the list of step-sizes.

\paragraph{Momentum.} The second acceleration uses a momentum term, which is the difference between two successive iterations (see for example the algorithm by \citet{tseng1998anincremental}). The update for $\Dmat$ is instead
\begin{align*}
\Dmat_{t} \gets \Dmat_{t-1} - \stepsize_t (\nabla l_\sampiter(\Dmat_{t-1}) + \nabla \regd(\Dmat_{t-1})) + \beta_t (\Dmat_{t-1} - \Dmat_{t-2})
\end{align*}
Without the momentum term, $\Dmat_{t-1} - \Dmat_{t-2} =  -\eta_{t-1} \nabla l(\Dmat_{t-2})$. The momentum term is stepping further
along the direction of this previous gradient. Momentum's acceleration effects are mainly supported by theories in batch optimization,
and has not been theoretically characterized for stochastic settings \cite{sutskever2013ontheimportance}. In practice, however, momentum is typically combined with SGD. Our experiments in Section \ref{sec_experiments} show that adding momentum term can accelerate SGD AM-\RFM, but not always.

\begin{figure*}[htp!]
  \centering
 	\subfigure[Subspace]{\includegraphics[width=0.48\textwidth]{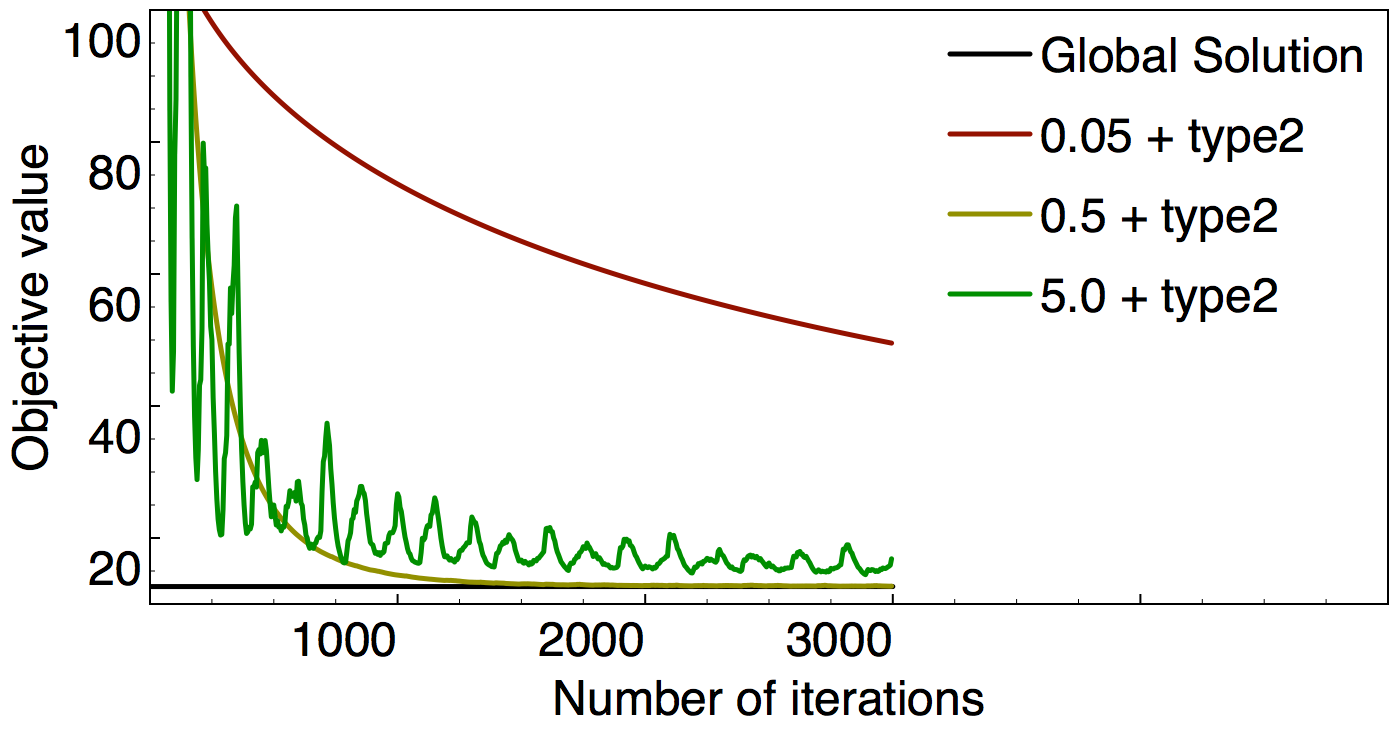}\label{fig:subspace1}}
  	\subfigure[Elastic Net ($\nuh = 1.0, \nud = 0.5$)]{\includegraphics[width=0.48\textwidth]{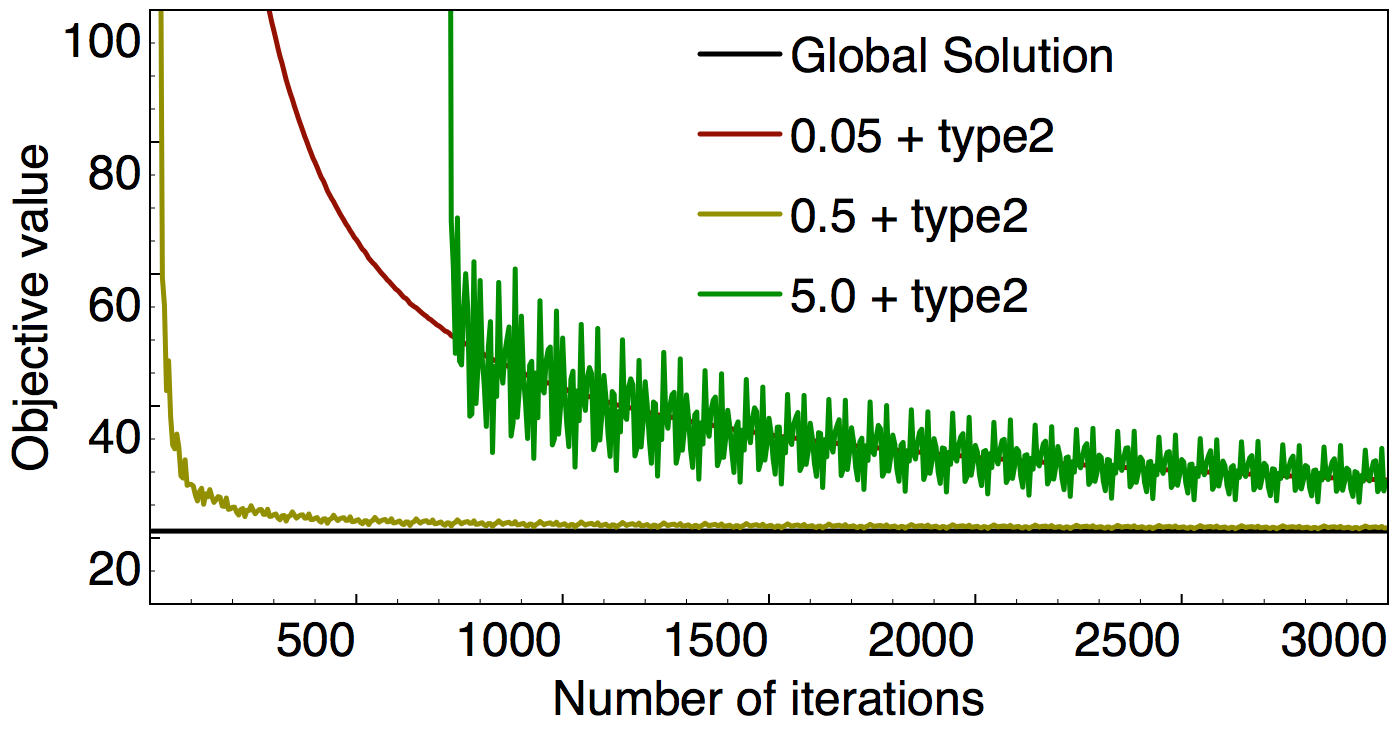}\label{fig:combined1}}
  	\subfigure[Subspace]{\includegraphics[width=0.48\textwidth]{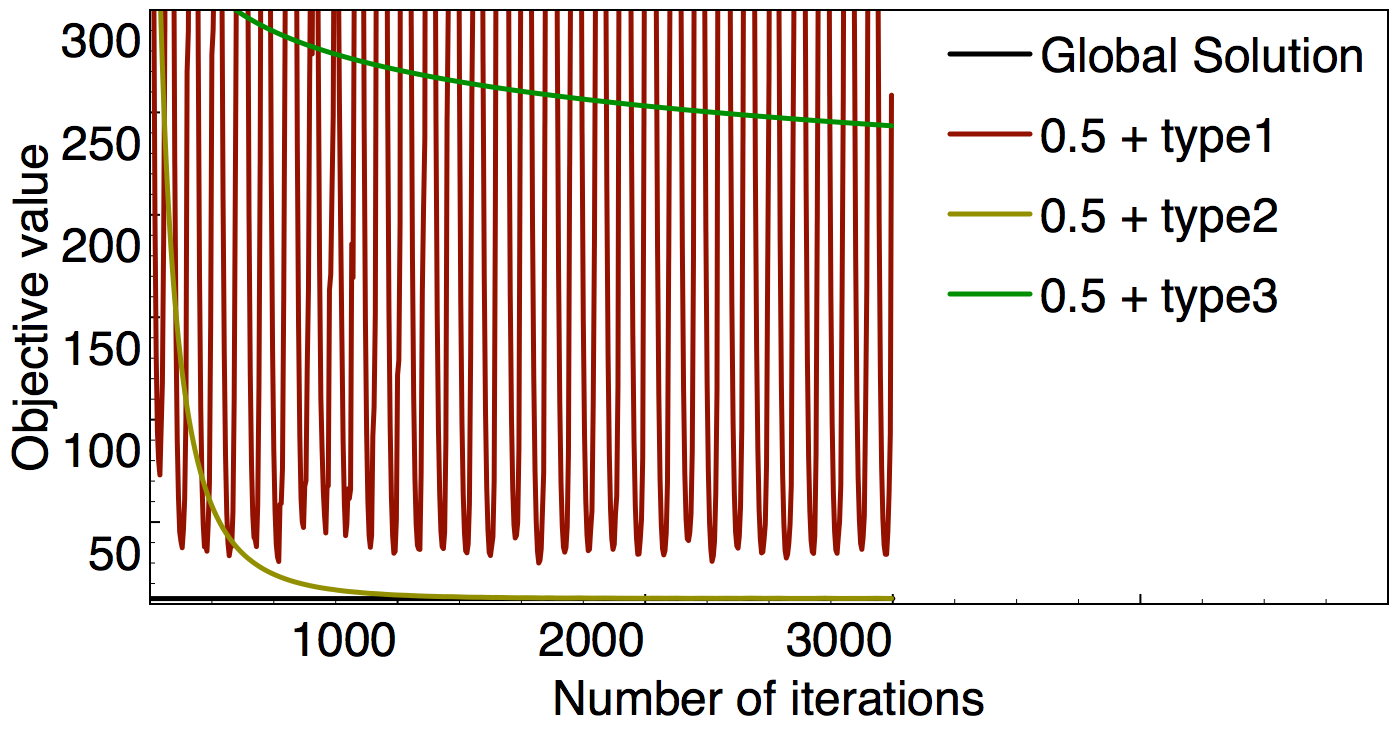}\label{fig:subspace2}}
  	\subfigure[Elastic net ($\nuh = 1.0, \nud = 0.5$)]{\includegraphics[width=0.48\textwidth]{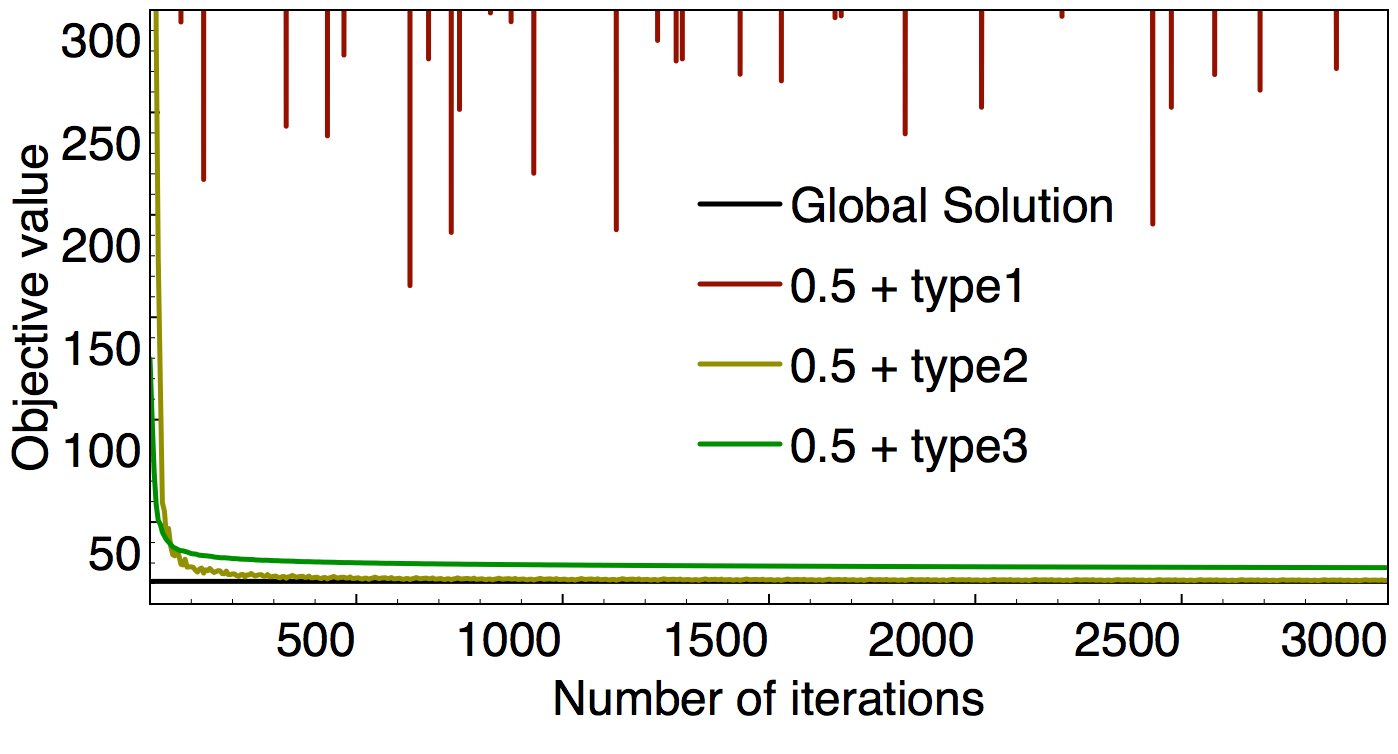}\label{fig:combined2}}
  \caption{ Comparison of SGD AM-\RFM\   to the global solution for varying $\alpha_0$ and step-size decays. 
  The initial step-size is $\stepsize_0 \in \{0.05,0.5,5.0\}$.
  There are three time-based decays to get $\stepsize_t$: $\stepsize_0$ (type 1), $\stepsize_0/\sqrt{t}$ (type 2) and $\stepsize_0/t$ (type 3).
  In general, we found type 2 to perform the best across $\stepsize_0$, and found $\stepsize_0 = 0.5$ to produce the fastest, most stable convergence.
  We therefore more specifically report results for these two settings, with (a), (b) for type 2 with varying $\stepsize_0$ and (c), (d) for $\stepsize_0 = 0.5$ and varying types. 
  These results indicate that learning can be significantly different across these choices. 
As expected, an aggressive initial step-size of $5.0$ produces oscillations, though it does in fact converge with the type 2 decay schedule. 
A conservative step-size of $0.05$ converges too slowly, and an overly fast decay (type 3) may converge before reaching global optima. 
Setting $\stepsize_0 = 0.5$ and using type 2 decay converges to a global solution quickly and smoothly.       
 }
  \label{fig:comparison1}
\end{figure*}

\begin{figure}[htp!]
  \centering
   \subfigure[Subspace]{\includegraphics[width=0.48\textwidth]{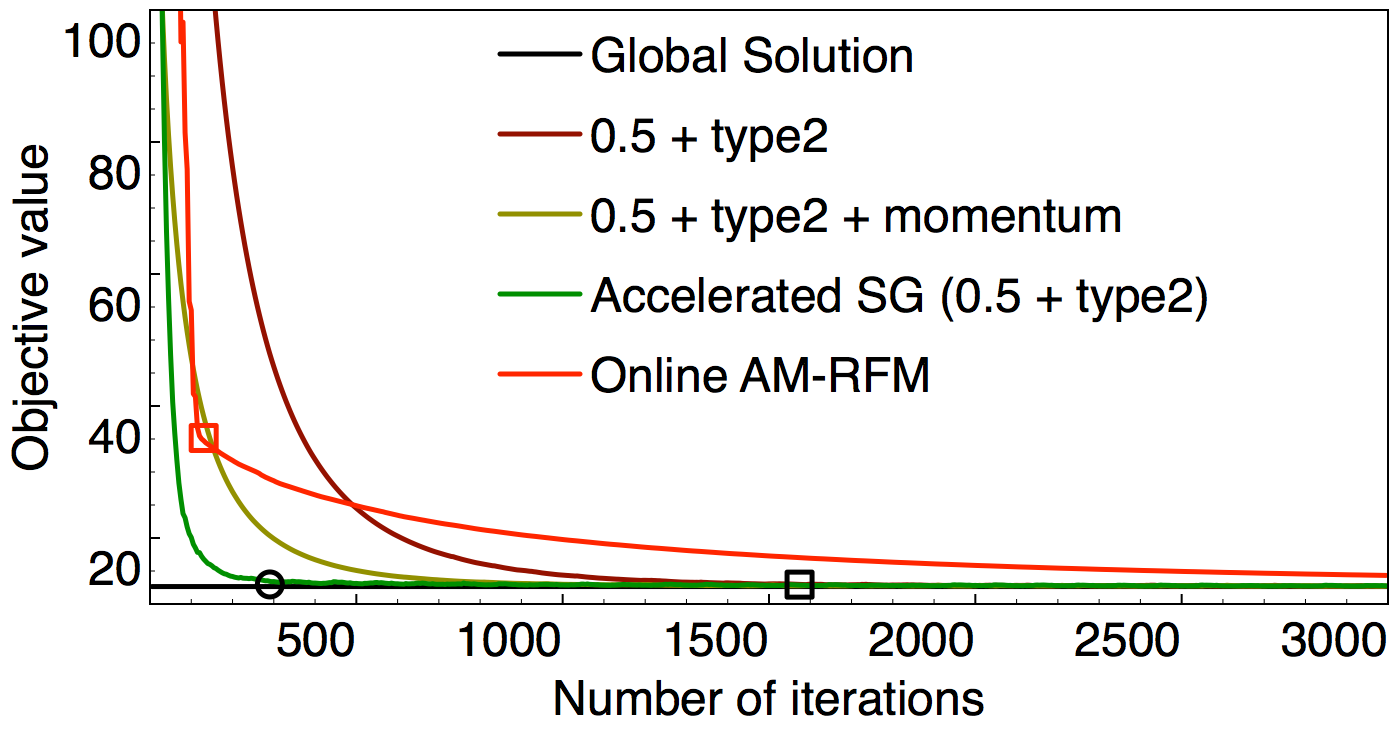}\label{fig: comparison of different SGD methods1}}
   \subfigure[Elastic net ($\nuh = 1.0, \nud = 0.5$)]{\includegraphics[width=0.48\textwidth]{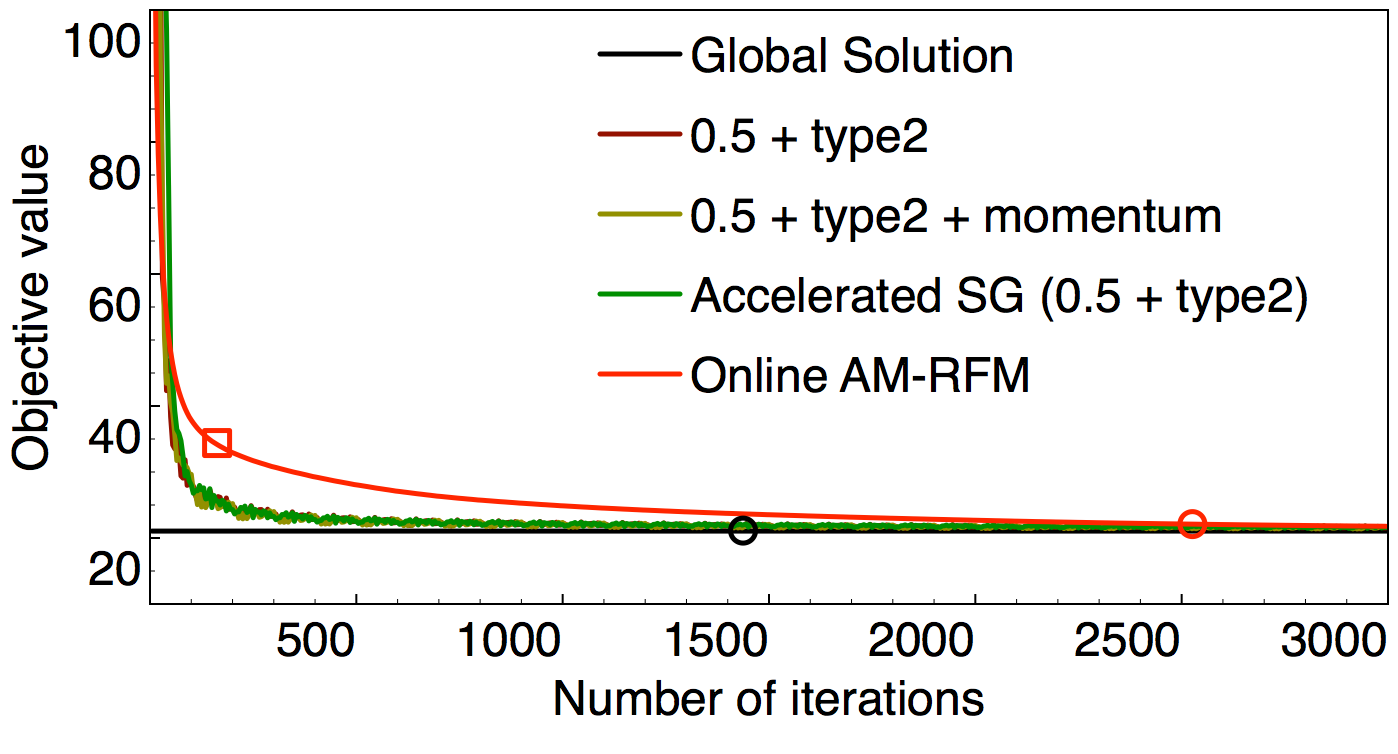}\label{fig: comparison of different SGD methods2}}
   \subfigure[Subspace]{\includegraphics[width=0.48\textwidth]{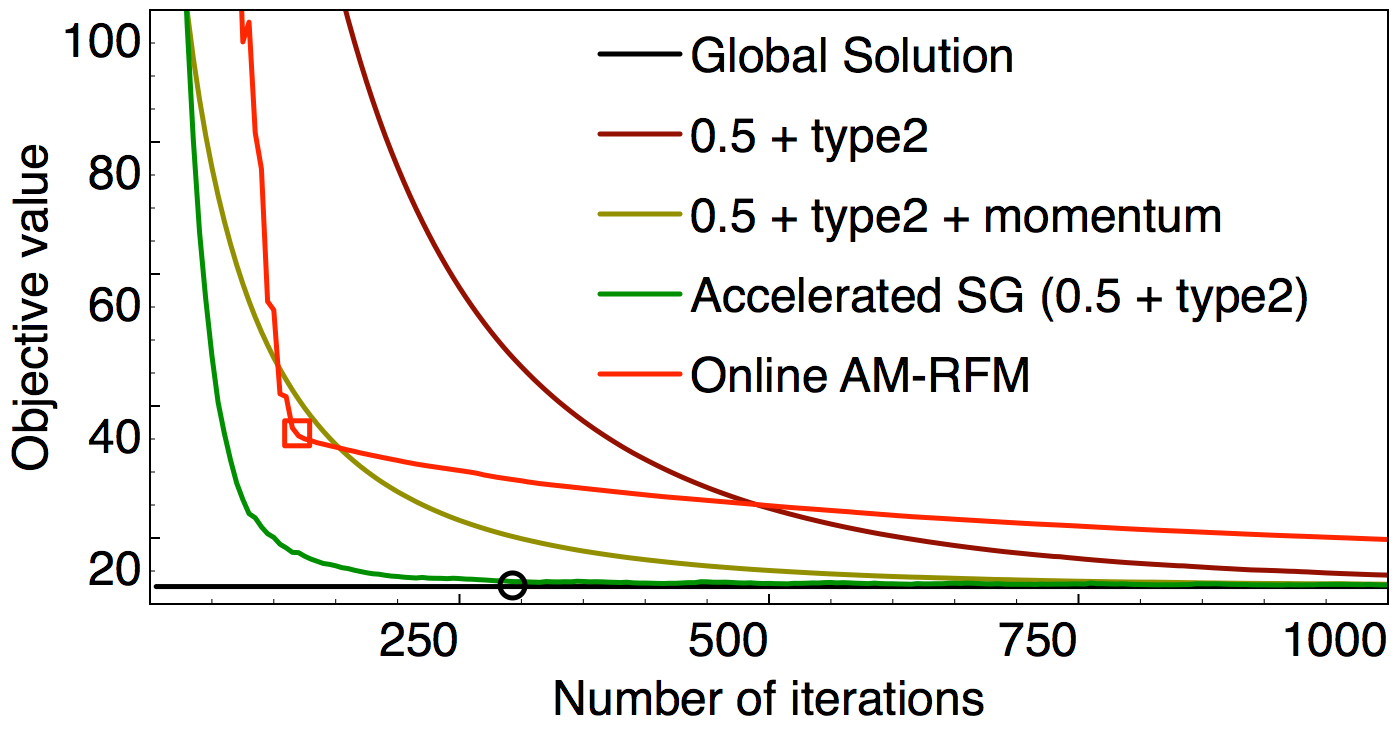}\label{fig: comparison of different SGD methods3}}
   \subfigure[Elastic net ($\nuh = 1.0, \nud = 0.5$)]{\includegraphics[width=0.48\textwidth]{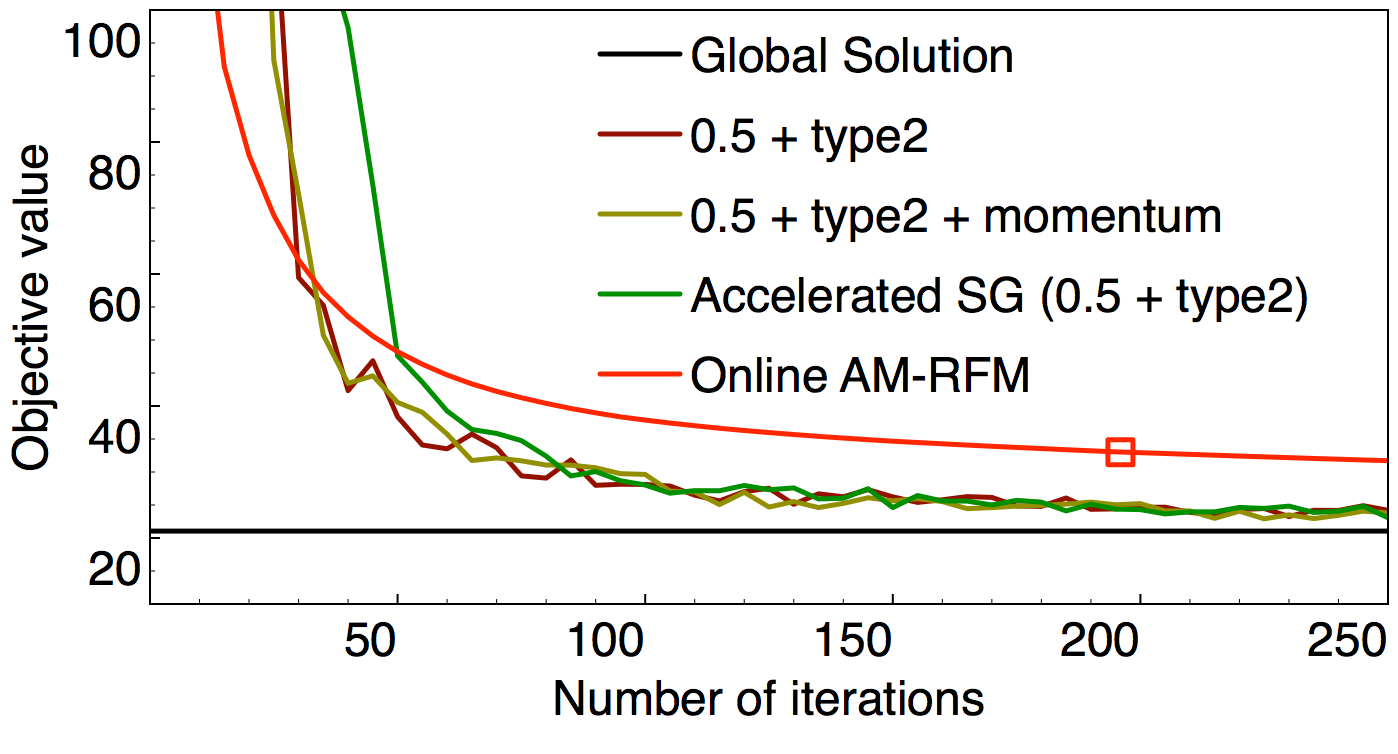}\label{fig: comparison of different SGD methods4}}
  \caption{Comparison of different incremental algorithms for induced \RFMs.
    The plots in (c) and (d) depict early learning for (a) and (b) respectively (i.e., they are zoomed into early learning in those plots).
    Accelerated SG is SGD AM-\RFM\   with $\stepsize_t = \stepsize_{t-1}$ if $\tr(\nabla l_{\sampiter}(\Dmat_{t-1})^\top\nabla l_{\sampiter}(\Dmat_{t-2})) \geq 0$
    and otherwise selects $\stepsize_t = \stepsize_0/\sqrt{t}$ (type 2) or $\stepsize_t = \stepsize_0/t$ (type 3). For momentum, $\beta_t = 0.01$. 
    Both of these accelerations outperformed the best setting for standard SGD AM-\RFM\   (with $\stepsize_0 = 0.5$ and type 2 decay), for subspace dictionary learning.
    For elastic net, however, the accelerations appeared to have little impact, potentially due to the fact that convergence of the standard SGD AM-\RFM\   was
    already fast.
    A surprising observation is that online AM-\RFM\   decreases the function value very fast at the beginning, but then slows before converging to the global solution. 
    Finally, we report the runtime of SGD AM-\RFM. The two marks in each plots denote convergence and runtime information based on Accelerated SG. 
    The black circle denotes the step where Accelerated SG is within 5\% of the batch solution; the black square denotes the step where Accelerated SG spends as much cumulative time as the batch one. If a mark is missing in a plot, then the mark is beyond the maximum iterations. These black marks are similar for the other stochastic approaches.
    This convergence is within $30\%-40\%$ of the time needed for the global solution.
    This is a serious underestimate of the true gains of stochastic gradient descent, because $\nsamples = 100$ was chosen to make the comparison to
    the global batch solution computationally feasible. In practice, this value is much larger and stochastic gradient descent becomes the obvious choice
    for estimating induced \RFMs\ if efficiency is a concern. The red circle and square have the same meanings as the black ones except for that they are for online AM-\RFM. In contrast to black marks, this convergence is more than 10 times of the time needed for the global solution, demonstrating that online AM-\RFM\   is computationally costly and slow convergence while it is sample efficient and with no need to select step-size explicitly.  
    }         
  \label{fig:comparison2}
\end{figure} 

\begin{figure}[htp!]
  \centering
   \subfigure[Subspace]{\includegraphics[width=0.48\textwidth]{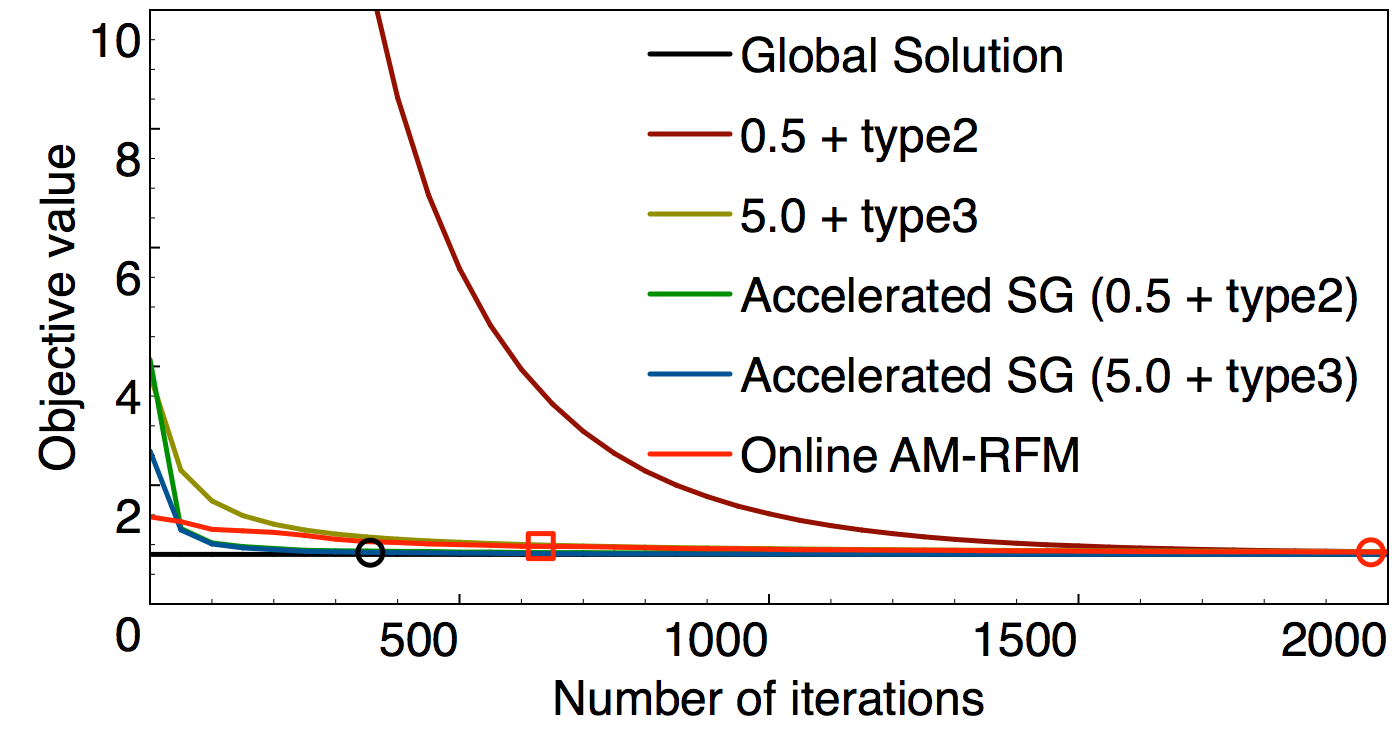}\label{fig: comparison of different SGD methods5}}
   \subfigure[Elastic net ($\nuh = 1.0, \nud = 0.5$)]{\includegraphics[width=0.48\textwidth]{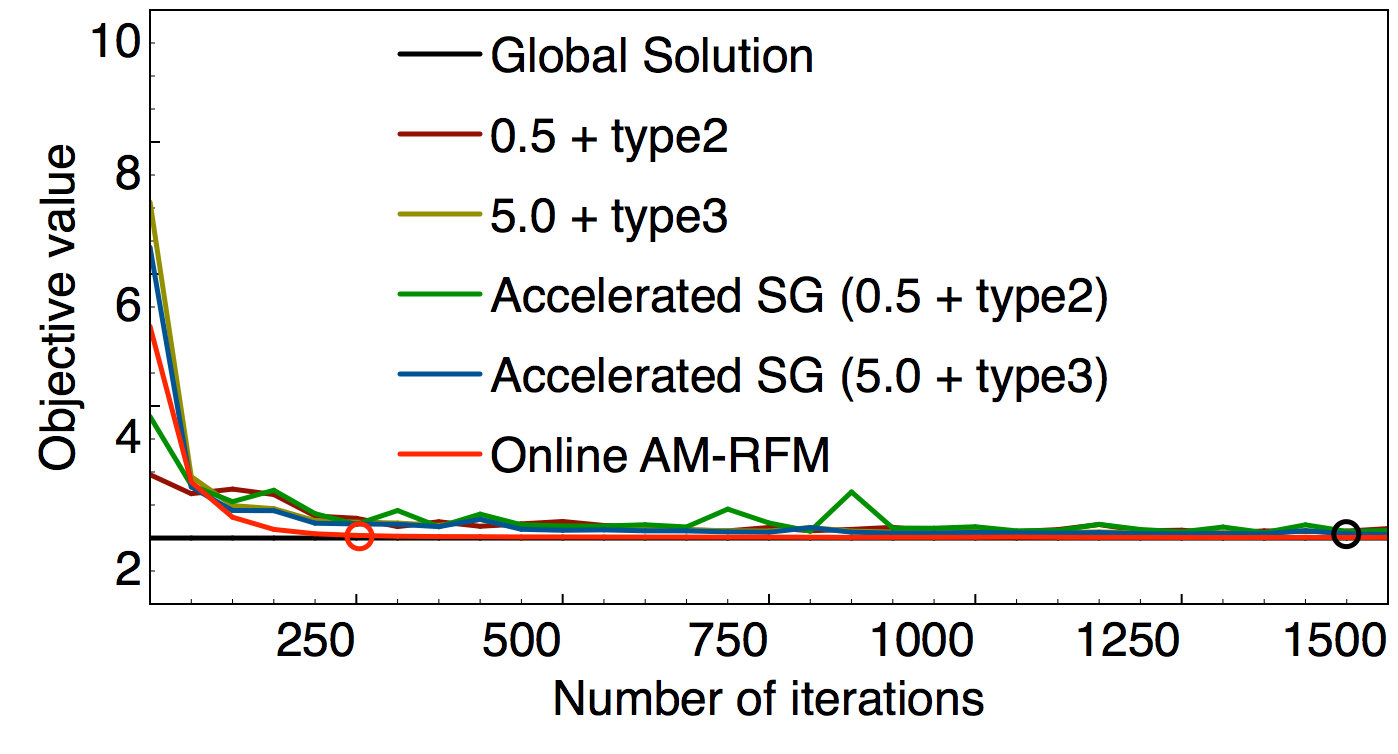}\label{fig: comparison of different SGD methods6}}
  \caption{ Comparison of different incremental algorithms for induced \RFMs\ on data randomly drawn from the Extended Yale Face Database B, with $\nsamples = 1000$ and $\xdim = 100$. All the settings are the same as in Figure \ref{fig:comparison2} except for that the black marks are for Accelerated SG (5.0 + type3) rather than Accelerated SG (0.5 + type2). In this experiment, setting $\stepsize_0 = 5.0$ and using type 3 decay converges faster than setting $\stepsize_0 = 0.5$ and using type 2. Accelerated SG still accelerates SGD AM-\RFM\   in (a), and appears to have little impact in (b) where SGD AM-\RFM\   already converges fast. The methods with momentum term are not shown in the plots, because they are almost the same as or even a bit worse than the ones without momentum, denoting that adding momentum is not guaranteed to improve the convergence of the SGD algorithm. We also notice that online AM-\RFM\   always needs much more time to converge than all the stochastic methods, even though it converges within fewer steps. For example, in (b), the time needed for online AM-\RFM\   to reach the red circle is around 1400\% of the time for Accelerated SG (5.0 + type3) to reach the black circle, where the solutions are within 5\% of the batch solution, despite that the red circle is within much fewer steps than the black one. 
  }         
  \label{fig:comparison3}
\end{figure}

\subsection{Experimental results}\label{sec_experiments}

In this section, we empirically investigate the properties of these incremental estimation strategies for induced \RFMs. 
Previous sections indicated the global optimality of the alternating minimization
for the batch setting; our goal in this section is to empirically demonstrate that this result holds for incremental estimation, and provide insight
into which strategies are most effective for incremental estimation. In addition to online AM-\RFM\   and the accelerations to SGD AM-\RFM,
we explore more basic step-size selection strategies, including constant step-sizes and standard decay schedules. 
We include results for poorly selected step-sizes, and do not necessarily advocate for any one algorithm.
Our goal is to provide preliminary insights into the incremental properties for induced \RFMs, 
to make it simpler for others to practically select and use these algorithms. 

The experiments compare the objective value of the incremental approaches to the global batch solution.
At each iteration, the objective value for the current $\Dmat$ is computed as 
$$\regwgt \sum_{i=1}^\ldim \| \Dmat_{:i} \|_c^2 + \frac{1}{\nsamples} \min_{\Hmat \in \RR^{\ldim \times \nsamples}} \sum_{\sampiter=1}^\nsamples \| \Dmat \Hmat_{:\sampiter} - \Xmat_{:\sampiter} \|_2^2 + \regwgt \sum_{i=1}^\ldim \| \Hmat_{i:} \|_r^2
.$$ 
The loss function is a least-squares loss, to enable comparison with online AM-\RFM.
The regularizer weight is $\regwgt = 0.05$, where the regularizer on $\Hmat$ is appropriately scaled
with the number of samples (as discussed in Section \ref{sec_samples}). 
The data is synthetically generated from a unit-variance Gaussian, with $\xdim = 50$ and $\nsamples = 100$. A
smaller $\nsamples$ was chosen to make it computationally feasible to run the batch algorithm, to give the global solution for comparison. 
The incremental algorithms iterate over the dataset multiple times, with a random reshuffling of data each time. 
The results are summarized in Figures \ref{fig:comparison1}, \ref{fig:comparison2} and \ref{fig:comparison3}.
Figure \ref{fig:comparison1} investigates properties of fixed step-sizes and decay schedules for SGD AM-\RFM. 
Figure \ref{fig:comparison2} and \ref{fig:comparison3} investigate the properties of all the algorithms, including SGD AM-\RFM\   with accelerations and online AM-\RFM, on synthetic and real data. 

The results are described more fully in the figure captions, but we provide the overall conclusions here. 1) SGD converges to a good solution much more efficiently than batch AM-\RFM; 2) both accelerations to SGD AM-\RFM\   improve the convergence rate and 3) online AM-\RFM\ converges in only a few epochs and is effective with line-search to select step-sizes, but is quite slow per step. The selection of step-sizes for SGD is not particularly sensitive, and even a relatively aggressive step size with decay converged after some oscillation. The step-size selection is even less sensitive with acceleration, because a more conservative list of step-sizes can be chosen. In this way, the algorithm can keep a larger step size for a number of steps until the sign changes and then pick
a more conservative step-size from the list. However, if one wants to avoid picking a step-size and more memory and computation is available, online AM-\RFM\   is a reasonable alternative.

\section{Related work}\label{sec_related}

There has been a recent focus on understanding alternating minimization for
problems in machine learning, and on effective strategies for escaping saddle points, particularly for neural networks.
We summarize much of this work here, to better complete the emerging picture for strategies
to optimize such biconvex objectives.

The most general and relevant related work is the treatise by \cite{haeffele2015global} on global optimality
for a wide class of models, including dictionary learning models and neural networks with rectified linear activations.
They show that when the solutions are rank-deficient, in their case specifically having a column of all zeros
in their factors, that local minima are in fact global minima. 
This work complements the results in this paper, because we focused on the overcomplete setting, where the solution is not rank deficient. 
Other more specific global optimality results, that largely fit within the results by \cite{haeffele2015global}, include
a setting with a least-squares loss,
unweighted trace norm and conditions on the value of the loss at the given stationary point \citep{mardani2013decentralized};
an unregularized setting \citep[Proposition 5]{abernethy2009anew};
 an online robust PCA setting where the true rank is assumed to be known \citep{feng2013online};
 and a recent result on local minima for deep linear neural networks \cite{kawaguchi2016deep}.  

A common strategy in previous efforts to obtain global solutions for biconvex problems
has been to provide careful initialization strategies and assume sparsity or coherence properties on the matrices. 
\citet{jain2013low}
showed that alternating minimization for matrix completion provides global solutions, assuming
that the given matrix and optimal factors are incoherent matrices \citep[Definition 2.4]{jain2013low}, 
which enables them to define a suitable initial point using a singular value decomposition. 
\citet{gunasekar2013noisy} and \citet{hardt2014ontheprovable} further generalize these assumptions, particularly to the noisy setting
and \citet{netrapalli2015phase} to the setting for phase retrieval. 
For sparse coding, \citet{spielman2012exact} showed how to recover the optimal dictionary exactly, 
assuming it has full column rank and assuming a maximum sparsity level on the basis. Since then,
most work has focused on the overcomplete case, where the dictionary is not full column rank. 
\citet{agarwal2017aclustering} showed that alternating minimization
for sparse coding with thresholding gives global solutions, if the dictionary satisfies the restricted
isometry property \cite{candes2005decoding} and using a provable strategy to initialize the dictionary to be near optimal.
Previously, both \citet{agarwal2017aclustering} and \citet{arora2014new} had independently provided a similar algorithm, with incoherence requirements
on the matrices, with slightly different requirements on the sparsity level. 
\citet{arora2015simple} built on this work and expanded the level of possible sparsity.

A separate line of work has required fewer assumptions on the true matrices and pursued initialization strategies by generating candidate columns, typically using
singular value decompositions \cite{aspremont2004direct,bach2008convex,journee2010low,zhang2012accelerated,mirzazadeh2015scalable}.
The bulk of this work has been on the semi-definite case, including seminal work by \citet{burer2005local},
where the two factors are the same, $\joint = \Dmat \Dmat^\top$.
The exception is the GCG algorithm \cite{zhang2012accelerated,yu2014generalized}, which generates columns and rows for $\Dmat$ and $\Hmat$ for some of the induced \RFMs\ considered in this work. Though the boosting algorithm is quite general, it requires
an oracle solution for a problem that is intractable except for in a small number of cases, which mostly correspond to those with known induced regularizers. 
Most of this work has not characterized the saddle-points of the non-convex optimization, 
with the exception of a recent work for the semi-definite setting which identified a smoothness condition on the
gradient of the loss that ensured convergence to global solutions \citep[Proposition 2]{mirzazadeh2015scalable}.
The results in this work suggest that it is likely unnecessary to carefully generate candidate columns and rows,
though for large $\xdim$, progressively increasing $\ldim$ could be computationally attractive. 

Another strategy has been to use the convex reformulations obtained for induced \RFMs.
For example, instead of alternating on $\Dmat$ and $\Hmat$ in the subspace formulation,
$\joint$ is solved for directly in trace norm regularized problem. From there, the
factors $\Dmat$ and $\Hmat$ could typically be recovered, usually with iterative methods. 
The problems tackled with this approach include
relaxed rank exponential family PCA \cite{bach2008convex,zhang2011convex}, 
matrix completion \cite{cai2010singular},
(semi-)supervised dictionary learning \cite{goldberg2010transduction,zhang2011convex},
multi-view learning \cite{white2012convex},
co-embedding \cite{mirzazadeh2014convex}
and
autoregressive moving average models \cite{white2015optimal}.
These approaches, however, are often not as scalable as learning the factors $\Dmat$ and $\Hmat$, which
can be smaller. Further, because $\joint$ grows with the size of the data, and the induced regularizer
often couples entries across $\joint$, the induced convex reformulation is typically not amenable to incremental estimation.
As an additional problem, 
for the general class of induced \RFMs, the induced convex formulation
is only of theoretical value, since there is no known form for the induced norm
and so the optimization over $\joint$ cannot be performed. 
There has been some effort to obtain approximations to the convex form for the elastic net, which does not have a known induced form \citep{bach2008convex};
the proposed approach, however, uses a convex lower bound and the relation to the elastic net solution is unclear. 
The alternating minimization approach avoids the need to have an explicit
form for the induced norm, and rather only uses the property that this class has induced forms to identify promising situations for global optimality.
  

For incremental estimation of induced \RFMs, there are significantly fewer approaches.
Outside of incremental principal components analysis \cite{warmuth2008randomized,feng2013online} and partial least-squares \cite{arora2012stochastic},
little is known about how to obtain optimal incremental \RFM\   estimation
and several approaches have settled for local solutions \cite{mairal2009online,mairal2009supervised,mairal2010online}.
\citet{desa2015global} prove convergence to a global solution, with random initialization, using incremental, stochastic gradient descent algorithms
for an interesting subset of induced \RFMs, including for matrix completion, phase retrieval and subspace tracking. 
These problems, however, are formulated as semi-definite problems, again by learning $\joint = \Dmat \Dmat^\top$. 
The incremental estimation results in this work apply to a broader class of models---induced \RFMs---
and provide justification for a simple descent approach with stochastic gradient descent. 

 
\section{Conclusion and Discussion}

In this work, we addressed tractable estimation of induced dictionary learning models (\RFMs),
which constitute a broad
and widely used formalism for unsupervised, semi-supervised
and supervised learning problems. 
We generalized the class of induced \RFMs\ and provided several
new objectives and characterizations of stationary points. 
We proved that local minima are global minima
for a subclass of induced \RFMs\ and showed that there are no degenerate saddlepoints for one setting. 
Further, we empirically
demonstrated that alternating minimization results in global minima for an even broader class. 
Motivated by these insights, we then empirically investigated the practical estimation properties
of simple alternating minimization algorithms, particularly for incremental estimation. 
The main theoretical novelty in this work is to characterize the stationary points where the learned factors are full rank,
as opposed to the rank-deficient setting which has been more thoroughly characterized. 
 
 A key insight from this paper is that
that we can  
identify tractable dictionary learning objectives
using the class of induced \RFMs. For those dictionary learning methods that
can be specified as induced \RFMs, we provide compelling evidence that local minima are global minima and, in some cases, that there are no degenerate saddlepoints. Consequently, a simple alternating minimization scheme is effective for obtaining global solutions.
To be more precise, we show a general result for the optimality of stationary points for the induced form, and highlight a necessary condition---the induced regularization property---to prove a similar result for the factored form. We then show one setting that satisfies the induced regularization property, i.e., for the case of strictly convex regularizers and Holder continuous losses. These general results provide insights into specific settings of interest, 
showing that local minima are global minima for 
\begin{enumerate}[leftmargin=0cm,itemindent=0.5cm]
\item matrix completion using squared regularizers on the factors, with any invertible weighting on the squared regularizer on $\Dmat$, where the inner dimension $\ldim$ can be set less than $\xdim$ (Theorem \ref{thm_weighted})
\item sparse coding with a pseudo-Huber smooth approximation to the $\ell_1$ regularizer, for sufficiently large $\ldim$ (Corollary \ref{cor_en})
\item dictionary learning with the elastic net on either $\Dmat$ and $\Hmat$, where again the $\ell_1$ regularizer is approximated with the pseudo-Huber function and $\ldim$ is sufficiently large (Corollary \ref{cor_en}).
\end{enumerate}
The result is proved for the more general class of induced \RFMs,
as opposed to previous results which were proved for each specific setting. 
Therefore, though the results from this paper provide novel insights for
each of these specific settings, the larger contribution
is to provide a new proof strategy for characterizing stationary points
of dictionary learning methods. 

The generality of this result, however, is limited in two important ways. 
First, to characterize saddlepoints using directional derivatives, we have assumed that
the losses and regularizers are differentiable. Consequently,
the results do not directly apply to standard sparse regularizers, such as $\ell_1$.
Nonetheless, they do apply to smooth approximations, such as the pseudo-Huber smooth approximation.
The pseudo-Huber function has a parameter $\sigma$; when $\sigma \rightarrow 0$, the pseudo-Huber
approaches $\ell_1$. The results in this paper apply to arbitrarily small $\sigma$;
a limit argument should extend the results to these non-smooth regularizers. An important next step, therefore,
is to
formally characterize this intuition, so that these results directly apply to settings such as sparse coding.

A second limitation is that the results do not yet provide a precise specification for the inner dimension, $\ldim$. 
The proof requires $\ldim$ to be large enough, to ensure that $\regzk$ is convex.
We know that for $\ldim$ at least as large as the true induced dimension,
then $\regzk$ is equal to the induced regularizer $\regz$ which is guaranteed to be convex.
For smaller $\ldim$, however, $\regzk$ could still be convex,
or potentially have nice properties such as pseudo-convexity. 
The increase of $\ldim$ has a diminishing returns property,
and we expect $\regzk$ to behave similarly to $\regz$ before $\ldim$ actually reaches the magnitude 
of the true induced dimension.
In fact, we find that $\ldim$ can be quite a bit
smaller than the true induced dimension and alternating minimization still provides global solutions.
We hypothesize that this is in fact because $\regzk$ even for small $\ldim$ is already close to being a convex function (e.g., it could be pseudoconvex).
To provide an even more general result, and to characterize what we see in practice,
a useful direction would be to extend the proof to pseudo-convex induced regularizers
and to investigate specific choices of regularizers and $\ldim$ that produce pseudo-convex induced regularizers.
\bibliography{paper}

\newpage
\appendix

%

\section{Additional information about regularizers} \label{app_regularizers}

\begin{proposition} \label{prop_convex}
For any $m$ non-negative convex functions $g_i: \RR^\xdim \rightarrow \RR^+$,  
$g(\dvec) = \sqrt{g_1^2(\dvec) + \ldots g_m^2(\dvec)}$ is convex.
Further, if the $g_i$ are all norms, then $g$ is a norm. 
\end{proposition}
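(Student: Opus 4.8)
The plan is to recognize that $g(\dvec) = \| G(\dvec) \|_2$, where $G(\dvec) = (g_1(\dvec), \ldots, g_m(\dvec)) \in \RR^m$ is the vector of inner function values, so that $g$ is the composition of the Euclidean norm with the map $G$. Convexity then follows from a composition principle: the Euclidean norm $\| \cdot \|_2$ is convex and, crucially, \emph{monotone non-decreasing in each coordinate on the non-negative orthant} $\RR_+^m$ (since $\partial \| \yvec \|_2 / \partial \yvec_i = \yvec_i / \| \yvec \|_2 \ge 0$ when $\yvec_i \ge 0$), while each $g_i$ is convex and non-negative, so that $G$ maps into $\RR_+^m$. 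Rather than invoke a general composition theorem, I would argue the inequality directly, which also makes transparent exactly where non-negativity is used.

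Concretely, fix $\dvec_1, \dvec_2$ and $\lambda \in [0,1]$, and write $a_i = g_i(\dvec_1)$, $b_i = g_i(\dvec_2)$, and $c_i = g_i(\lambda \dvec_1 + (1-\lambda)\dvec_2)$, all non-negative. Convexity of each $g_i$ gives $0 \le c_i \le \lambda a_i + (1-\lambda) b_i$, and since the right-hand side is itself non-negative, squaring preserves the inequality: $c_i^2 \le (\lambda a_i + (1-\lambda) b_i)^2$. Summing over $i$ and taking square roots is precisely the monotonicity of $\| \cdot \|_2$ on $\RR_+^m$, yielding $\| \cvec \|_2 \le \| \lambda \avec + (1-\lambda) \bvec \|_2$. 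The triangle inequality for $\| \cdot \|_2$ then gives $\| \lambda \avec + (1-\lambda)\bvec \|_2 \le \lambda \| \avec \|_2 + (1-\lambda) \| \bvec \|_2$, which is exactly $g(\lambda \dvec_1 + (1-\lambda)\dvec_2) \le \lambda g(\dvec_1) + (1-\lambda) g(\dvec_2)$. This is the same squaring device used in the proof of Proposition \ref{prop_generalization}, so the argument dovetails with the existing development.

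For the second claim, assuming each $g_i$ is a norm, I would verify the norm axioms. Absolute homogeneity is immediate: $g(\alpha \dvec) = \sqrt{\sum_i \alpha^2 g_i^2(\dvec)} = |\alpha|\, g(\dvec)$. Positive definiteness holds because $g(\dvec) = 0$ forces every $g_i(\dvec) = 0$, and a single $g_i$ being a norm already forces $\dvec = \zerovec$ (assuming $m \ge 1$). The triangle inequality then comes for free from convexity together with homogeneity: $g(\dvec_1 + \dvec_2) = 2\, g\bigl(\tfrac{1}{2}\dvec_1 + \tfrac{1}{2}\dvec_2\bigr) \le g(\dvec_1) + g(\dvec_2)$, applying the convexity inequality already proven with $\lambda = \tfrac{1}{2}$ and then homogeneity with $\alpha = 2$.

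The only genuinely load-bearing step is the interaction between monotonicity of $\| \cdot \|_2$ on the non-negative orthant and the non-negativity of the $g_i$: without the sign control guaranteeing $0 \le c_i$ and $0 \le \lambda a_i + (1-\lambda) b_i$, the squaring step would not preserve the inequality direction, and the composition could fail to be convex. This is precisely why the hypothesis that the $g_i$ are non-negative cannot be dropped. Everything else is routine bookkeeping.
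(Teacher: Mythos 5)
Your proof is correct and follows essentially the same route as the paper's: both view $g$ as the composition of the $\ell_2$ norm with the vector $(g_1,\ldots,g_m)$, use monotonicity of $\ell_2$ on the non-negative orthant together with convexity and non-negativity of each $g_i$ for the first inequality, then the convexity of $\ell_2$ for the second, and both derive the triangle inequality for the norm case from convexity at $\lambda = \tfrac{1}{2}$ plus homogeneity. No gaps.
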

\begin{proof}
Notice that $g$ can be seen as the composition of the $\ell_2$ norm on the
vector of $m$ values of $g_i$ applied to $\dvec$. 
We can use the convexity properties of both to obtain the result.
Take any $\eta \in [0,1]$ and any $\avec, \bvec \in \RR^{\xdim}$.
\begin{align*}
g(\eta \avec + (1-\eta) \bvec) 
&= \ell_2( [g_1(\eta \avec + (1-\eta) \bvec), \ldots, g_m(\eta \avec + (1-\eta) \bvec)] )\\
&\le \ell_2( [\eta g_1(\avec) + (1-\eta) g_1(\bvec), \ldots, \eta g_m(\avec) + (1-\eta) g_m(\bvec)] )\\
&= \ell_2( \eta [g_1(\avec), \ldots, g_m(\avec)]  + (1-\eta) [ g_1(\bvec), \ldots, g_m(\bvec)] )\\
&\le \eta \ell_2( [g_1(\avec), \ldots, g_m(\avec)])  + (1-\eta) \ell_2([ g_1(\bvec), \ldots, g_m(\bvec)] )\\
&= \eta g(\avec) + (1-\eta) g(\bvec) 
\end{align*}
The first inequality follows from the fact that $\ell_2$ is monotonically increasing on the set of non-negative values
and the $g_i$ are all non-negative, so for any $\dvec \in \RR^\xdim$, $b \in \RR^+$ such that, 
$g_i(\dvec) \le b$ implies $g_i^2(\dvec) \le b^2$.
Therefore, $g$ is convex.

If the $g_i$ are norms, then clearly $g$ satisfies absolute homogeneity $g(\eta \dvec) = \eta g(\dvec)$
and $g(\dvec) = 0$ implies $\dvec = \zerovec$. The triangle inequality follows similarly to above, obtained by setting $\eta = 0.5$
and using the absolute homogeneity of $g$. 
\end{proof}

We can use this result to use separate norms on different parts of the columns or rows of $\Dmat$ and $\Hmat$. 
This is particularly useful for supervised learning, where $\Dmat$ is partitioned into two components, one for
the unsupervised recovery and the other for supervised learning. 
\begin{corollary}\label{cor_regularizer_separate}
For some $m < \xdim$ with two functions $g_1: \RR^m \rightarrow \RR^+$ and $g_2: \RR^{\xdim - m} \rightarrow \RR^+$ which act on the two partitions of $\dvec \in \RR^{\xdim}$, 
the regularizer $g(\dvec) = \sqrt{g_1^2(\dvec_{1:m}) + g_2^2(\dvec_{(m+1):\xdim})}$ is a convex, non-negative function in $\RR^\xdim$. 
\end{corollary}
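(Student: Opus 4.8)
The plan is to reduce the corollary directly to Proposition \ref{prop_convex} by lifting the two partition functions $g_1$ and $g_2$ to functions defined on all of $\RR^\xdim$. First I would introduce the coordinate projections $P_1: \RR^\xdim \rightarrow \RR^m$ and $P_2: \RR^\xdim \rightarrow \RR^{\xdim - m}$, given by $P_1 \dvec = \dvec_{1:m}$ and $P_2 \dvec = \dvec_{(m+1):\xdim}$, and define the lifted functions $\tilde{g}_1(\dvec) = g_1(P_1 \dvec)$ and $\tilde{g}_2(\dvec) = g_2(P_2 \dvec)$ on $\RR^\xdim$.

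Next I would verify that $\tilde{g}_1$ and $\tilde{g}_2$ satisfy the hypotheses of Proposition \ref{prop_convex}, namely that they are non-negative convex functions on $\RR^\xdim$. Non-negativity is inherited immediately from $g_1, g_2$, since $P_1, P_2$ map into the respective domains on which $g_1, g_2$ are non-negative. Convexity follows because $P_1, P_2$ are linear, and precomposing a convex function with a linear map preserves convexity: for any $\avec, \bvec \in \RR^\xdim$ and $\eta \in [0,1]$, $\tilde{g}_1(\eta \avec + (1-\eta)\bvec) = g_1(\eta P_1\avec + (1-\eta) P_1 \bvec) \le \eta g_1(P_1 \avec) + (1-\eta) g_1(P_1 \bvec) = \eta \tilde{g}_1(\avec) + (1-\eta)\tilde{g}_1(\bvec)$, and identically for $\tilde{g}_2$.

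Then I would apply Proposition \ref{prop_convex} in the case of two functions $\tilde{g}_1, \tilde{g}_2$, which yields that $\dvec \mapsto \sqrt{\tilde{g}_1^2(\dvec) + \tilde{g}_2^2(\dvec)}$ is convex. Since $\tilde{g}_1(\dvec) = g_1(\dvec_{1:m})$ and $\tilde{g}_2(\dvec) = g_2(\dvec_{(m+1):\xdim})$ act on disjoint coordinate blocks, this expression coincides exactly with $g(\dvec)$, giving the claimed convexity; non-negativity of $g$ is immediate as a square root of a sum of squares. There is no real obstacle in this argument: the entire content is the observation that restricting arguments to disjoint coordinate blocks is the same as precomposing with linear projections, so that the partitioned regularizer falls within the scope of Proposition \ref{prop_convex}. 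The only point requiring any care is confirming that the lifted functions remain non-negative and convex on the full space $\RR^\xdim$, which is the standard fact that both properties are preserved under composition with linear maps.
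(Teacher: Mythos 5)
Your proposal is correct and follows essentially the same route as the paper's own proof: the paper likewise extends $g_1$ and $g_2$ to all of $\RR^\xdim$ by letting the other coordinate block not influence the value (i.e., precomposing with the coordinate projections) and then invokes Proposition \ref{prop_convex}. You simply make explicit the verification that convexity and non-negativity are preserved under composition with the linear projections, which the paper leaves implicit.
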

\begin{proof}
We can extend both functions $g_1$ and $g_2$ to the domain $\RR^{\xdim}$, simply by having those parts not influence the function value.
These extended functions are still non-negative, and so satisfy the conditions of Proposition \ref{prop_convex}.
The function $g$ defined with the extended functions is exactly the same, and so we obtain the desired result. 
\end{proof}

\section{Proof of Global Optimality for Weighted Frobenius Norm}\label{app_weighted}

We prove the result for Theorem \ref{thm_weighted} in this section. We do so by proving the more general results in Theorem \ref{thm_weighted_general}, where in the main text we presented a simpler statement that highlighted the properties of saddlepoints. 

We consider a general optimization, with a row weighting $\Diagmat$ on $\Dmat$. 
\begin{align*}
\min_{\genrankset} \jointloss(\Dmat\Hmat) + \tfrac{\regwgt}{2} \frobsq{\Diagmat\Dmat} + \tfrac{\regwgt}{2 \hscale^2}  \frobsq{\Hmat} 
\end{align*}
Without loss of generality, we can adjust the diagonal weighting $\Diagmat$, and avoid the cluttered additional weighting
using $s^2$. This can be done by setting $\regwgt = \frac{\regwgt_{\text{original}}}{\hscale^2}$
and $\Diagmat = \hscale^2 \Diagmat_{\text{original}}$. We do not absorb $\regwgt$ into $\Diagmat$, because for simplicity of the proof,
we require an invertible $\Diagmat$, but allow $\regwgt = 0$ to cover the unregularized setting.

\begin{theorem} \label{thm_weighted_general}
For invertible $\Diagmat \in \RR^{\xdim \times \xdim}$ and $\regwgt \ge 0$, 
let $(\Dalt,\Halt)$ be a stationary point of
%
\begin{align}
\min_{\genrankset} \jointloss(\Dmat\Hmat) + \tfrac{\regwgt}{2} \frobsq{\Diagmat\Dmat} + \tfrac{\regwgt}{2}  \frobsq{\Hmat} 
\tag{\ref{eq_localweighted}}
\end{align}
%
%
If either 
\begin{enumerate}
\item $\| \Diagmat^{-\top}  \nabla  \jointloss(\Dalt \Halt) \|_2 \le \regwgt$
(i.e., the maximum singular value is $\regwgt$); or
\item the Hessian is positive semi-definite 
with $(\Dalt,\Halt)$ rank-deficient
\end{enumerate}
then $\joint = \Dalt \Halt$ is a globally optimal solution to 
\begin{align}
\argmin_{\joint \in \RR^{\xdim \times \nsamples}} \jointloss(\joint) + \regwgt \tracenorm{\Diagmat \joint}
. \label{eq_globalweighted}
\end{align}
\end{theorem}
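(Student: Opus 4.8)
The plan is to prove global optimality by verifying the first-order condition for the \emph{convex} objective $F(\joint) = \jointloss(\joint) + \regwgt\tracenorm{\Diagmat\joint}$, for which $\joint$ is a global minimizer exactly when $\zerovec \in \partial F(\joint)$. Since $\Diagmat$ is invertible (hence surjective), the chain rule for convex functions gives $\partial F(\joint) = \nabla\jointloss(\joint) + \regwgt\,\Diagmat^\top\partial\tracenorm{\Diagmat\joint}$, so it suffices to exhibit a matrix $\Gmat \in \partial\tracenorm{\Diagmat\joint}$ with $\nabla\jointloss(\joint) + \regwgt\Diagmat^\top\Gmat = \zerovec$, i.e.
\[
\Gmat = -\regwgt^{-1}\Diagmat^{-\top}\nabla\jointloss(\Dalt\Halt)
\]
at $\joint = \Dalt\Halt$ (the degenerate case $\regwgt = 0$ is treated at the end). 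I would verify membership via the dual-norm characterization of the subdifferential of a norm: since the spectral norm $\twonorm{\cdot}$ is dual to the trace norm, $\Gmat \in \partial\tracenorm{\Mmat}$ holds iff $\twonorm{\Gmat} \le 1$ and $\inner{\Gmat}{\Mmat} = \tracenorm{\Mmat}$.

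First I would write out the two stationarity equations, $\nabla\jointloss(\joint)\Halt^\top + \regwgt\Diagmat^\top\Diagmat\Dalt = \zerovec$ and $\Dalt^\top\nabla\jointloss(\joint) + \regwgt\Halt = \zerovec$. Left-multiplying the first by $\Dalt^\top$ and substituting the second yields the \emph{balance identity} $\Halt\Halt^\top = (\Diagmat\Dalt)^\top(\Diagmat\Dalt)$. Using the second equation directly, $\inner{\Gmat}{\Diagmat\joint} = -\regwgt^{-1}\trace{\nabla\jointloss(\joint)^\top\joint} = -\regwgt^{-1}\inner{\Dalt^\top\nabla\jointloss(\joint)}{\Halt} = \frobsq{\Halt}$. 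To identify this with $\tracenorm{\Diagmat\joint}$, I would feed the balance identity into the SVD: writing $\Halt = \Umat_H\Sigmamat_H\Vmat_H^\top$, balance forces $\Diagmat\Dalt = \Umat_D\Smat\Umat_H^\top$ with $\Smat^\top\Smat = \Sigmamat_H\Sigmamat_H^\top$ and $\Umat_D$ having orthonormal columns, so that $\Diagmat\joint = \Diagmat\Dalt\Halt = \Umat_D(\Smat\Sigmamat_H)\Vmat_H^\top$ is an SVD whose singular values are the \emph{squared} singular values of $\Halt$; hence $\tracenorm{\Diagmat\joint} = \frobsq{\Halt} = \inner{\Gmat}{\Diagmat\joint}$. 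Under Condition 1, $\twonorm{\Gmat} = \regwgt^{-1}\twonorm{\Diagmat^{-\top}\nabla\jointloss(\joint)} \le 1$, so both requirements hold and $\Dalt\Halt$ is globally optimal.

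For Condition 2, the plan is to show that rank-deficiency plus a positive semidefinite Hessian forces Condition 1. Rank-deficiency gives $\rank\Dalt,\rank\Halt < \ldim$, and the balance identity makes the right null space of $\Dalt$ coincide with the left null space of $\Halt$, so there is a unit $\zvec \in \RR^\ldim$ with $\Dalt\zvec = \zerovec$ and $\Halt^\top\zvec = \zerovec$. For arbitrary $\uvec \in \RR^\xdim$, $\vvec \in \RR^\nsamples$ I would perturb along the line $(\Dalt + t\uvec\zvec^\top,\ \Halt + t\zvec\vvec^\top)$; because $\zvec$ is annihilated on both sides, the product is \emph{exactly} $\Dalt\Halt + t^2\uvec\vvec^\top$ and the two regularizers increase by exactly $t^2\tfrac{\regwgt}{2}(\twonorm{\Diagmat\uvec}^2 + \twonorm{\vvec}^2)$. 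Since the curve is affine in $t$ and the point is stationary, the PSD Hessian forces the second-order coefficient to be nonnegative, giving
\[
\uvec^\top\nabla\jointloss(\joint)\vvec + \tfrac{\regwgt}{2}\twonorm{\Diagmat\uvec}^2 + \tfrac{\regwgt}{2}\twonorm{\vvec}^2 \ge 0
\]
for all $\uvec,\vvec$. Setting $\tilde\uvec = \Diagmat\uvec$ (a bijection) rewrites this as $\tilde\uvec^\top\Mmat\vvec + \tfrac{\regwgt}{2}\twonorm{\tilde\uvec}^2 + \tfrac{\regwgt}{2}\twonorm{\vvec}^2 \ge 0$ with $\Mmat = \Diagmat^{-\top}\nabla\jointloss(\joint)$; choosing $\tilde\uvec,\vvec$ to be the leading left/right singular vectors of $\Mmat$ with opposite signs reduces it to $\regwgt - \twonorm{\Mmat} \ge 0$, which is precisely Condition 1.

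The main obstacle I expect is the bookkeeping in the balance-plus-SVD step, in particular confirming that the balance identity pins down the singular structure of $\Diagmat\Dalt$ up to the rotational freedom in degenerate singular subspaces that does not affect $\tracenorm{\Diagmat\joint}$; the dimension counting between $\Smat$, $\Sigmamat_H$, and $\Umat_H$ must be handled carefully. The remaining subtlety is the edge case $\regwgt = 0$, where $\Gmat$ is undefined: there Condition 1 reads $\nabla\jointloss(\joint) = \zerovec$, and in Condition 2 the displayed inequality forces $\Mmat = \zerovec$ and hence $\nabla\jointloss(\joint) = \zerovec$ as well, so $\joint$ minimizes the convex loss directly and \eqref{eq_globalweighted} reduces to minimizing $\jointloss$.
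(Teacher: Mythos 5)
Your proposal is correct, and it reaches the conclusion by a genuinely different route for Condition 1. The paper lifts the weighted trace norm to its semidefinite representation, forms the Lagrangian with a block multiplier $\Mmat$, and verifies all KKT conditions for explicit candidates ($\Wmat_1 = \Diagmat\Dalt\Dalt^\top\Diagmat^\top$, $\Wmat_2 = \Halt^\top\Halt$, $\Mmat_2 = \tfrac{1}{2}\Diagmat^{-\top}\nabla\jointloss$), with dual feasibility reduced via a Schur complement to exactly your spectral-norm bound. You instead certify $\zerovec \in \partial\bigl(\jointloss + \regwgt\tracenorm{\Diagmat\,\cdot}\bigr)(\joint)$ directly through the dual-norm characterization of $\partial\tracenorm{\cdot}$, using the balance identity $\Halt\Halt^\top = (\Diagmat\Dalt)^\top(\Diagmat\Dalt)$ to establish the alignment condition $\inner{\Gmat}{\Diagmat\joint} = \tracenorm{\Diagmat\joint}$. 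This is shorter and avoids the SDP machinery; the SVD bookkeeping you worry about can even be bypassed entirely by Hölder plus von Neumann: $\frobsq{\Halt} = \inner{\Gmat}{\Diagmat\joint} \le \twonorm{\Gmat}\tracenorm{\Diagmat\joint} \le \tracenorm{\Diagmat\joint} \le \frob{\Diagmat\Dalt}\,\frob{\Halt} = \frobsq{\Halt}$, forcing equality throughout. For Condition 2 your argument is essentially the paper's (second-order expansion along directions whose first-order effect on $\joint$ vanishes, then a choice of vectors extracting $\twonorm{\Diagmat^{-\top}\nabla\jointloss} \le \regwgt$), but your rank-one packaging via a common null vector $\zvec$ is cleaner and makes explicit \emph{why} the cross term survives with unit coefficient --- the balance identity forces $\nulls(\Dalt)$ and $\nulls(\Halt^\top)$ to coincide --- a point the paper's rotated-direction construction leaves implicit. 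One small wrinkle: your $\regwgt = 0$ treatment of Condition 2 is circular as written, since the balance identity (and hence the common null vector) is derived by dividing by $\regwgt$; the paper's handling of that corner is no more careful, so this does not distinguish the two arguments, but it is worth patching if you write this up.
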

\begin{proof}


Since $\Dalt$ and $\Halt$ are stationary points of \eqref{eq_localweighted}, we know
\begin{align}
\nabla_\Dmat = \nabla \jointloss(\Dalt \Halt) \Halt^\top + \regwgt \Diagmat^\top\Diagmat\Dalt = \zerovec \label{eq_Fmat}\\
\nabla_\Hmat = \Dalt^\top \nabla \jointloss(\Dalt \Halt) + \regwgt \Halt = \zerovec \label{eq_Hmat}
\end{align}
By multiplying both sides by $\Dalt^\top$ or $\Halt^\top$ respectively, and
taking the trace, we get
\begin{align}
\trace{\nabla \jointloss(\Dalt \Halt) \Halt^\top\Dalt^\top} + \regwgt \trace{\Diagmat^\top \Diagmat \Dalt \Dalt^\top} = \zerovec \label{eq_localone}\\
\trace{\Halt^\top\Dalt^\top \nabla \jointloss(\Dalt \Halt)} + \regwgt\trace{\Halt^\top \Halt} = \zerovec \label{eq_localtwo}
\end{align}
Now, we know that the trace norm can be characterized as follows \citep{recht2010guaranteed}
\begin{align*}
\tracenorm{\joint} &= \min_{\Wmat_1, \Wmat_2} \tfrac{1}{2}\trace{\Wmat_1} + \tfrac{1}{2} \trace{\Wmat_2} 
\text{ s.t. } \Wmat = 
\left(\begin{array}{cc} 
\Wmat_1 & \joint\\
\joint^\top & \Wmat_2
\end{array}\right) \succeq 0
\end{align*}
giving weighted norm
\begin{align*}
\tracenorm{\Diagmat\joint} &= \min_{\Wmat_1, \Wmat_2}  \tfrac{1}{2}\trace{\Wmat_1} + \tfrac{1}{2}  \trace{\Wmat_2}
\text{ s.t. } \Wmat = 
\left(\begin{array}{cc} 
\Wmat_1 & \Diagmat\joint\\
\joint^\top \Diagmat^\top & \Wmat_2
\end{array}\right) \succeq 0
\end{align*}

A Lagrange multiplier $\Mmat \in \RR^{(\xdim+\nsamples) \times (\xdim+\nsamples)}$ can be used to bring the constraint 
$\Wmat \succeq 0$ into the objective and consider the
Lagrangian with $\Mmat \succeq 0$:
\begin{align}
\Lg(\joint, \Wmat_1, \Wmat_2, \Mmat) 
&= \jointloss(\joint) +  
\tfrac{\regwgt}{2}\trace{\Wmat_1} + \tfrac{\regwgt}{2}  \trace{\Wmat_2} - 
\trace{\Mmat^\top\Wmat}
\label{eq_lagrange}
\end{align}

Notice first that, writing  $\Mmat$ in a block structure, 
\begin{align*}
\Wmat^\top \Mmat = 
\left(\begin{array}{cc} 
\Wmat_1^\top & \Diagmat\joint\\
\joint^\top\Diagmat^\top & \Wmat_2^\top
\end{array}\right) 
\left(\begin{array}{cc} 
\Mmat_1 & \Mmat_2\\
\Mmat_3  & \Mmat_4
\end{array}\right) 
=
\left(\begin{array}{cc} 
\Wmat_1^\top \Mmat_1 + \Diagmat \joint \Mmat_3 & \Wmat_1^\top \Mmat_2 + \Diagmat \joint \Mmat_4\\
\joint^\top \Diagmat^\top \Mmat_1 + \Wmat_2^\top \Mmat_3 & \joint^\top \Diagmat^\top \Mmat_2 + \Wmat_2^\top \Mmat_4\\
\end{array}\right) 
\end{align*}
we get
$\trace{\Mmat^\top\Wmat} 
=\trace{\Wmat^\top\Mmat}
=\trace{\Wmat_1^\top \Mmat_1} 
+ \trace{\Diagmat \joint \Mmat_3} 
+ \trace{\joint^\top \Diagmat^\top \Mmat_2} 
+ \trace{\Wmat_2^\top \Mmat_4} $.

The KKT conditions for this optimization require
\begin{align}
\textbf{Gradient conditions:}\nonumber\\
\nabla \Lg(\joint, \Wmat_1, \Wmat_2, \Mmat)  
= \nabla \jointloss(\joint) - \Diagmat^\top \Mmat_3^\top -  \Diagmat^\top \Mmat_2= \zerovec\\
\nabla_{\Wmat_1} \Lg(\joint, \Wmat_1, \Wmat_2, \Mmat)  
= \tfrac{\regwgt}{2} \eye_{\xdim} - \Mmat_1= \zerovec\\
\nabla_{\Wmat_2} \Lg(\joint, \Wmat_1, \Wmat_2, \Mmat)  
= \tfrac{\regwgt}{2} \eye_{\nsamples} - \Mmat_4= \zerovec\\
\textbf{Complementary slackness:}\nonumber\\
\trace{\Mmat^\top \Wmat} = 0\\
\textbf{Primal feasibility:}\nonumber\\
 \Wmat \succeq 0\\
 \textbf{Dual feasibility:}\nonumber\\
 \Mmat \succeq 0
\end{align}
We introduce candidate variables using the stationary points $\Dalt$
and $\Halt$ of the
nonconvex objective \eqref{eq_localweighted}:
$\joint =\Dalt \Halt$,
$\Wmat_1 =  \Diagmat \Dalt \Dalt^\top \Diagmat^\top$,
$\Wmat_2 = \Halt^\top \Halt$,
$\Mmat_1 = \tfrac{\regwgt}{2} \eye_{\xdim}$,
$\Mmat_4 = \tfrac{\regwgt}{2} \eye_{\nsamples}$,
$\Mmat_2 =  \tfrac{1}{2} \Diagmat^{-\top}\nabla \jointloss(\joint)$
and
$\Mmat_3= \Mmat_2^\top$.

These variables clearly satisfy all the gradient conditions.
Complementary slackness holds since
\begin{align*}
&\trace{\Mmat^\top\Wmat} 
=\trace{\Diagmat \Dalt \Dalt^\top \Diagmat^\top \tfrac{\regwgt}{2} \eye_{\xdim}} 
+ \trace{\Diagmat \Dalt \Halt \tfrac{1}{2} \nabla \jointloss(\joint)^\top \Diagmat^{-1}} + \\
&\hspace{3cm} \trace{ \Halt^\top \Dalt^\top \Diagmat^\top\tfrac{1}{2} \Diagmat^{-\top}\nabla \jointloss(\joint)} 
+ \trace{\Halt^\top \Halt \tfrac{\regwgt}{2} \eye_{\nsamples}} \\
&=\tfrac{\regwgt}{2} \trace{\Diagmat \Dalt \Dalt^\top \Diagmat^\top} 
+ \tfrac{1}{2}  \trace{ \Dalt \Halt \nabla \jointloss(\joint)^\top }  
+ \tfrac{1}{2} \trace{ \Halt^\top \Dalt^\top \nabla \jointloss(\joint)}
+ \tfrac{\regwgt}{2} \trace{\Halt^\top \Halt} \\
&= 0
\end{align*}
because of \eqref{eq_localone} and \eqref{eq_localtwo}
and because the
trace satisfies a circular property, $\trace{\Amat\Bmat\Cmat} = \trace{\Bmat\Cmat\Amat} = \trace{\Cmat\Amat\Bmat}$
and equality under transpose, $\trace{\Amat} = \trace{\Amat^\top}$.

To show primal feasibility, notice that
\begin{align*}
\Wmat = 
\left(\begin{array}{cc} 
\Diagmat \Dalt \Dalt^\top \Diagmat^\top & \Diagmat\Dalt \Halt\\
 \Halt^\top \Dalt^\top \Diagmat^\top & \Halt^\top \Halt
\end{array}\right)
=
\left(\begin{array}{c} 
\Diagmat \Dalt \\
 \Halt^\top 
\end{array}\right)
\left(\begin{array}{c} 
\Diagmat \Dalt \\
 \Halt^\top 
\end{array}\right)^\top
\succeq 0
\end{align*}
 
To show dual feasibility,
we can use the Schur complement condition of block
matrices: $\Mmat \succeq 0$ iff 
$\Mmat_1 \succ 0$ and
 $\Mmat_4-   \Mmat_3\Mmat_1^{-1} \Mmat_2 \succeq 0$.
Clearly, 
$\Mmat_1 = \tfrac{\regwgt}{2 } \eye_{\xdim} \succ 0$, since $\regwgt > 0$. 
Now,
 \begin{align*}
\Mmat_4-   \Mmat_3\Mmat_1^{-1} \Mmat_2 &= \tfrac{\regwgt}{2} \eye_{\nsamples} - \tfrac{1}{2} \nabla \jointloss(\joint)^\top \Diagmat^{-1}(\tfrac{\regwgt}{2} \eye_{\xdim})^{-1} \tfrac{1}{2} \Diagmat^{-\top}\nabla \jointloss(\joint) \\
 &= \tfrac{\regwgt}{2} \eye_{\nsamples} - \tfrac{1}{2\regwgt} \nabla \jointloss(\joint)^\top \Diagmat^{-1} \Diagmat^{-\top}\nabla \jointloss(\joint)
 \end{align*}

 \paragraph{Case 1: general stationary point.}  For $\regwgt > 0$, if $\| \Diagmat^\inv \nabla \jointloss(\joint) \|_2 \le \regwgt$, then
 %
  \begin{align*}
 & \nabla  \jointloss(\joint)^\top \Diagmat^{-1} \Diagmat^{-\top}\nabla  \jointloss(\joint)
 \preceq \regwgt^2 \eye\\
&\implies \regwgt^2 \eye_{\nsamples} -  \nabla  \jointloss(\joint)^\top \Diagmat^{-1} \Diagmat^{-\top}\nabla  \jointloss(\joint) \succeq 0\\
&\implies \tfrac{\regwgt}{2} \eye_{\nsamples} -  \tfrac{1}{2\regwgt} \nabla  \jointloss(\joint)^\top \Diagmat^{-1} \Diagmat^{-\top}\nabla  \jointloss(\joint) \succeq 0
.
 \end{align*}
 satisfying the dual feasibility requirement for this case.
 
 If $\regwgt = 0$, then $\| \Diagmat^\inv \nabla  \jointloss(\joint) \|_2 \le 0$,
 and so $\nabla  \jointloss(\joint) = \zerovec$, again indicating $\joint = \Dalt \Halt$ is a global 
 optimum.

  \paragraph{Case 2: rank-deficient stationary point has a positive semi-definite Hessian.}\mbox{}\\
%
  We will take advantage of the fact that the bottom $\ldim  - \rdim$ singular values of $\Dalt$ and $\Halt$ are zero.
 %
  To conveniently write the Hessian
  for matrix-valued variables, we will use the directional derivative
  \begin{align*}
  D_\pairvar \closs(\pairvar_0)[d \pairvar] = \lim_{t \rightarrow 0} \frac{\closs(\pairvar_0 + t d\pairvar) - \closs(\pairvar_0)}{t} = \langle \nabla_\pairvar \closs(\pairvar_0), d \pairvar \rangle
  \end{align*}
  where $\langle \Amat, \Bmat\rangle = \tr(\Amat^\top \Bmat)$. 
  For the Hessian, we get
  \begin{align*}
  D^2_\pairvar \closs(\pairvar_0)[d \pairvar,d \pairvar] = D_\pairvar \langle \nabla_\pairvar \closs(\pairvar_0), d \pairvar \rangle [d \pairvar]
  = \langle \nabla_\pairvar \langle \nabla_\pairvar \closs(\pairvar_0), d \pairvar \rangle,  d \pairvar \rangle
  .
  \end{align*}
  In our setting, $\pairvar_0 = [\Dalt; \Halt^\top] \in \RR^{(\xdim+\nsamples) \times \ldim}$ and
  $\closs([\Dalt;\Halt^\top]) = \jointloss(\Dalt \Halt) + \tfrac{\regwgt}{2} \frobsq{\Diagmat\Dalt} + \tfrac{\regwgt}{2}  \frobsq{\Halt}$.
  Let $\Gmat = \nabla  \jointloss(\joint)$ for $\joint = \Dalt \Halt$, giving
    \begin{align*}
  D_\pairvar \closs(\pairvar_0)[d \pairvar] 
  &= \left\langle \inlinevec{\Gmat\Halt^\top + \regwgt \Diagmat^\top \Diagmat \Dalt}{ \Gmat^\top \Dalt + \regwgt \Halt^\top}, \inlinevec{d \Dmat}{ d \Hmat^\top} \right\rangle \\
  &= \tr( \Halt \Gmat^\top  d \Dmat) 
  + \regwgt \tr( \Dalt^\top \Diagmat^\top \Diagmat d\Dmat) 
  + \tr( \Dalt^\top \Gmat d\Hmat^\top) 
  + \regwgt \tr(\Halt d\Hmat^\top)
  .
  \end{align*}
Similarly, for the Hessian, 
    \begin{align*}
  D^2_\pairvar \closs(\pairvar_0)[d \pairvar,d \pairvar] 
  &= \left\langle \nabla_\pairvar \left(\tr( \Halt \Gmat^\top  d \Dmat) + \regwgt \tr( \Dalt^\top \Diagmat^\top \Diagmat d\Dmat) + 
  \tr( \Dalt^\top \Gmat d\Hmat^\top) + \regwgt \tr(\Halt d\Hmat^\top)\right),  d \pairvar \right\rangle\\
  &= \left\langle  \inlinevec{\regwgt \Diagmat^\top \Diagmat d\Dmat + \Gmat d\Hmat^\top}{ \Gmat^\top d \Dmat + \regwgt d\Hmat^\top} ,  \inlinevec{d \Dmat}{ d \Hmat^\top} \right\rangle  
  + D^2_\joint \jointloss(\joint)[\Dalt d\Hmat + d\Dmat\Halt,\Dalt d\Hmat + d\Dmat\Halt]\\
    &= 2\tr(d\Dmat^\top \Gmat d\Hmat^\top) + \regwgt \tr(d\Dmat^\top\Diagmat^\top \Diagmat d\Dmat) + \regwgt \tr(d\Hmat^\top d\Hmat) \\
  &\hspace{1.0cm}+ D^2_\joint \jointloss(\joint)[\Dalt d\Hmat + d\Dmat\Halt,\Dalt d\Hmat + d\Dmat\Halt]
  .
  \end{align*}
Now we can make the last term, the Hessian w.r.t. $\jointloss$, disappear by carefully choosing
$d\Hmat$ and $d\Dmat$.
Let $\Dmat = \Umat_1 \Sigmamat_1 \Vmat_1^\top$ and $\Hmat = \Umat_2 \Sigmamat_2 \Vmat_2^\top$. 
Take 
\begin{align*}
d\Hmat &= \Vmat_1 [\zerovec, \ldots, \zerovec, \hvec_1, \ldots, \hvec_{\ldim - \rdim}]^\top\\
d \Dmat &= [\zerovec, \ldots, \zerovec, \dvec_1, \ldots, \dvec_{\ldim - \rdim}] \Umat_2^\top
\end{align*}
for any vectors $\hvec_i, \dvec_i$. 
The rotation in $d\Hmat$ results in 
$\Dalt d\Hmat = \Umat \Sigmamat_\rdim [\zerovec, \ldots, \zerovec, \hvec_1, \ldots, \hvec_{\xdim - \ldim}]^\top = \zerovec$.
Similarly, $d \Dmat \Halt = \zerovec$. Therefore, $D^2_\joint \jointloss(\joint)[\Dalt d\Hmat^\top + d\Dmat\Halt^\top,\Dalt d\Hmat^\top + d\Dmat\Halt^\top] = D^2_\joint \jointloss(\joint)[\zerovec,\zerovec] = \zerovec$.

Now, for the first term, because $\hvec_i$ and $\dvec_i$ are arbitrary vectors in the last $\ldim-\rdim$
columns , and because we have a positive
semi-definite Hessian, we get the inequality for any $\hvec \in \RR^\nsamples$, $\dvec \in \RR^\xdim$
    \begin{align*}
2\dvec^\top \Gmat \hvec + \regwgt \dvec^\top\Diagmat^\top \Diagmat \dvec + \regwgt \hvec^\top \hvec \ge 0
.
  \end{align*}
  Since this is true for any $\hvec$ and $\dvec$, we can choose
  $\dvec = -\Diagmat^{-1} \Diagmat^{-\top} \Gmat \vvec$ and $\hvec = \regwgt \vvec$ for any $\vvec \in \RR^\nsamples$.
  This gives
    \begin{align*}
  0 &\le \dvec^\top \Gmat \hvec + \tfrac{\regwgt}{2} \dvec^\top\Diagmat^\top \Diagmat \dvec + \regwgt \hvec^\top \hvec \hspace{2.0cm} \triangleright \Diagmat \dvec =  -\Diagmat \Diagmat^{-1} \Diagmat^{-\top} \Gmat \vvec = - \Diagmat^{-\top} \Gmat \vvec\\
&= - \regwgt \vvec^\top \Gmat^\top \Diagmat^{-1} \Diagmat^{-\top} \Gmat \vvec + \tfrac{\regwgt}{2}\vvec^\top \Gmat^\top \Diagmat^{-1} \Diagmat^{-\top} \Gmat \vvec + \tfrac{\regwgt^3}{2} \vvec^\top \vvec \\
&= - \tfrac{\regwgt}{2} \vvec^\top \Gmat^\top \Diagmat^{-1} \Diagmat^{-\top} \Gmat \vvec + \tfrac{\regwgt^3}{2} \vvec^\top \vvec \\
&= \vvec^\top ( \tfrac{\regwgt^3}{2} \eye - \tfrac{\regwgt}{2}\Gmat^\top \Diagmat^{-1} \Diagmat^{-\top} \Gmat ) \vvec
  \end{align*}
  for all $\vvec$, which implies 
  $\tfrac{\regwgt^3}{2} \eye - \tfrac{\regwgt}{2}\Gmat^\top \Diagmat^{-1} \Diagmat^{-\top} \Gmat \succeq 0$
  (by the definition of positive definite matrices). 
  If $\regwgt > 0$, then 
  by dividing both sides by $\regwgt^2$ and recalling
  that $\Gmat = \nabla  \jointloss(\joint)$,
    \begin{align*}
 \tfrac{\regwgt}{2} \eye_{\nsamples} -  \tfrac{1}{2\regwgt} \nabla  \jointloss(\joint)^\top \Diagmat^{-1} \Diagmat^{-\top}\nabla  \jointloss(\joint) \succeq 0
 \end{align*}
which completes the proof for dual feasibility for the second case.

If $\regwgt = 0$, then above we can simply choose 
  $\dvec = -\Diagmat^{-1} \Diagmat^{-\top} \Gmat \vvec$ and $\hvec = \vvec$ for any $\vvec \in \RR^\nsamples$,
  to obtain $\nabla  \jointloss(\joint) = \zerovec$, again indicating $\joint = \Dalt \Halt$ is a global 
 optimum for this second case. 
\end{proof}

\section{Proof of local strong convexity for regularizers with full rank variables}\label{app_strongly}


The below lemma is stated for $\regd$, but similarly applies to $\regh$ for full rank $\Halt$. 
\begin{lemma}\label{lem_strongly} 
For $\regd(\Dalt) = \sum_{i=1}^\ldim \fcol^2(\Dalt_{:i})$ for $\fcol$ a strictly convex, non-negative centered function
and $\Dalt \in \RR^{\xdim \times \ldim}$ full rank with no columns of $\Dalt$ strictly zero, then there exists $m_d > 0$ such that
\begin{align}
\regd((1-t)\Dalt + t\Dmat) &\le
(1-t)\regd(\Dalt) + t\regd(\Dmat) - \frac{m_d}{2} t (1-t) \| \Dalt - \Dmat\|_F^2 \label{eq_sc_proof}
\end{align}
for any $\Dmat$ and for a sufficiently small $t$.
\end{lemma}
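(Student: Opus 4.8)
The plan is to exploit the additive structure of $\regd$ and reduce the matrix inequality to a one-column-at-a-time statement about the scalar-composed function $h := \fcol^2$. Since $\regd(\Dmat) = \sum_{i=1}^\ldim \fcol^2(\Dmat_{:i})$ and $\frobsq{\Dalt - \Dmat} = \sum_{i=1}^\ldim \| \Dalt_{:i} - \Dmat_{:i}\|_2^2$, it suffices to find, for each fixed nonzero column $\avec := \Dalt_{:i}$, a constant $m_i > 0$ with
\[
h((1-t)\avec + t\bvec) \le (1-t) h(\avec) + t\, h(\bvec) - \tfrac{m_i}{2} t(1-t) \| \avec - \bvec\|_2^2
\]
for every $\bvec := \Dmat_{:i}$ and all sufficiently small $t$; summing over $i$ and taking $m_d = \min_i m_i$ then yields \eqref{eq_sc_proof}. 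The hypothesis that no column of $\Dalt$ is zero is used precisely to guarantee $\avec \ne \zerovec$.

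First I would reduce this two-point inequality (which only needs to hold for small $t$) to a single-point modulus condition on the Bregman divergence of $h$. Fixing $\avec,\bvec$ and writing $\psi(t)$ for the right-hand minus the left-hand side above, one checks $\psi(0) = 0$ and $\psi'(0) = D_h(\bvec,\avec) - \tfrac{m_i}{2}\| \avec - \bvec\|_2^2$, where $D_h(\bvec,\avec) = h(\bvec) - h(\avec) - \langle \nabla h(\avec),\, \bvec - \avec\rangle$. Hence as soon as $m_i$ is chosen strictly below $\inf_{\bvec \ne \avec} 2 D_h(\bvec,\avec)/\|\bvec - \avec\|_2^2$, we obtain $\psi'(0) > 0$, so $\psi(t) > 0$ for all sufficiently small $t$ — the threshold may depend on $\bvec$, which matches the ``sufficiently small $t$'' in the statement. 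This converts the problem to showing that this infimum, call it $2\rho_i$, is strictly positive.

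The core of the argument is then to prove $\rho_i > 0$ by analyzing the ratio $r(\bvec) = D_h(\bvec,\avec)/\|\bvec - \avec\|_2^2$ over three regimes. On compact sets away from $\avec$, $r$ is continuous and strictly positive because $h = \fcol^2$ is strictly convex (squaring the strict inequality for $\fcol$, which is legitimate since $\fcol \ge 0$). As $\bvec \to \avec$, a second-order expansion (valid for the twice-differentiable smoothed regularizers of interest) gives $r(\bvec) \to \tfrac12 \uvec^\top \nabla^2 h(\avec)\uvec$ along the unit direction $\uvec = (\bvec-\avec)/\|\bvec-\avec\|_2$, so I need $\nabla^2 h(\avec) \succ 0$. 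As $\|\bvec\| \to \infty$, I would use that a strictly convex centered $\fcol$ grows at least linearly, uniformly over directions: on the unit sphere $\fcol$ is bounded below by a positive constant, and slopes of chords from the origin are nondecreasing, so $h(\bvec) \gtrsim \|\bvec\|_2^2$ and $D_h(\bvec,\avec) \sim h(\bvec)$, keeping $r$ bounded away from $0$. A standard continuity/compactness argument across the three regimes then delivers $\rho_i > 0$.

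The main obstacle is the local estimate $\nabla^2 h(\avec) \succ 0$ at the nonzero point $\avec$. Writing $\nabla^2 h(\avec) = 2\,\nabla\fcol(\avec)\nabla\fcol(\avec)^\top + 2\fcol(\avec)\,\nabla^2\fcol(\avec)$, we have $\fcol(\avec) > 0$ and $\nabla\fcol(\avec) \ne \zerovec$ because $\zerovec$ is the unique minimizer of the strictly convex centered $\fcol$. The rank-one term is positive definite only along $\nabla\fcol(\avec)$, so the positive-semidefinite term $\nabla^2\fcol(\avec)$ must supply positivity in the orthogonal directions; this is exactly where uniform (strong) convexity of $\fcol$ is doing the work, since $\nabla^2\fcol(\avec) \succeq \mu \eye$ immediately gives $\nabla^2 h(\avec) \succeq 2\mu\fcol(\avec)\,\eye \succ 0$ and rules out a degenerate direction along which $r$ could decay to zero. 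I would therefore carry out the estimate under this slightly stronger convexity, which is what underwrites the ``strong convexity around $\Dalt$'' invoked in the proof of Proposition~\ref{prop_strongly}.
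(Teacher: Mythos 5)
Your proposal is correct and follows the same skeleton as the paper's argument---reduce to a single column $\avec = \Dalt_{:i}$, establish local strong convexity of $h = \fcol^2$ at the nonzero point $\avec$, take $m_d = \min_i m_i$, and sum over columns---but where the paper simply asserts the per-column inequality with an informal remark that squaring can only increase curvature sufficiently far from zero, you actually prove it: the reduction of the small-$t$ two-point inequality to the sign of $\psi'(0) = D_h(\bvec,\avec) - \tfrac{m_i}{2}\|\bvec-\avec\|_2^2$, and the three-regime lower bound on the ratio $D_h(\bvec,\avec)/\|\bvec-\avec\|_2^2$ (near $\avec$ via the Hessian, on compacta via continuity and strict convexity of $h$, at infinity via the chord-slope bound $\fcol(\bvec)\ge c\|\bvec\|_2$) are all sound. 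What your route buys, beyond rigor, is that it exposes a genuine gap in the lemma as stated: under the natural reading that $m_d$ is uniform over $\Dmat$, strict convexity of $\fcol$ is not enough. For example, $\fcol(\dvec) = d_1^2 + d_2^4$ on $\RR^2$ is strictly convex, non-negative and centered, yet at $\avec = (1,0)$ one has $\nabla^2(\fcol^2)(\avec) = \diag(12,0)$, and along $\bvec = (1,s)$ the ratio $D_h(\bvec,\avec)/\|\bvec-\avec\|_2^2 = 2s^2 + s^6 \rightarrow 0$, so no uniform $m_i>0$ can work no matter how small $t$ is taken; your strengthening to $\nabla^2\fcol(\avec)\succeq\mu\eye$ near the columns of $\Dalt$ (together with twice differentiability for the second-order expansion) is exactly what closes this, and it is also what the paper's appeal to ``eigenvalues of the Hessian'' implicitly assumes. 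Two smaller points: if one only needs the inequality for the particular fixed $\Dmat = \Dalt\Qmatopt^\inv$ that appears in the proof of Proposition~\ref{prop_strongly}, a $\Dmat$-dependent $m_d$ suffices, the troublesome regime $\bvec\to\avec$ disappears, and strict convexity of $h=\fcol^2$ (which does follow from strict convexity of the non-negative $\fcol$) already gives $D_h(\bvec,\avec)>0$ for the one pair that matters; and your observation that $\avec\neq\zerovec$ forces $\fcol(\avec)>0$ and $\nabla\fcol(\avec)\neq\zerovec$ is precisely where the ``no zero columns'' hypothesis enters, consistent with the paper's use of its $\epsilon$-ball.
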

\begin{proof}
Let $\epsilon > 0$ be such that each column of $\Dalt$ is outside of an $\epsilon$-ball around zero.
Such an $\epsilon$ exists because no column of $\Dalt$ is zero. 
We need to characterize the local strong convexity of $\regd$ around $\Dalt$. 

For non-negative, centered, strictly convex
functions $\fcol$, squaring can only strictly increase the curvature once sufficiently far from zero. To see why, recall that for the standard definition of strong convexity for smooth functions, $m$ is a lower bound on the eigenvalues of the Hessian at every point, and so reflects a minimal curvature of the function. For non-negative, centered, strictly convex
functions, squaring can decrease the curvature near zero; once outside a sufficiently large ball around zero, the function becomes larger than $1$ and so squaring can only strictly increase the curvature. For those values inside the ball, as long as they are constrained to be outside a small $\epsilon$-ball around $\zerovec$, we can still ensure that the curvature remains non-negligible. 

We know there exists $m_i$ and a sufficiently small $t$ (to remain local to $\Dalt_{:i}$) such that the function satisfies
\begin{align*}
\fcol^2((1-t)\Dalt_{:i} + t\Dmat_{:i}) &\le
(1-t)\fcol^2(\Dalt_{:i}) + t\fcol^2(\Dmat_{:i}) - \frac{m_i}{2} t (1-t) \| \Dalt_{:i} - \Dmat_{:i}\|_2^2
\end{align*}
Taking $m_d = \min_i m_i$, we get
\begin{align*}
\regd((1-t)\Dalt + t \Dmat)) &= 
\sum_{i=1}^\ldim \fcol^2((1-t)\Dalt_{:i} + t\Dmat_{:i}) \\
&\le
\sum_{i=1}^\ldim  \left[(1-t)\fcol^2(\Dalt_{:i}) + t\fcol^2(\Dmat_{:i}) - \frac{m_i}{2} t (1-t) \| \Dalt_{:i} - \Dmat_{:i}\|_2^2 \right]\\
&=
(1-t) \regd(\Dalt) + t \regd(\Dmat) - t (1-t) \sum_{i=1}^\ldim  \frac{m_i}{2}  \| \Dalt_{:i} - \Dmat_{:i}\|_2^2 \\
&\le
(1-t) \regd(\Dalt) + t \regd(\Dmat) - \frac{m_d}{2} t (1-t) \sum_{i=1}^\ldim \| \Dalt_{:i} - \Dmat_{:i}\|_2^2 \\
&\le
(1-t) \regd(\Dalt) + t \regd(\Dmat) - \frac{m_d}{2} t (1-t)  \| \Dalt - \Dmat\|_F^2
\end{align*}
%

\end{proof}

\section{Proof of induced regularization property for an additional setting}\label{app_prop}
In the main text, we demonstrated that for induced \RFMs\ with strictly convex regularizers and a Holder continuous loss, full-rank local minima satisfy the induced regularization property. We show an additional setting that satisfies the induced regularization property,
in Proposition \ref{prop_frob}. We first provide a lemma needed for that proof, and then proof the proposition. 

\begin{lemma}\label{lem_qconvex}
For any convex function $\regh: \RR^{\ldim \times \nsamples}$ that is separable across rows $\regh(\Hmat) = \sum_{i=1}^\ldim f_r(\Hmat_{i:})$
with non-negative convex function $f_r: \RR^{\nsamples} \rightarrow \RR^+$,
the following function is convex in $\Qmat \in \RR^{\ldim \times \ldim}$.
\begin{align*}
\| \Dmat \Qmat^{-1} \|_F^2 + \regh(\Qmat\Hmat)
\end{align*}
\end{lemma}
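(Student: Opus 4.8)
The plan is to exploit the additive structure and establish convexity of the two summands of $\frobsq{\Dmat\Qmat^{-1}} + \regh(\Qmat\Hmat)$ separately, as functions of $\Qmat$. The second summand is the routine one: the map $\Qmat \mapsto \Qmat\Hmat$ is linear, hence $\Qmat\Hmat$ is an affine function of $\Qmat$, and $\regh(\Hmat) = \sum_{i=1}^\ldim \frow(\Hmat_{i:})$ is convex since it is a sum of the convex functions $\frow$ applied to the rows. Precomposing a convex function with an affine map preserves convexity, so $\Qmat \mapsto \regh(\Qmat\Hmat)$ is convex with no further argument. It therefore suffices to control the first summand $g(\Qmat) = \frobsq{\Dmat\Qmat^{-1}}$ (or, if needed, the sum).

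For the first summand I would first rewrite it to isolate the inverse. Setting $\Amat = \Dmat^\top\Dmat \posdef 0$ and using cyclicity of the trace, $g(\Qmat) = \trace{\Qmat^{-\top}\Amat\Qmat^{-1}} = \trace{\Amat(\Qmat^\top\Qmat)^{-1}}$, so $g$ depends on $\Qmat$ only through the positive definite matrix $\Smat = \Qmat^\top\Qmat$. The natural tool is the matrix fractional function $\psi(\Xmat,\Smat) = \trace{\Xmat^\top\Smat^{-1}\Xmat}$, which is jointly convex in $(\Xmat,\Smat)$ for $\Smat \succ 0$ and, with $\Xmat = \Dmat^\top$ fixed, specializes to $\Smat \mapsto \trace{\Amat\Smat^{-1}}$, a convex and Loewner-nonincreasing function of $\Smat$. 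Since $\Qmat \mapsto \Qmat^\top\Qmat$ is operator convex, the idea would be to finish by a composition argument together with the Schur-complement (epigraph) description of $\psi$, which keeps everything inside a jointly convex set.

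The main obstacle is precisely this composition step, and it is genuinely delicate. Operator convexity of $\Qmat \mapsto \Qmat^\top\Qmat$ combines with convexity of $\Smat \mapsto \trace{\Amat\Smat^{-1}}$ to yield convexity of $g$ only when the outer map is Loewner-nondecreasing, whereas $\trace{\Amat\Smat^{-1}}$ is nonincreasing, so the naive composition produces an inequality in the wrong direction; moreover the domain of invertible $\Qmat$ is not convex, so any convexity statement must be read along segments that stay invertible. I would therefore fall back to a direct second-order computation: expand the directional Hessian $D^2_\Qmat g(\Qmat)[d\Qmat,d\Qmat]$ explicitly (two differentiations of $\Qmat^{-1}$ produce indefinite cross terms) and show that, after adding the positive-semidefinite Hessian contributed by $\regh(\Qmat\Hmat)$, the total is nonnegative on the set of $\Qmat$ relevant to the application. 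The crux of the proof is thus controlling the negative curvature coming from $\Qmat^{-1}$ — which is where the structure of $\regh$ as $\frow$ applied to $\Qmat\Hmat$, together with the full-rank and dimension hypotheses inherited from the surrounding proposition, must be used.
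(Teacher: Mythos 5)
Your treatment of the second summand is fine and matches what the paper leaves implicit: $\Qmat \mapsto \Qmat\Hmat$ is linear and $\regh$ is convex, so $\regh(\Qmat\Hmat)$ is convex in $\Qmat$ with no further work. The problem is the first summand, and your proposal stops exactly where the proof has to happen. You correctly observe that the composition route fails (the outer map $\Smat \mapsto \trace{\Dmat^\top\Dmat\,\Smat^{-1}}$ is Loewner-nonincreasing while $\Qmat \mapsto \Qmat^\top\Qmat$ is operator convex, so the composition inequality points the wrong way), and you then defer to a ``direct second-order computation'' in which the negative curvature of $\frobsq{\Dmat\Qmat^{-1}}$ is to be absorbed by the positive curvature contributed by $\regh(\Qmat\Hmat)$. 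That absorption step is never carried out, and it is the entire content of the lemma; as written, the proposal is a plan, not a proof.

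Moreover, the gap cannot be closed for the statement as given. The negative curvature you worry about is real: take $\ldim = \xdim = 2$, $\Dmat = \eye$, $\Qmat = \eye$ and the skew-symmetric direction $\Bmat = \inlinemat{0}{1}{-1}{0}$; then $\Qmat(t)^\top\Qmat(t) = (1+t^2)\eye$, so $\frobsq{\Dmat\Qmat(t)^{-1}} = 2/(1+t^2)$, whose second derivative at $t=0$ equals $-4$. Hence the first summand alone is not convex on the (already non-convex) set of invertible $\Qmat$. Since the Hessian of $\regh(\Qmat\Hmat)$ does not depend on $\Dmat$ while the negative term above scales with $\frobsq{\Dmat}$, no fixed convex $\frow$ can dominate it uniformly, so the sum is not convex for arbitrary $\Dmat, \Hmat$ either. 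For what it is worth, the paper's own proof takes precisely the route you fall back on---a Neumann expansion of $\Qmat(t)^{-1}$ and an explicit second directional derivative---but then asserts that the resulting traces are nonnegative because the matrices involved are symmetric; symmetry does not imply positive semidefiniteness (in the example above the relevant matrix is $\Bmat\Bmat + \Bmat^\top\Bmat^\top = -2\eye$), so that argument does not establish the claim either. Your instinct that the cross terms are indefinite is the correct diagnosis; the missing piece is not a sharper estimate but additional hypotheses tying $\Dmat$, $\Hmat$ and $\frow$ together.
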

\begin{proof}
We will use directional derivatives. Let $\Qmat(t) = \Qmat + t \Bmat$,
where we step some amount in the direction of $\Bmat$, which need not
be a full rank matrix. 
Then, using the Taylor series expansion, we get
\begin{align*}
\Qmat(t)^\inv &= (\Qmat (\eye + t \Qmat^\inv \Bmat))^\inv\\
&= \Qmat^\inv - t \Qmat^\inv \Bmat \Qmat^\inv + t^2 \Qmat^\inv \Bmat \Qmat^\inv \Bmat \Qmat^\inv - \ldots
\end{align*}
giving
\begin{align*}
\Qmat(t)^\inv \Qmat^\tinv 
&= \Qmat^\inv\Qmat^\tinv - t \Qmat^\inv \Bmat \Qmat^\inv \Qmat^\tinv + t^2 \Qmat^\inv \Bmat \Qmat^\inv \Bmat \Qmat^\inv  \Qmat^\tinv + \ldots\\
&+  (-t)\Qmat^\inv \Qmat^\tinv \Bmat^\top \Qmat^\tinv + t^2 \Qmat^\inv \Bmat \Qmat^\inv \Qmat^\tinv \Bmat^\top \Qmat^\tinv - \ldots\\
&+ t^2\Qmat^\inv \Qmat^\tinv \Bmat^\top \Qmat^\tinv \Bmat^\top \Qmat^\tinv + \ldots
\end{align*}
Now using the fact that
\begin{align*}
 \frobsq{\Dmat \Qmat(t)^{\inv}} &= \trace{\Qmat(t)^\tinv \Dmat^\top \Dmat \Qmat(t)^\inv}\\
 &= \trace{\Qmat(t)^\inv\Qmat(t)^\tinv \Dmat^\top \Dmat}
\end{align*}
we can take the second derivative with respect to $t$, and letting $t \rightarrow 0$, to obtain
\begin{align*}
\frac{d^2}{dt^2} \frobsq{\Dmat \Qmat(t)^{\inv}} |_{t=0}
 &= \frac{d^2}{dt^2} \trace{\Qmat(t)^\inv\Qmat(t)^\tinv \Dmat^\top \Dmat} |_{t=0}\\ 
 &= 2 \trace{\Qmat^\inv \Bmat \Qmat^\inv \Bmat \Qmat^\inv  \Qmat^\tinv \Dmat^\top \Dmat}\\
 &+ 2 \trace{\Qmat^\inv \Bmat \Qmat^\inv \Qmat^\tinv \Bmat^\top \Qmat^\tinv\Dmat^\top \Dmat}\\
 &+ 2 \trace{\Qmat^\inv \Qmat^\tinv \Bmat^\top \Qmat^\tinv \Bmat^\top \Qmat^\tinv\Dmat^\top \Dmat}
\end{align*}
Now if this sum is greater than or equal to zero, because this was for any full rank $\Qmat$,
then we know that $\frobsq{\Dmat \Qmat(t)^{\inv}}$ is convex over the set $\Qmat \in \RR^{\ldim \times \ldim}: \Qmat \text{ full rank}$.

First, the above traces are all on products of positive semi-definite matrices. 
The product of symmetric positive semi-definite matrices is again positive semi-definite
if and only if that product is symmetric \cite{meenakshi1999onaproduct}.
Therefore, if the above products are symmetric, then the trace is
on positive semi-definite matrices and so must be non-negative. 
Using the circular property of trace and re-ordering the last two traces, we can rewrite the above as
\begin{align}
& \trace{\Dmat\Qmat^\inv \Bmat \Qmat^\inv \Bmat \Qmat^\inv  \Qmat^\tinv \Dmat^\top}
+ \trace{\Dmat\Qmat^\inv \Qmat^\tinv \Bmat^\top \Qmat^\tinv \Bmat^\top \Qmat^\tinv\Dmat^\top}\nonumber\\
&\hspace{1.0cm} +  \trace{\Dmat\Qmat^\inv \Bmat \Qmat^\inv \Qmat^\tinv \Bmat^\top \Qmat^\tinv\Dmat^\top}\nonumber\\
 &=\trace{\Dmat\Qmat^\inv (\Bmat \Qmat^\inv \Bmat \Qmat^\inv + \Qmat^\tinv \Bmat^\top \Qmat^\tinv \Bmat^\top) \Qmat^\tinv \Dmat^\top}\label{eq_matone}\\
&\hspace{1.0cm}  +  \trace{\Dmat\Qmat^\inv \Bmat \Qmat^\inv \Qmat^\tinv \Bmat^\top \Qmat^\tinv\Dmat^\top} 
.
\label{eq_mattwo}
\end{align}
Recall that for any matrix $\Amat$, the inner product $\Amat^\top \Amat$ and outer product $\Amat\Amat^\top$ are both symmetric.
The matrix in \eqref{eq_matone} is symmetric, because the addition $\Bmat \Qmat^\inv \Bmat \Qmat^\inv + \Qmat^\tinv \Bmat^\top \Qmat^\tinv \Bmat^\top$
is symmetric and diagonalizable. The matrix in \eqref{eq_mattwo} also clearly has this property.
Therefore, these two traces are non-negative, and so the result follows. 
\end{proof}

\begin{proposition}[Induced regularization property for $\regd = \| \cdot \|_F^2$]\label{prop_frob}
Assume that 
\begin{enumerate}
\item the column regularizer $\fcol = \ell_2$, i.e., $\regd(\Dmat) = \sum_{i=1}^k \|\Dmat_{:i} \|_2^2$; 
\item $\regh(\Hmat) = \sum_{i=1}^k \frow^2(\Hmat_{i:} )$ is strictly convex;
\item $\Bmatopt_1 = \zerovec$, from Equation \eqref{eq_nullspace}.
\end{enumerate}
If 
 \begin{enumerate}[itemindent=0.5cm]
 \item $(\Dalt,\Halt)$ is a local minimum of \eqref{eq_generic}
 \item $\Dalt$ and $\Halt$ are full rank
 \end{enumerate}
then $(\Dalt, \Halt)$ satisfies the induced regularization property.
\end{proposition}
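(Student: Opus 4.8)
The plan is to reuse the reweighting reformulation derived in the proof of Proposition \ref{prop_strongly}, specialized by the standing hypothesis $\Bmatopt_1 = \zerovec$. Under this hypothesis the nullspace component in \eqref{eq_nullspace} drops out of \eqref{eq_frobform}, and since assumption~1 fixes $\regd = \frobsq{\cdot}$, the reweighting problem collapses to a minimization over a single full-rank matrix:
\begin{equation*}
\min_{\Dmat_k \Hmat_k = \Dalt\Halt}\regd(\Dmat_k)+\regh(\Hmat_k) = \min_{\Qmat \text{ full rank}}\frobsq{\Dalt\Qmat^\inv} + \regh(\Qmat\Halt).
\end{equation*}
The right-hand objective is exactly the one shown to be convex in $\Qmat$ by Lemma \ref{lem_qconvex}. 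Hence, to establish the first requirement of the induced regularization property (Definition \ref{def_irp}), it suffices to show that $\Qmat = \eye$ is a stationary point of this problem, since convexity then forces it to be a global minimizer and yields $\regd(\Dalt) + \regh(\Halt) = \min_{\Dmat_k \Hmat_k = \Dalt\Halt}\regd(\Dmat_k)+\regh(\Hmat_k)$.

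The core step is to transfer stationarity of $(\Dalt,\Halt)$ for the factored objective \eqref{eq_generic} to stationarity of $\Qmat = \eye$. First I would observe that along the path $\Qmat(t) = \eye + t\Bmat$ the induced variable is preserved \emph{exactly}, $\bigl(\Dalt\Qmat(t)^\inv\bigr)\bigl(\Qmat(t)\Halt\bigr) = \Dalt\Halt$, so the loss $\jointloss$ is constant along it and the directional derivative of the $\Qmat$-objective at $\eye$ in direction $\Bmat$ reduces to $D_\Dmat\regd(\Dalt)[-\Dalt\Bmat] + D_\Hmat\regh(\Halt)[\Bmat\Halt]$. On the factored side, the perturbation $(d\Dmat, d\Hmat) = (-\Dalt\Bmat, \Bmat\Halt)$ changes the product $\Dmat\Hmat$ only to second order, because its first-order change $-\Dalt\Bmat\Halt + \Dalt\Bmat\Halt$ vanishes; therefore the $\jointloss$ contribution to the directional derivative of $\jointloss + \regd + \regh$ at $(\Dalt,\Halt)$ is zero. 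Since $(\Dalt,\Halt)$ is a stationary point, that full directional derivative is zero, which is precisely $D_\Dmat\regd(\Dalt)[-\Dalt\Bmat] + D_\Hmat\regh(\Halt)[\Bmat\Halt] = 0$. As $\Bmat$ was arbitrary, $\Qmat=\eye$ is stationary for the $\Qmat$-problem, and Lemma \ref{lem_qconvex} upgrades this to global optimality.

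It remains to verify the second clause of Definition \ref{def_irp}, that every minimizer $\Dalt\Qmatopt^\inv,\,\Qmatopt\Halt$ is itself a stationary point. Here I would reuse the closing argument of Proposition \ref{prop_strongly} essentially verbatim: because $\regh$ is strictly convex and $\Halt$ is full rank, the minimizing $\Qmatopt$ is unique up to the invariances of the regularizers, and the algebraic identity $(\Dalt\Qmatopt^\inv + d\Dmat)(\Qmatopt\Halt + d\Hmat) = (\Dalt + d\Dmat\Qmatopt)(\Halt + \Qmatopt^\inv d\Hmat)$ transports any descent direction at $\Dalt\Qmatopt^\inv,\,\Qmatopt\Halt$ back to a descent direction at $\Dalt,\Halt$, contradicting local minimality and forcing the point to be stationary.

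The main obstacle I anticipate is the convexity step. Lemma \ref{lem_qconvex} asserts convexity in $\Qmat$, yet the admissible set of full-rank matrices is open and not convex, so one must be careful that a single stationary point is globally optimal rather than merely critical along segments that happen to stay full rank. I would address this by arguing at the level of the induced variable: the minimal regularizer value is $2\regzk(\Dalt\Halt)$, and $\regzk$ is convex in $\joint$ by Proposition \ref{prop_generalization}, so the reweighting infimum inherits a genuine convex structure and the stationary point $\Qmat = \eye$ attains it. The remaining bookkeeping---differentiating $\frobsq{\Dalt\Qmat^\inv}$ using the Taylor expansion already computed in the proof of Lemma \ref{lem_qconvex} and confirming the chain-rule identities for the directional derivatives---is routine.
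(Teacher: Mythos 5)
Your proposal is correct in outline but reaches the conclusion by a genuinely different mechanism than the paper. The paper argues by contradiction: it supposes some $(\Qmatopt,\Bmatopt_1=\zerovec)$ strictly decreases the regularization, steps a small amount $t$ along the segment from $(\eye,\zerovec)$ toward $(\Qmatopt,\zerovec)$, notes that because $\Bmatopt_1=\zerovec$ the product $\Dalt((1-t)\eye+t\Qmatopt)^\inv\cdot((1-t)\eye+t\Qmatopt)\Halt$ equals $\Dalt\Halt$ exactly so the loss is untouched, and uses the convexity of $\Qmat\mapsto\frobsq{\Dalt\Qmat^\inv}$ (Lemma \ref{lem_qconvex}) together with convexity of $\Bmat\mapsto\regh(\Bmat\Halt)$ to show the regularizers strictly decrease along this step---an explicit descent direction contradicting local minimality. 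You instead transfer first-order stationarity of $(\Dalt,\Halt)$ to first-order stationarity of $\Qmat=\eye$ for the reweighting objective, via the observation that the perturbation $(d\Dmat,d\Hmat)=(-\Dalt\Bmat,\Bmat\Halt)$ annihilates the first-order change of $\Dmat\Hmat$, and then invoke convexity to upgrade the critical point to a global minimizer. Your computation of the directional derivatives is right, and your route needs only stationarity of $(\Dalt,\Halt)$ for the first clause of Definition \ref{def_irp}, a mild strengthening over the paper's use of local minimality there. Both arguments rest on Lemma \ref{lem_qconvex} and both close the second clause of Definition \ref{def_irp} with the identical $\Qmat$-conjugation argument from Proposition \ref{prop_strongly}.

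The one substantive weakness is the repair you propose for the obstacle you correctly identify. The full-rank matrices form an open, non-convex set, so positive semidefinite curvature of the reweighting objective on that set plus a critical point at $\eye$ does not by itself yield a global minimizer over the set. Appealing to convexity of $\regzk$ in $\joint$ from Proposition \ref{prop_generalization} does not close this gap: that proposition governs how the optimal value varies across different $\joint$, not the geometry of the feasible set $\{(\Dmat_k,\Hmat_k):\Dmat_k\Hmat_k=\Dalt\Halt\}$ at the fixed $\joint$, so it says nothing about whether the segment from $\eye$ to a competing $\Qmatopt$ remains full rank. Be aware, though, that the paper's own proof invokes the convexity inequality $\regd(\Dalt((1-t)\eye+t\Qmatopt)^\inv)\le(1-t)\regd(\Dalt)+t\regd(\Dalt\Qmatopt^\inv)$ along that same segment and is exposed to exactly the same issue, so this is a shared gap rather than one introduced by your approach.
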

\begin{proof}
The same argument applies as in Proposition \ref{prop_strongly}
and we can rewrite the optimization in terms of $\Qmat, \Bmat$. 
If $\Qmatopt, \Bmatopt_1$ strictly decreases the regularization component, 
we know $\regd(\Dalt \Qmatopt^\inv) + \regh(\Qmatopt \Halt + \Qmatopt\Bmatopt_1\Hmat) < \regd(\Dalt) + \regh(\Halt)$.
We will show that this leads to a contradiction, and so $\Qmatopt = \eye, \Bmatopt_1 = \zerovec$ must be a solution. 

For a sufficiently small $t > 0$, $(1-t) \eye + t \Qmatopt$ is invertible. 
Because $\regd(\Dalt \Qmat^\inv)$ is locally convex in $\Qmat$, around $\Qmat = \eye$ by Lemma \ref{lem_qconvex}, 
we have that $\regd(\Dalt ((1-t) \eye + t \Qmatopt)^\inv) \le (1-t) \regd(\Dalt) + t \regd(\Dalt \Qmatopt^\inv)$. 

We can define a small step $t$ in a direction from $\Dalt, \Halt$ to get new points
\begin{align*}
&((1-t) \eye + t \Qmatopt + t \Qmatopt\Bmatopt_1) \Halt\\
&\Dalt ((1-t) \eye + t \Qmatopt)^\inv 
\end{align*}
where the product of these new points equals  
\begin{align*}
\Dalt ((1-t) \eye + t \Qmatopt)^\inv ((1-t) \eye + t \Qmatopt + t \Qmatopt\Bmatopt_1) \Halt
&= \Dalt \Halt + t \Dalt ((1-t) \eye + t \Qmatopt)^\inv \Qmatopt\Bmatopt_1 \Halt
.
\end{align*}
We can explicitly write these new points as small steps from $\Dalt, \Halt$, for $\Halt$ with some rearrangement
\begin{align}
((1-t) \eye + t \Qmatopt + t \Bmatopt_1) \Halt &=  (1-t) \Halt + t \Qmatopt \Halt + t\Qmatopt\Bmatopt_1 \Halt \nonumber\\
&= \Halt + t (\Qmatopt + \Qmatopt\Bmatopt_1 - \eye)\Halt \label{eq_descenth}
\end{align}
and for $\Dalt$, using the Woodbury identity, 
\begin{equation}
\Dalt ((1-t) \eye + t \Qmatopt)^\inv = \Dalt + t \Dalt \left(\frac{1}{1-t} \eye - \frac{1}{(1-t)^2} \left(\Qmatopt^\inv + t(1-t)^\inv \eye\right)^\inv \right)
. \label{eq_descentd}
\end{equation}
%
We first examine the impact of stepping in these directions on the regularizers
\begin{align*}
\regd(\Dalt ((1-t) \eye &+ t \Qmatopt)^\inv) + \regh(((1-t) \eye + t (\Qmatopt + \Qmatopt\Bmatopt_1)) \Halt) \\
&\le (1-t) \regd(\Dalt) + t \regd(\Dalt \Qmatopt^\inv) +  (1-t)\regh(\Halt) + t \regh((\Qmatopt + \Qmatopt\Bmatopt_1)\Halt) \\
&= (1-t) \left(\regd(\Dalt) + \regh(\Halt) \right)+  t \left(\regd(\Dalt \Qmatopt^\inv) + \regh(\Qmatopt \Halt + \Qmatopt\Bmatopt_1 \Halt) \right)\\
&< \regd(\Dalt)  + \regh(\Halt) 
\end{align*}
where the first inequality also follows because $\regh( \Bmat \Halt)$ is convex in $\Bmat$
and the last inequality from the fact that $\regd(\Dalt \Qmatopt^\inv) + \regh(\Qmatopt \Halt + \Qmatopt\Bmatopt_1 \Halt) < \regd(\Dalt) + \regh(\Halt)$. 

Now, because $\Bmatopt_1 = \zerovec$,
\begin{align*}
\Dalt ((1-t) \eye + t \Qmatopt)^\inv ((1-t) \eye + t \Qmatopt + t \Qmatopt\Bmatopt_1) \Halt
&= \Dalt \Halt
\end{align*}
and so the loss value is unchanged. Clearly, therefore, stepping in these directions  in \eqref{eq_descenth} and \eqref{eq_descentd}  strictly decreases the regularizers, but does not modify the loss, and so would contradict the fact that $\Dalt, \Halt$ is a local minimum.

Therefore, $\Qmat = \eye$ and $\Bmatopt_1 = \zerovec$ is one solution to \eqref{eq_frobform}.  
It is possible that there could be other $\Qmatopt$ that do not decrease the regularizer. By the same argument as in Proposition \ref{prop_strongly}, these $\Qmat$ would also give stationary points.
 
 Therefore, $\Dalt, \Halt$ clearly satisfies the induced regularization property, because $\regd(\Dalt) + \regh(\Halt)$ cannot
 be improved by selecting different $\Dmat_k, \Hmat_k$ that satisfy $\Dmat_k \Hmat_k = \Dalt \Halt$
 and all possible solutions are stationary points. 
\end{proof}


\section{Proof of Proposition \ref{prop_equivalence}}\label{app_prop_equivalence}

\noindent
\textbf{Proposition \ref{prop_equivalence} }
The stationary points are equivalent for the summed form
\begin{align*}
\jointloss(\Dmat \Hmat) + \frac{\alpha}{2} \sum_{i=1}^\ldim \| \Dmat_{:i} \|_c^2 + \frac{\alpha}{2} \sum_{i=1}^\ldim \| \Hmat_{i:} \|_r^2
\end{align*}
and the producted form
\begin{align*}
\jointloss(\Dmat \Hmat) + \alpha \sum_{i=1}^\ldim \| \Dmat_{:i} \|_c \| \Hmat_{i:} \|_c
\end{align*}
The stationary points of the summed and producted forms 
are also stationary points of the constrained form.
\begin{proof}
 To simplify notation, let 
 \begin{align*}
 \hstack(\dvec_i) &= [\dvec_1, \ldots, \dvec_\ldim]\\
\vstack(\hvec_i) &= [\hvec_1; \ldots; \dvec_\ldim]
 \end{align*}
 where this notation will always implicitly be from 1 to $\ldim$. 
The stationary points 
$\Ds, \Hs$ of the summed objective satisfy
\begin{align*}
\nabla \jointloss(\Ds \Hs) \Hs^\top + \alpha \hstack( \| \Ds_{:i} \|_c \nabla \| \Ds_{:i} \|_c) = \zerovec\\
\Ds^\top \nabla \jointloss(\Ds \Hs) + \alpha \vstack(\| \Hs_{i:} \|_r \nabla \| \Hs_{i:} \|_r) = \zerovec
\end{align*}
the stationary points 
$\Dp, \Hp$ of the producted objective satisfy
\begin{align*}
\nabla \jointloss(\Dp \Hp) \Hp^\top + \alpha \hstack(  \| \Hp_{i:} \|_r \nabla \| \Dp_{:i} \|_c) = \zerovec\\
\Dp^\top \nabla \jointloss(\Dp \Hp) + \alpha \vstack( \| \Dp_{:i} \|_c \nabla \| \Hp_{i:} \|_r) = \zerovec
\end{align*}

 We can reweight $\Ds, \Hs$ to get a stationary point to the producted form, and vice-versa.
 Let $\Gammamat = \diag( \| \Ds_{:1} \|_c^{-1} \| \Hs_{1:} \|_r, \ldots,  \| \Ds_{:\ldim} \|_c^{-1} \| \Hs_{\ldim:} \|_r)$.
We require $\Gammamat$ to be invertible. This will always be true
unless a row or column are zero in $\Hmat$ and $\Dmat$; in this case, we can always
simply remove these.

 For any stationary point $\Ds, \Hs$, 
 let $\Dp = \Ds \Gammamat^{-1}, \Hp = \Gammamat \Hs$.
 Then $\Dp \Hp = \Ds \Hs$ and
 \begin{align*}
&\nabla \jointloss(\Dp \Hp) \Hp^\top + \alpha \hstack( \nabla \| \Dp_{:i} \|_c \| \Hp_{i:} \|_r)  \\
&=
\nabla \jointloss(\Ds\Hs) \Hs^\top \Gammamat + \alpha \hstack( \nabla \| \Ds_{:i} \|_c \Gammamat_i^{-1} \Gammamat_i \| \Hs_{i:} \|_r ) 
\end{align*}
giving
\begin{align*}
&\nabla \jointloss(\Ds\Hs) \Hs^\top \Gammamat \Gammamat^{-1} + \alpha \hstack( \nabla \| \Ds_{:i} \|_c \| \Hs_{i:} \|_r )  \Gammamat^{-1}  \\
&=
\nabla \jointloss(\Ds\Hs) \Hs^\top + \alpha \hstack(\nabla \| \Ds_{:i} \|_c \| \Hs_{i:} \|_r \Gammamat_i^{-1}) \\
&=
\nabla \jointloss(\Ds\Hs) \Hs^\top + \alpha \hstack(\nabla \| \Ds_{:i} \|_c \| \Ds_{:i} \|_c)  \\
&= \zerovec 
\end{align*}
because $\Ds$ is a stationary points of the summed form.
Therefore, because $\Gammamat$ is invertible, this means that
$\nabla \jointloss(\Dp \Hp) \Hp^\top + \alpha \hstack( \nabla \| \Dp_{:i} \|_c \| \Hp_{i:} \|_r) = 0$,
and so $\Dp$ is a stationary point of the producted form.
For $\Hp$,  
 \begin{align*}
&\Dp^\top \nabla \jointloss(\Dp \Hp) + \alpha \vstack( \| \Dp_{:i} \|_c \nabla \| \Hp_{i:} \|_r)  \\
&=
 \Gammamat^{-1} \Ds^\top \nabla \jointloss(\Ds\Hs)  + \alpha \vstack(  \| \Ds_{:i} \|_c \Gammamat_i^{-1} \Gammamat_i \nabla \| \Hs_{i:} \|_r ) 
\end{align*}
giving
\begin{align*}
&\Gammamat \Gammamat^{-1} \Ds^\top \nabla \jointloss(\Ds\Hs)  + \alpha \Gammamat \vstack( \| \Ds_{:i} \|_c  \nabla\| \Hs_{i:} \|_r )  \\
&=
\Ds^\top \nabla \jointloss(\Ds\Hs)  + \alpha \vstack( \Gammamat_i \| \Ds_{:i} \|_c \nabla\| \Hs_{i:} \|_r ) \\
&=
\Ds^\top \nabla \jointloss(\Ds\Hs) + \alpha \vstack(  \| \Hs_{i:} \|_r \nabla\| \Hs_{i:} \|_r)  \\
&= \zerovec 
\end{align*}
%
%
 
 \noindent
 In the other direction, 
  for any stationary point $\Dp, \Hp$, 
   let 
   $$\Gammamat = \diag( \| \Dp_{:1} \|_c^{-1} \| \Hp_{1:} \|_r, \ldots,  \| \Dp_{:\ldim} \|_c^{-1} \| \Hp_{\ldim:} \|_r)$$
and
 let $\Ds = \Dp \Gammamat^{1/3}, \Hs = \Gammamat^{-1/3} \Hp$.
 Then $\Ds \Hs = \Dp \Hp$ and
 \begin{align*}
&\nabla \jointloss(\Ds \Hs) \Hs^\top + \alpha \hstack( \| \Ds_{:i} \|_c \nabla \| \Ds_{:i} \|_c)\\  
&=
\nabla \jointloss(\Dp\Hp) \Hp^\top \Gammamat^{-1/3} + \alpha \hstack( \| \Dp_{:i} \|_c \Gammamat_i^{1/3} \Gammamat_i^{1/3} \nabla \| \Dp_{:i} \|_c) \\
&=
\nabla \jointloss(\Dp\Hp) \Hp^\top \Gammamat^{-1/3} + \alpha \hstack( \| \Dp_{:i} \|_c \nabla \| \Dp_{:i} \|_c)  \Gammamat^{2/3}
\end{align*}
giving
\begin{align*}
&\nabla \jointloss(\Dp\Hp) \Hp^\top \Gammamat^{-1/3} \Gammamat^{1/3} + \alpha \hstack( \| \Dp_{:i} \|_c \nabla \| \Dp_{:i} \|_c)  \Gammamat_i^{2/3} \Gammamat^{1/3}\\
&=
\nabla \jointloss(\Dp\Hp) \Hp^\top  + \alpha \hstack( \| \Dp_{:i} \|_c \nabla \| \Dp_{:i} \|_c)  \Gammamat\\
&=
\nabla \jointloss(\Dp\Hp) \Hp^\top  + \alpha \hstack( \| \Dp_{:i} \|_c \nabla \| \Dp_{:i} \|_c \| \Dp_{:i} \|_c^{-1} \| \Hp_{i:} \|_r)  \\
&=
\nabla \jointloss(\Dp\Hp) \Hp^\top  + \alpha \hstack( \nabla \| \Dp_{:i} \|_c \| \Hp_{i:} \|_r) \\
&= \zerovec 
\end{align*}
and so $\Ds$ is a stationary point of the summed form,
again assuming $\Gammamat$ is full rank.
This is similarly the case for $\Hs$. 

For the constrained form, to satisfy the KKT conditions, the Lagrange multipliers can be specified
to reweight the solution similarly. Because there is a separate Lagrange multiplier for each constraint $\| \Dmat_{:i} \|_c \le 1$,
there are enough degrees of freedom to easily reweight a stationary from the producted form to produce
a stationary point that satisfies the KKT conditions. 
 \end{proof}

\section{Proximal operator for the elastic net}\label{app_prox}

\begin{proposition} \label{prop_prox}
For a given $\uvec = [u_1, \ldots, u_\ldim] \in \RR^{\ldim}$, let the indices $m_1, \ldots, m_\ldim$ indicate a sorted descending order such that $\left| u_{m_1}\right| \geq \left|u_{m_2}\right|\geq ... \geq \left|u_{m_\ldim}\right|$. Let $C(\uvec, \sparsity) = \sum_{j=1}^\sparsity |u_{m_j}|$. 
Then the following $\zvec^*$ is an optimal solution of 
$\min\limits_{\zvec \in \RR^{\ldim}} \tfrac{1}{2}\|\uvec-\zvec\|_2^2+\lambda \|\zvec\|_1^2$
\begin{align*}
z^*_{i} = \sign{u_{i}}\max\left(\left|u_i\right|-\tfrac{2\lambda C(\uvec, \sparsity^*) }{1+2\lambda \sparsity^*},0\right)
\end{align*}
where
\begin{align*}
	 \sparsity^* = \left\{ \begin{array}{ll}
         0 \ \ \ \ \ \ & \mbox{if $u_{m_1} = 0$}\\
         \ldim & \mbox{if $|u_{m_\ldim}| > \tfrac{2\lambda C(\uvec, \ldim)}{1+2\lambda \ldim}$}\\        
          \sparsity & \mbox{such that $|u_{m_\sparsity}| > \tfrac{2\lambda C(\uvec, \sparsity)}{1+2\lambda \sparsity}$ and $|u_{m_{\sparsity+1}}| \le \tfrac{2\lambda C(\uvec, \sparsity)}{1+2\lambda \sparsity}$}.\end{array} \right. 
  \end{align*}
\end{proposition}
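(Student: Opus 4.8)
The plan is to verify that the claimed $\zvec^*$ is the unique KKT point of the problem. First I would note that the objective $\tfrac{1}{2}\|\uvec-\zvec\|_2^2 + \lambda\|\zvec\|_1^2$ is strictly convex: the quadratic term is strictly convex and $\|\zvec\|_1^2$ is convex, since the square of a non-negative convex function is convex (as in Proposition \ref{prop_convex}). Hence the minimizer exists, is unique, and any point satisfying the first-order optimality conditions is that minimizer. Next I would reduce to the non-negative orthant: for each coordinate, if $\sign{z_i} \neq \sign{u_i}$ and $z_i \neq 0$, replacing $z_i$ by $-z_i$ strictly decreases $\tfrac{1}{2}(u_i-z_i)^2$ while leaving $\|\zvec\|_1$ unchanged, so at the optimum $u_i z_i \geq 0$. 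Writing $a_i = |u_i|$ and $b_i = |z_i| \geq 0$, and using $(u_i-z_i)^2 = (a_i-b_i)^2$ under sign alignment, the problem becomes $\min_{\bvec \geq \zerovec}\ \tfrac{1}{2}\sum_i (a_i-b_i)^2 + \lambda(\sum_i b_i)^2$.

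With $S = \sum_i b_i$, the KKT conditions for this reduced problem read $b_i = a_i - 2\lambda S$ whenever $b_i > 0$ and $a_i \leq 2\lambda S$ whenever $b_i = 0$; equivalently $b_i = \max(a_i - 2\lambda S, 0)$ with $S$ self-consistent. The crux is to solve this self-consistency and match it to the stated $\sparsity^*$. Setting $\tau = 2\lambda S$ and $F(\tau) = \sum_i \max(a_i - \tau, 0)$, I would solve $\tau = 2\lambda F(\tau)$. Since $F$ is continuous, non-increasing and piecewise linear with $F(0) = \sum_i a_i$ and $F(\tau) = 0$ for $\tau \geq \max_i a_i$, the map $h(\tau) = \tau - 2\lambda F(\tau)$ is strictly increasing with $h(0) \leq 0 \leq h(\max_i a_i)$, so it has a unique root $\tau^*$. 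Because $b_i = \max(a_i - \tau^*, 0)$, the active set $\{i : b_i > 0\} = \{i : a_i > \tau^*\}$ consists of the indices with the largest absolute values, i.e. the top $\sparsity^*$ in the sorted order $m_1, \ldots, m_\ldim$.

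It remains to identify $\tau^*$ with the stated threshold. Substituting $b_{m_j} = a_{m_j} - \tau^*$ into $S = \sum_{j=1}^{\sparsity^*} b_{m_j}$ and solving the resulting linear relation for $\tau^* = 2\lambda S$ gives $\tau^* = \tfrac{2\lambda C(\uvec,\sparsity^*)}{1 + 2\lambda \sparsity^*}$, which is exactly the threshold appearing in the proposition. The defining inequalities $|u_{m_{\sparsity^*}}| > \tau^* \geq |u_{m_{\sparsity^*+1}}|$ are then precisely the activity/inactivity conditions $a_{m_{\sparsity^*}} > \tau^*$ and $a_{m_{\sparsity^*+1}} \leq \tau^*$, with the degenerate cases $u_{m_1} = 0$ and $\sparsity^* = \ldim$ covering $\tau^* = 0$ and a fully active set. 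Reassembling $z_i^* = \sign{u_i}\,b_i = \sign{u_i}\max(|u_i| - \tau^*, 0)$ yields the claimed form, and by uniqueness it is the global minimizer.

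I expect the main obstacle to be the discrete bookkeeping in the last step: confirming that the two threshold inequalities defining $\sparsity^*$ genuinely pin down a unique integer and are mutually consistent with the self-consistency equation. The fixed-point/monotonicity argument for $\tau^*$ is what makes this clean—once the continuous threshold is shown unique, $\sparsity^*$ is forced to be the number of $a_i$ strictly exceeding it, and ties at the boundary (several $a_i$ equal to $\tau^*$) do not affect $\zvec^*$, since those coordinates are thresholded to zero in either case.
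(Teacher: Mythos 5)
Your proof is correct, but it takes a genuinely different route from the paper's. The paper verifies the candidate: it uses the identity $\partial \|\zvec\|_1^2 = 2\|\zvec\|_1\,\partial\|\zvec\|_1$ to reduce optimality to the inclusion $\tfrac{\uvec-\zvec^*}{2\lambda\|\zvec^*\|_1}\in\partial\|\zvec^*\|_1$, invokes the known soft-thresholding form of the $\ell_1$ proximal operator with effective threshold $\lambda^* = 2\lambda\|\zvec^*\|_1$, and then checks in three cases that $\|\zvec^*\|_1 = C(\uvec,\sparsity^*)/(1+2\lambda\sparsity^*)$ so that $\lambda^*$ matches the stated threshold. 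You instead derive the solution from scratch: sign alignment to reduce to the non-negative orthant, KKT conditions on the smooth constrained problem $\min_{\bvec\ge\zerovec}\tfrac12\sum_i(a_i-b_i)^2+\lambda(\sum_i b_i)^2$, and a monotone fixed-point argument for the threshold $\tau = 2\lambda F(\tau)$. Your approach buys something the paper leaves implicit: it proves that the third case in the definition of $\sparsity^*$ is actually well-posed (such an $\sparsity$ exists and the resulting $\zvec^*$ is unambiguous, including under ties at the boundary), and it establishes uniqueness of the minimizer without relying on the pre-existing $\ell_1$ prox formula or the subdifferential chain rule for a squared norm. The paper's verification is shorter once those two facts are granted, but it never confirms that the case analysis defining $\sparsity^*$ is exhaustive and consistent; your fixed-point argument supplies exactly that missing bookkeeping.
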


\begin{proof}
Let $\partial f(\cdot)$ denote the subdifferential of a function $f(\cdot)$. If $\dfrac{\uvec-\zvec^*}{\lambda} \in \partial \|\zvec^*\|_1^2$, then $\zvec^*$ is the optimum solution of $\min\limits_{\zvec \in \RR^{\ldim}} \tfrac{1}{2}\|\uvec-\zvec\|_2^2+\lambda \|\zvec\|_1^2$. Because $\partial \|\zvec\|_1^2 = 2\|\zvec\|_1 \partial \|\zvec\|_1$, we need to show that 
\begin{align*}
\dfrac{\uvec-\zvec^*}{2 \lambda \|\zvec^*\|_1} \in \partial \|\zvec^*\|_1
.
\end{align*}
Let $\lambda^* = 2 \lambda \|\zvec^*\|_1$. Using the known form for the $\ell_1$ proximal operator,
if $z^*_i = \sign{u_i}\max(\left|u_i\right|-\lambda^*,0)$, then $\dfrac{\uvec-\zvec^*}{\lambda^*} \in \partial \|\zvec^*\|_1$. 
The following cases verify that $\zvec^*$ satisfies $z^*_i = \sign{u_i}\max(\left|u_i\right|-\lambda^*,0)$.

\noindent
\textbf{Case 1.} If $\sparsity^* = 0$, then $\uvec = 0$ and so $\zvec^*=0$.

\noindent
\textbf{Case 2.}
If $\sparsity^* = k$, then for all $i$, $| u_i| > \tfrac{2\lambda C(\uvec, \ldim)}{1+2\lambda\ldim}$ and so 
$$z^*_{i} = \sign{u_{i}}\left(\left|u_i\right|-\tfrac{2\lambda C(\uvec, \ldim)}{1+2\lambda\ldim}\right) = \sign{u_{i}}\left(\left|u_i\right|-\tfrac{2\lambda\|\uvec\|_1}{1+2\lambda\ldim}\right).$$ 
Hence, 
\begin{align*}
\|\zvec^*\|_1 &= \sum_i \left| \left|u_i\right|-\tfrac{2\lambda\|\uvec\|_1}{1+2\lambda\ldim} \right|\\
&=  \sum_i \left(\left|u_i\right|-\tfrac{2\lambda\|\uvec\|_1}{1+2\lambda\ldim} \right) && \triangleright | u_i| > \tfrac{2\lambda C(\uvec, \ldim)}{1+2\lambda\ldim} \\
&=\|\uvec\|_1-\tfrac{2\lambda\ldim\|\uvec\|_1}{1+2\lambda \ldim} \\
&= \tfrac{\|\uvec\|_1}{1+2\lambda \ldim}
.
\end{align*}
Therefore, $\lambda^* = 2 \lambda \|\zvec^*\|_1 = \tfrac{2\lambda\|\uvec\|_1}{1+2\lambda\ldim}$, and so $z^*_i = \sign{u_i}\max(\left|u_i\right|-\lambda^*,0)$.

\noindent
\textbf{Case 3.}
If $0<\sparsity^*<\ldim$, then for $i>\sparsity^*$, $z^*_{m_i} = 0$ and for $i \leq \sparsity^*$, $z^*_{m_i} = \sign{u_{m_i}}\left(\left|u_{m_i}\right|-\tfrac{2\lambda C(\uvec, \sparsity^*)}{1+2\lambda\sparsity^*}\right)$. 
Similarly to above, $\|\zvec^*\|_1 =C(\uvec, \sparsity^*) -\tfrac{2\lambda\sparsity^*C(\uvec, \sparsity^*)}{1+2\lambda \sparsity^*} = \tfrac{C(\uvec, \sparsity^*)}{1+2\lambda \sparsity^*}$. and so $\lambda^* = \tfrac{2\lambda C(\uvec, \sparsity^*)}{1+2\lambda\sparsity^*}$. 
\end{proof}

\end{document}